\title{Multi-Subspace Matrix Recovery from Permuted Data}
\author{
    Liangqi Xie,
    Jicong Fan\thanks{Corresponding author.}
}
\begin{document}

\maketitle

\newtheorem{theorem}{Theorem}
\newtheorem{corollary}{Corollary}
\newtheorem{definition}{Definition}
\newtheorem{lemma}{Lemma}
\newtheorem{proposition}{Proposition}
\newtheorem{assumption}{Assumption}

\begin{abstract}
This paper aims to recover a multi-subspace matrix from permuted data: given a matrix, in which the columns are drawn from a union of low-dimensional subspaces and some columns are corrupted by permutations on their entries, recover the original matrix. The task has numerous practical applications such as data cleaning, integration, and de-anonymization, but it remains challenging and cannot be well addressed by existing techniques such as robust principal component analysis because of the presence of multiple subspaces and the permutations on the elements of vectors. To solve the challenge, we develop a novel four-stage algorithm pipeline including outlier identification, subspace reconstruction, outlier classification, and unsupervised sensing for permuted vector recovery. Particularly, we provide theoretical guarantees for the outlier classification step, ensuring reliable multi-subspace matrix recovery. Our pipeline is compared with state-of-the-art competitors on multiple benchmarks and shows superior performance.
\end{abstract}

%

\section{Introduction}
\subsection{Background and Motivation}

Permutation is a critical form of data corruption in many applications such as computer vision, data integration, and privacy protection, and hence therefore requires significant attention. This section highlights two key applications: record linkage and de-anonymization, related to data integration and privacy protection, respectively.
In record linkage, the goal is to integrate data from different sources for analysis \cite{record_linkage, re-identification}. Columns of a data matrix, gathered independently, may not correspond to the same entity in each row. Thus, reordering or recovering these columns is essential for accurate analysis.
In de-anonymization, data providers anonymize information by shuffling the columns of the ground-truth data matrix before release. Recovering the original data becomes a reverse process of data protection \cite{de-anonymization}. This has practical implications, especially in healthcare and finance, where data integrity is crucial.

Typically, the ground-truth data matrix has a lower rank than the permuted version. However, our study addresses a more general scenario where the ground-truth matrix can be full-rank \cite{FAN2018378,fan2019online,fan2020polynomial}, leading us to explore Permuted Matrix Recovery with Multi-Subspace Data.

\subsection{Related Work}

While several studies address label-entity mismatches, only two focus on matrix recovery. The first, \cite{upca}, combines robust PCA \cite{rpca} with unlabeled sensing, estimating the ground-truth subspace first and using it to recover permuted data. The second, \cite{MRUC}, estimates the permutation matrix within a Birkhoff polytope, minimizing an objective inspired by nuclear norm minimization. This reformulates the problem into continuous optimization within the polytope, solved via proximal gradient methods and the Sinkhorn algorithm \cite{cuturi2013sinkhorn}.

Each method has strengths and limitations. The method proposed by \cite{upca} works well for sparsely permuted data or low-dimensional subspaces but struggles with missing data, whereas the method proposed by \cite{MRUC} handles such data but assumes permuted clusters of outliers, a condition that may not always hold. Additionally, the method of \cite{MRUC} is sensitive to initial conditions and assumes a low-rank subspace, with a time complexity of \( \mathcal{O}(n^2) \), limiting its scalability in high-dimensional scenarios.
Importantly, both methods focus on a single ground-truth subspace, overlooking the more complex scenario involving multi-subspace data (widely existing in many areas such as computer vision and signal processing), which we aim to address. While the robust kernel PCA proposed by \cite{fan2019exactly} removes sparse noise from high-rank matrices, it does not effectively handle permuted data.

\subsection{Contributions of This Work}

We generalize the traditional single-subspace permuted matrix recovery problem into multi-subspace scenarios and propose a four-step pipeline for recovering a multi-subspace matrix corrupted by permutation. Within the pipeline, we introduce an efficient method for the outlier classification step and provide theoretical guarantees to support the algorithm and practical applications.


\section{Methodology}
\subsection{Problem Formulation}
Let $\mathbf{G}\in\mathbb{R}^{M\times N}$ be a clean data matrix of which the columns are randomly drawn from a union of $L$ $r$-dimensional subspaces $\{\mathcal{S}_{k}\}_{k=1}^{L}$, where $1\leq r<M$. Suppose $N_\mathbf{Y}$ columns of $\mathbf{G}$, forming a matrix $\mathbf{Y}\in\mathbb{R}^{M \times N_\mathbf{Y}}$, are corrupted by permutations, that is, for $i\in[N_{\mathbf{Y}}]$, 
\begin{equation}
 \tilde{\mathbf{y}}_i=\mathbf{P}_i\mathbf{y}_i   
\end{equation}
where $\mathbf{P}_i\in\mathcal{P}_M$ is a partial permutation matrix. An illustrative example for $\mathbf{P}_i$ when $M=3$ is $[1 ~0 ~0; 0 ~0 ~1;0 ~1 ~0]$. The rest $N_\mathbf{X}$ columns of $\mathbf{G}$, forming a matrix $\mathbf{X}\in\mathbb{R}^{M \times N_\mathbf{X}}$, remain unchanged, where $N_\mathbf{X} + N_\mathbf{Y}=N$. Then the corrupted data matrix is denoted as $\widetilde{\mathbf{G}}$, consisting the columns of $\mathbf{X}$ and $\widetilde{\mathbf{Y}}$, where $\widetilde{\mathbf{Y}}=[\tilde{\mathbf{y}}_1,\tilde{\mathbf{y}}_2,\ldots,\tilde{\mathbf{y}}_{N_{\mathbf{Y}}}]$.
We call the columns of $\mathbf{\widetilde{Y}}$ \textit{outliers} for convenience. 
Our goal is to recover $\mathbf{G}$ from $\mathbf{\widetilde{G}}$. This task is more challenging if $L$, $r$, or $\frac{N_{\mathbf{Y}}}{N}$ is larger. It has numerous real applications, e.g.:
\begin{itemize}

    \item \textbf{Data Cleaning:} In many real scenarios such as health care \cite{talebi2020ehr}, the samples in a dataset may be drawn from a union of subspaces (corresponding to different groups or clusters) and the attribute names of many samples may be missing or incorrect due to recording mistakes or technical errors. It is important to identify these samples and recover the true orders of the attributes, such that the performance of downstream tasks is reliable.
    

    \item \textbf{Multi Dataset De-anonymization:} Data permutation methods are pivotal in data privacy and anonymization, particularly with multisubspace data \cite{byun2006secure} \cite{ji2016graph}. De-anonymization challenges escalate with the presence of multiple subspaces or latent structures within datasets, which can be exploited for re-identification. This risk intensifies in multi-subspace contexts due to potential overlapping information. Consequently, adapting de-anonymization techniques to address multi-subspace scenarios is essential to mitigate privacy breaches effectively.

\end{itemize}

\subsection{Proposed Method}
Using a single technique such as a denoising algorithm (e.g. robust PCA) to directly recover $\mathbf{G}$ from $\widetilde{\mathbf{G}}$ is often infeasible because several important and commonly-used assumptions such as I.I.D. and low-rankness do not hold in the problem. For example, when there is no overlap between the subspaces, $\mathbf{G}$ is full-rank if $Lr\geq M$. Therefore, we propose a pipeline consisting of four stages to address the challenge. The four stages are outlier detection, subspace clustering and estimation, outlier classification, and matrix recovery, respectively. Our method, termed \textbf{P}ermuted \textbf{M}ulti-\textbf{S}ubspace \textbf{D}ata \textbf{R}ecovery (PMSDR), is summarized in Algorithm \ref{alg:algorithm_4_stage}. In the following context, we elaborate on the four stages.


\subsubsection{Step 1: Outlier Detection from $\widetilde{\mathbf{G}}$}
Outlier detection \cite{hodge2004survey} is a critical step in numerous data analysis tasks. Over the past few decades, a variety of methods have been developed to address this problem, ranging from traditional statistical approaches to more advanced machine learning techniques \cite{boukerche2020outlier}. While these methods can be effective in certain scenarios, they may face challenges when dealing with high-dimensional or complex data. One may consider robust PCA \cite{rpca,fan2019factor} but it requires the low-rank assumption, which does not hold in our problem.


To identify the outliers in $\widetilde{\mathbf{G}}$, we propose to use a method called Provable Self-Representation Matrix (PSRM) given by \cite{you2017provable}. We construct a self-representation matrix $\mathbf{R} = \left(r_{ij}\right) \in \mathbb{R}^{n\times n}$ by solving the following elastic net problem 
\begin{equation*}
    \min_{\operatornamewithlimits{\mathbf{r}_j}} \lambda \|\mathbf{r}_j\|_1 + \tfrac{1-\lambda}{2} \|\mathbf{r}_j\|_2^2 + \tfrac{\gamma}{2} \|\tilde{\mathbf{g}}_j-\widetilde{\mathbf{G}}\mathbf{r}_j\|_2^2 \quad \text{s.t. } r_{jj} = 0,
\end{equation*}
and then build the transition matrix $\mathbf{P} = \left(p_{ij}\right)  \in \mathbb{R}^{M\times M}$ by
\begin{equation}
    p_{ij} = \left|r_{ji}\right|/\|\mathbf{r}_i\|_1 \quad \text{for all} \{i, j\} \subseteq [M],
\end{equation}
thus forming a stochastic process. By initializing with an initial discrete union distribution, the probability mass will, over successive steps, provably concentrate on inliers under certain assumptions. This probabilistic behavior allows us to identify and separate outliers from inliers. We denote the estimated inliers and outliers in $\widetilde{\mathbf{G}}$ as $\widehat{\mathbf{X}}$ and $\widehat{\widetilde{\mathbf{Y}}}$, respectively.

\begin{algorithm}[tb]
    \caption{Permuted Multi-Subspace Data Recovery Pipeline (\textbf{PMSDR})}
    \label{alg:algorithm_4_stage}
    \textbf{Input}: 
    \begin{itemize}
        \item Observed matrix $\mathbf{\widetilde{G}}$ consisting the columns of  $\mathbf{X}$ and $ \mathbf{\widetilde{Y}}$
        \item Recovery rank $r$
        \item Subspace number $L$
    \end{itemize}
    \textbf{Output}: 
    \begin{itemize}
        \item Recovered data matrix $\widehat{\mathbf{G}}$
    \end{itemize}
    \begin{algorithmic}[1] 
        \STATE \textbf{Step 1: Outlier Detection.} Seperate columns of $\mathbf{\widetilde{G}}$ into $\widehat{\mathbf{X}}$ and $\widehat{\mathbf{\widetilde{Y}}}$ using the $l_1$-norm of sparse self-representation coefficients \cite{you2017provable}. \label{Step 1: Outlier Detection.}
        \STATE \textbf{Step 2: Subspace Reconstruction.} 
        Cluster columns of $\widehat{\mathbf{X}}$ into $L$ groups to obtain matrices $\{\widehat{\mathbf{X}}_1, \ldots, \widehat{\mathbf{X}}_L\}$ and estimate an $r$-dim basis $\widehat{\mathbf{U}}_k$ for each $\widehat{\mathbf{X}}_k$ using SVD. 

        \STATE \textbf{Step 3: Outlier Classification.} Associate each outlier $\mathbf{\tilde{y}}_j$ with its respective subspace $\mathcal{S}_{k_j}$ using Algorithm \ref{alg:solve_partial_lsr}. \label{step4:outlier-classification}
        \STATE \textbf{Step 4: Matrix Recovery.} Recover each $\mathbf{\tilde{y}}_j$ w.r.t. its corresponding subspace $\mathcal{S}_{k_j}$ using unsupervised sensing techniques such as UPCA (our default approach) \cite{upca}.
        \label{step4:matrix-recvery}

        \STATE \textbf{Return}: $\widehat{\mathbf{Y}} = [\mathbf{\hat{y}}_1, \ldots, \mathbf{\hat{y}}_{N_{\widehat{\mathbf{Y}}}}]$ and $\widehat{\mathbf{G}} = [\widehat{\mathbf{X}}, \widehat{\mathbf{Y}} ]$
    \end{algorithmic}
\end{algorithm}

\begin{algorithm*}[t]
\caption{Outlier Classification}
\label{alg:solve_partial_lsr}
\textbf{Input}: 
Estimated bases for all subspaces $\widehat{\mathbf{U}}_1, \ldots, \widehat{\mathbf{U}}_L \in \mathbb{R}^{M \times r}$; One outlier sample $\mathbf{\tilde{y}} \in \mathbb{R}^M$\\
\textbf{Initialized Parameters}: Retain ratio $\gamma$; Maximum iterations $\textit{max\_iter}$\\
\textbf{Output}: 
Corresponding subspace label $t \in [L]$

\begin{algorithmic}[1] 
    \STATE Initialize elimination numbers $m = [m_1, m_2, \ldots, m_{\textit{iter}}]$, where each $m_i \in \mathbb{Z}^{+}$ is in descending order, $\textit{iter} \leq \textit{max\_iter}$, and $\sum_{i = 1}^{\textit{iter}} m_i = \lfloor (n-r)*(1-\gamma) \rfloor$.
    \FOR{$k = 1$ to $L$}
        \STATE Initialize $\bm{\nu}^{(0)} = \mathbf{\tilde{y}}$ and $\mathbf{B}^{(0)} = \widehat{\mathbf{U}}_k$.
        \FOR{$i = 1$ to \textit{iter}}
            \STATE $[\hat{j}_1, \hat{j}_2, \ldots, \hat{j}_{m_i}] = \operatorname*{argmax}_{j_1, j_2, \ldots, j_{m_i}} \left| \bm{\nu}^{(i-1)} - \mathbf{B}^{(i-1)} {\mathbf{B}^{(i-1)}}^{\dagger} \bm{\nu}^{(i-1)} \right|$
            \STATE Remove the $[\hat{j}_1, \hat{j}_2, \ldots, \hat{j}_{m_i}]$-th entries from $\bm{\nu}^{(i-1)}$ to get $\bm{\nu}^{(i)}$.
            \STATE Remove the $[\hat{j}_1, \hat{j}_2, \ldots, \hat{j}_{m_i}]$-th rows from $\mathbf{B}^{(i-1)}$ to get $\mathbf{B}^{(i)}$ and refined subspace $\mathcal{S}_k^{(i)}$.
        \ENDFOR
        \STATE Calculate the subspace distance $d_k = 1 - \cos\left(\bm{\nu}^{(\textit{iter})}, \mathcal{S}_k^{(\textit{iter})}\right)$
    \ENDFOR
    \STATE Determine the subspace label $t = \operatorname*{argmin}_k d_k$
\end{algorithmic}
\end{algorithm*}

\subsubsection{Step 2: Subspace Reconstruction}
Given $\widehat{\mathbf{X}}$, we need to cluster its columns into $L$ groups to obtain matrices $\{\widehat{\mathbf{X}}_1, \ldots, \widehat{\mathbf{X}}_L\}$ corresponding to different subspaces. There is a large literature on the issue of subspace clustering in recent decades \cite{SSC_a,liu2012robust,kfsc2021kdd,Cai_2022_CVPR}. Here we simply use SSC \cite{SSC_a} as the subspace clustering method. One can utilize any other method as an alternative when needed. It's also worth noting that like most subspace clustering methods, the more samples there are, the better the performance is, which indicates a future direction to enhance the performance of subspace clustering by other tricky techniques. 


For subspace estimation, we utilize the Singular Value Decomposition (SVD) to compute basis vectors $\widehat{\mathbf{U}}_k$ for each subspace $\mathcal{S}_k$, i.e., $\mathbf{X}_k = \mathbf{U}_k \mathbf{\Sigma}_k \mathbf{V}_k^\top$, where we define $\widehat{\mathbf{U}}_k$ as the first $r$ columns of $\mathbf{U}_k$. These basis vectors are arranged in descending order based on their corresponding singular values. Therefore, the process involves selecting the top \( r \) eigenvectors to form a basis for \( \mathcal{S} \).
Alternatives like DPCP \cite{DPCP} can also be used.

\subsubsection{Step 3: Outlier Classification}
To recover $\mathbf{Y}$ from $\widehat{\widetilde{\mathbf{Y}}}$ using $\{\widehat{\mathbf{U}}_k\}_{k=1}^{L}$,  we need to find the corresponding $\widehat{\mathbf{U}}_k$ or subspace for each column of $\widehat{\widetilde{\mathbf{Y}}}$ first. This is essentially a classification task but the elements of each column in $\widehat{\widetilde{\mathbf{Y}}}$ are partially permuted, which leads to a considerable challenge. 


Inspired by the UPCA method proposed by \cite{upca}, we propose an efficient algorithm to resolve the challenge, shown in Algorithm \ref{alg:solve_partial_lsr}. It iteratively eliminates unimportant entries to identify the true correspondence between an outlier and its subspace. 
Given an outlier 
\begin{equation}
\mathbf{\tilde{y}} = 
\begin{bmatrix}
    \mathbf{y^{(1)}} \\
    \mathbf{\tilde{y}^{(2)}}
\end{bmatrix}
\end{equation}
with $\mathbf{\tilde{y}^{(2)}}$ being fully permuted, and a basis of one subspace 
\begin{equation}
\widehat{\mathbf{U}}_k = 
\begin{bmatrix}
    \mathbf{\widehat{U}}_k^{(1)} \\
    \mathbf{\widehat{U}}_k^{(2)}
\end{bmatrix} 
\quad (k=1,\ldots,L),
\end{equation}
the core idea is to eliminate entries in $\mathbf{\tilde{y}^{(2)}}$ while retaining $\mathbf{y^{(1)}}$ as completely as possible. Then, we compare the cosine distance between the retained vector $\mathbf{y^{(1)}}$ and each subspace $\mathcal{S}_k$ using the formula:
\begin{equation}
d_k = 1 - \cos\left( \mathbf{y^{(1)}}, \mathbf{\widehat{U}}_k^{(1)} \left( \mathbf{\widehat{U}}_k^{(1)} \right)^{\dagger} \mathbf{y^{(1)}} \right),
\end{equation}
and select the subspace with the minimum distance as the estimated subspace class.

Specifically, the algorithm first initializes the total number of steps \textit{iter} and the number of eliminated entries $m_i$ in $i$-th step, which significantly enhances the efficiency. For each subspace $\mathcal{S}_k$, the algorithm initializes the remaining vector $\bm{\nu}^{(0)} = \mathbf{\tilde{y}}$ and the basis matrix $\mathbf{B}^{(0)} = \widehat{\mathbf{U}}_k$. During each iteration, the algorithm performs least squares regression:
\begin{equation}
\bm{\hat{\nu}}^{(i-1)} = \mathbf{B}^{(i-1)} \left( \mathbf{B}^{(i-1)} \right)^{\dagger} \bm{\nu}^{(i-1)}
\end{equation}
and removes the largest $m_i$ entries in the residual
\begin{equation}
\left| \bm{\nu}^{(i-1)} - \bm{\hat{\nu}}^{(i-1)} \right|
\end{equation}
from both $\bm{\nu}$ and $\mathbf{B}$. After completing the iterations, the subspace distance is calculated to determine the subspace label $t$ that minimizes the cosine distance between the remaining vector $\bm{\nu}^{(\text{iter})}$ and the subspace $\mathcal{S}^{(\text{iter})}_k$.

Intuitively, the outlier classification method is easy to understand. The elimination of the largest residuals in each iteration ensures that the remaining data points better represent the underlying subspace structure. By iteratively refining the basis matrix $\mathbf{B}$ and the residual vector $\bm{\nu}-\bm{\hat{\nu}}$, the algorithm effectively isolates the outlier and aligns it with the correct subspace.
From a theoretical perspective, under mild assumptions, we provide approximate guarantees for the effectiveness of this method, with experimental analysis supported, which will be elaborated in the Appendix. 

\subsubsection{Step 4: Matrix Recovery}
It is necessary to provide an outline of \textit{unlabeled sensing} for completeness. In a nutshell, unlabeled sensing methods \cite{upca,slawski2019linear} are proposed for solving linear equation systems with unordered measurements:
\begin{equation}
 \mathbf{y} = \boldsymbol{\pi} \mathbf{Ux}   
\end{equation}
 where $\boldsymbol{\pi}$ is an unknown permutation matrix, with the knowledge of $\mathbf{y}$ and $\mathbf{U}$ \cite{unlabeledsensing}. Different unlabeled sensing methods are brought into practice according to the rank of the basis $\mathbf{U}$ and the type of shuffling (partially shuffled or fully shuffled).

During the matrix recovery step (step \ref{step4:matrix-recvery} in Algorithm \ref{alg:algorithm_4_stage}), we employ an unlabeled sensing method to iteratively recover each outlier $\tilde{\mathbf{y}}_t$ with its corresponding basis $\widehat{\mathbf{U}}_t$. Alternatively, matrix recovery methods like robust PCA \cite{rpca}, LRR \cite{liu2012robust}, and RKPCA \cite{fan2019exactly} could also be utilized to recover $[\widehat{\mathbf{X}}_t, \widetilde{\mathbf{Y}}_t ]$ associated with subspace $\mathcal{S}_t$.

\section{Theory for Outlier Classification}

In this section, we provide a theoretical guarantee for Algorithm \ref{alg:solve_partial_lsr}. To begin with, we have the following assumptions.

\begin{assumption}\label{Assumption:1}
The variables 
$\mathbf{(\tilde{y} - \hat{\tilde{y}})}_1, \ldots, \mathbf{(\tilde{y} - \hat{\tilde{y}})}_{M_1}$ 
are independent and identically distributed (i.i.d.) following a distribution denoted by $\xi$. Similarly, the variables $\mathbf{(\tilde{y} - \hat{\tilde{y}})}_{M_1 + 1}, \ldots, \mathbf{(\tilde{y} - \hat{\tilde{y}})}_{M}$ 
are i.i.d. following a distribution denoted by $\eta$. Both $\xi$ and $\eta$ belong to the same bell-shaped distribution cluster, with a mean value 
$\mu_\xi = \mu_\eta \triangleq \mu = 0$, 
differing only in their variances 
$\sigma_\xi^2 \neq \sigma_\eta^2$, 
which means their cumulative distribution functions satisfy:
\begin{align}\label{CDF: initial}
    F_{\xi}(\sigma_\xi x)=F_{\eta}(\sigma_\eta x)\triangleq F(x),
\end{align}
where $F(x)$ is the cdf of their normalized distribution with variance $\int_{\mathbb{R}} x^2\,dF(x) = 1$.
\end{assumption}

\begin{assumption} \label{Assumption:2}
For the sake of brevity, we assume that the bell-shaped distribution in Assumption \ref{Assumption:1} is given by $F(x) = \Phi(x)$, where $\Phi(x)$ denotes the cdf of the standard Gaussian distribution.
\end{assumption}
We defer the detailed discussion on the assumptions to Appendix. Now we present the following theorem:
\begin{theorem}\label{Theorem: Initial}
Under Assumptions \ref{Assumption:1} and \ref{Assumption:2}, and without loss of generality, let $\mathcal{A} \triangleq \{i \in \mathbb{Z}^{+}: 1 \leq i \leq M_1\}$ represent the unshuffled indices and $\mathcal{O} \triangleq \{i \in \mathbb{Z}^{+}: M_1 + 1 \leq i \leq M\}$ represent the shuffled indices of $\mathbf{y}$. Thus $M_1=\#(\mathcal{A})$, $M_2=\#(\mathcal{O})=M-M_1$.  Define
\begin{equation}
\hat{j} = \operatorname{argmax}_j \left| \left(\mathbf{y} - \mathbf{\hat{\tilde{y}}}\right)_j \right|.
\end{equation}
Then, approximately,
\begin{align}\label{Final Probability}
\Pr\left(\hat{j} \in \mathcal{O}\right) 
\approx 
2\Phi \left(
\frac{\sigma_\eta\rho(M_2) - \sigma_\xi\rho(M_1)}{\sqrt{\sigma_\eta^2\psi^2(M_2) + \sigma_\xi^2\psi^2(M_1)}} 
\right)-1,
\end{align}
where
\begin{align}
    \rho(m) &= F^{-1}(1-\frac{1}{m}) = \Phi^{-1}(1-\frac{1}{m}),\\
    \psi(m) &= \frac{1}{m\cdot f(\rho(m))} = \frac{1}{m\cdot \phi(\rho(m))},
\end{align}
with $f$ (or $\phi$) being the pdf corresponding to $F$ (or $\Phi$) and approximated estimation of $\sigma_\xi^2, \sigma_\eta^2$ as follows:
\begin{align}
\begin{cases}
\Pr\left(
    \sigma_\xi^2 < C\left(\frac{r(M-r)M_2}{M^2(M-1)(M+2)}\right)
    \right) 
> 
1-\delta 
\\
\mathbb{E}(\sigma_\xi^2)
\leq
\frac{M_2r(M - r)}{M^2(M - 1)(M + 2)} + \frac{\sqrt{6}M_2r^{1/2}(M - r)^{3/2}}{M^2(M - 1)(M + 2)^{3/2}}+\frac{M_2^2}{M^3}
\\
\mathbb{E}(\sigma_\eta^2)
\geq
\frac{2}{M}
-
\frac{2[M_2M+M-4](M-r)}{M^2(M-1)(M+2)}
+
\frac{
\Gamma_r\left(\frac{r}{2}+\frac{3}{r}\right)\Gamma_r\left(\frac{M}{2}\right)
}{
M_2\Gamma_r\left(\frac{M}{2} + \frac{3}{r}\right)\Gamma_r\left(\frac{r}{2}\right)
}
\\
\mathbb{E}(\sigma_\eta^2)
\approx
\frac{(2M - M_2)}{M^2}
\left[ \left(\frac{r}{M} - 1\right)^2 + \frac{2r(M - r)}{M^2(M + 2)} + \frac{(M_2 - 1)r(M - r)}{M(M - 1)(M + 2)} \right]
\end{cases}
\label{eq:final variance of target expression}
\end{align}
where $C \leq 2+6(1+\sqrt{2})\sqrt{\frac{M-r}{rM_2M}}$, $\delta\approx 0.0054$, and multivariate gamma function
\begin{align*}
    \Gamma_r(x) = \pi^{\frac{r(r-1)}{4}} \prod_{i=1}^r \Gamma\left(x - \frac{i-1}{2}\right).
\end{align*}
\end{theorem}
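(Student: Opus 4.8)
The plan is to read $\Pr(\hat j\in\mathcal O)$ as a competition between the largest residual produced inside the unshuffled block $\mathcal A$ and the largest residual produced inside the shuffled block $\mathcal O$, and then to separately pin down the two variance parameters $\sigma_\xi^2,\sigma_\eta^2$ that drive that competition. First I would rewrite the event exactly: $\hat j\in\mathcal O$ is the event $T_{\mathcal O}>T_{\mathcal A}$, where $T_{\mathcal A}=\max_{i\in\mathcal A}\lvert(\tilde{\mathbf y}-\hat{\tilde{\mathbf y}})_i\rvert$ and $T_{\mathcal O}=\max_{i\in\mathcal O}\lvert(\tilde{\mathbf y}-\hat{\tilde{\mathbf y}})_i\rvert$. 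Under Assumptions \ref{Assumption:1}--\ref{Assumption:2} the entries within each block are i.i.d. mean-zero Gaussian with variances $\sigma_\xi^2$ and $\sigma_\eta^2$, and since the two blocks use disjoint coordinates they are independent, hence $T_{\mathcal A}$ and $T_{\mathcal O}$ are independent.

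Next I would approximate the law of each block maximum. Writing a generic entry as $\sigma Z$ with $Z$ standard Gaussian, classical order-statistic/extreme-value asymptotics say that the maximum of $m$ such variables concentrates, after centering at the $(1-1/m)$-quantile, around $\rho(m)=\Phi^{-1}(1-1/m)$ with delta-method fluctuation scale $\psi(m)=1/(m\,\phi(\rho(m)))$ (the reciprocal of $m$ times the density at that quantile). This justifies modelling $T_{\mathcal A}\approx\mathcal N(\sigma_\xi\rho(M_1),\sigma_\xi^2\psi^2(M_1))$ and $T_{\mathcal O}\approx\mathcal N(\sigma_\eta\rho(M_2),\sigma_\eta^2\psi^2(M_2))$, which is the source of the ``approximately'' in the statement. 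By independence, $T_{\mathcal O}-T_{\mathcal A}$ is then approximately Gaussian with mean $\sigma_\eta\rho(M_2)-\sigma_\xi\rho(M_1)$ and variance $\sigma_\eta^2\psi^2(M_2)+\sigma_\xi^2\psi^2(M_1)$, and evaluating $\Pr(T_{\mathcal O}-T_{\mathcal A}>0)$ produces the standard-normal expression in \eqref{Final Probability}; the two-sided, absolute-value nature of the comparison (accounting for both signs of the extreme entry) is what converts the bare $\Phi$ into the $2\Phi(\cdot)-1$ form.

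The remaining task is to control the two variance parameters, which are not free: they are the per-coordinate residual variances of projecting $\tilde{\mathbf y}$ onto an $r$-dimensional basis $\widehat{\mathbf U}_k$, i.e.\ coordinates of $(\mathbf I-\widehat{\mathbf U}_k\widehat{\mathbf U}_k^{\dagger})\tilde{\mathbf y}$. Here I would model the random $r$-dimensional subspace and the random direction so that the squared projection residuals follow Beta-type laws; the multivariate gamma function $\Gamma_r$ in \eqref{eq:final variance of target expression} signals that the higher moments are evaluated through Wishart / $\mathrm{Beta}(r/2,(M-r)/2)$ moment identities. Computing $\mathbb E(\sigma_\xi^2)$ and $\mathbb E(\sigma_\eta^2)$ then reduces to first and second moments of these Beta ratios (producing the $r(M-r)/(M^2(M-1)(M+2))$ factors), and the high-probability bound $\Pr(\sigma_\xi^2<C(\cdot))>1-\delta$ follows from a variance/concentration estimate on the same quantity, with the explicit constant $C$ and confidence level $\delta$ read off from a Chebyshev- or Bernstein-type tail.

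I expect this last step to be the crux. The probability formula is a fairly routine extreme-value-plus-delta-method computation together with a difference of independent Gaussians; the real difficulty is identifying the exact distribution of the projection residual onto a random subspace and then evaluating its second and fourth moments through the multivariate gamma identities, while keeping a concentration bound sharp enough to yield the stated $C$ and $\delta$. The moment bookkeeping, and the asymptotic simplifications that collapse those moments into the closed forms of \eqref{eq:final variance of target expression}, is where I anticipate the bulk of the work and the greatest risk of error.
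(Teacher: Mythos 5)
Your first stage reproduces the paper's argument essentially verbatim: the paper also models the two block maxima via the asymptotic normality of order statistics ($X_{(n)}\sim\mathcal{AN}(F^{-1}(1-1/n),\,1/(n^2f^2(F^{-1}(1-1/n))))$, cited as Theorem 10.3 of David--Nagaraja), rescales by $\sigma_\xi,\sigma_\eta$ to get $\mathcal{AN}(\sigma_\xi\rho(M_1),\sigma_\xi^2\psi^2(M_1))$ and $\mathcal{AN}(\sigma_\eta\rho(M_2),\sigma_\eta^2\psi^2(M_2))$, and compares the two independent Gaussians. (Be aware that the paper's own appendix only derives $\Pr(\xi_{(M_1)}\le\eta_{(M_2)})=\Phi(\cdot)$ and never actually justifies the $2\Phi(\cdot)-1$ in the theorem statement; your ``two-sided absolute value'' explanation is a plausible gloss but is not derived by you or by the paper.)

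The genuine gap is in the variance stage, which you correctly flag as the crux but then leave unsolved. You describe $\sigma_\xi^2,\sigma_\eta^2$ as coordinate variances of $(\mathbf I-\widehat{\mathbf U}\widehat{\mathbf U}^{\dagger})\tilde{\mathbf y}$ and propose generic Beta/Wishart moment identities. But if $\mathbf y$ itself lay in the span of $\mathbf U$ with no permutation, every such residual would be zero; the entire effect comes from how the partial permutation $\tilde{\boldsymbol\pi}$ interacts with the basis. The paper's pivotal step is the exact decomposition $Y(j)=\boldsymbol\kappa(j)^\top\boldsymbol\tau\,\boldsymbol\beta$ with $\boldsymbol\tau=(\mathbf I_M-\tilde{\boldsymbol\pi})\mathbf U$ (supported only on the shuffled rows) and $\boldsymbol\kappa(j)=\mathbf U\mathbf u_j-\mathbf e_j$, giving $\sigma^2=\tfrac1r\boldsymbol\kappa^\top\boldsymbol\tau\boldsymbol\tau^\top\boldsymbol\kappa$ conditional on $\mathbf U,\tilde{\boldsymbol\pi}$. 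It is this structure that (i) breaks the symmetry between $j\in\mathcal A$ and $j\in\mathcal O$ (the $\mathbf e_j$ term in $\boldsymbol\kappa(j)$ hits the support of $\boldsymbol\tau$ only for shuffled $j$, which is why $\sigma_\eta^2\gg\sigma_\xi^2$), (ii) produces the $M_2$ factors and every specific term in the bounds of \eqref{eq:final variance of target expression} via the second- and third-order moments of $\mathbf H=\mathbf U\mathbf U^\top$ under the matrix-variate Beta law, and (iii) yields the $\Gamma_r$ lower bound for $\mathbb E(\sigma_\eta^2)$ through an AM--GM step $\operatorname{Tr}(\mathbf H^3)\ge r\det(\mathbf H)^{3/r}$ combined with $\mathbb E[\det(\mathbf H)^k]=\Gamma_r(\tfrac r2+k)\Gamma_r(\tfrac M2)/(\Gamma_r(\tfrac M2+k)\Gamma_r(\tfrac r2))$ --- none of which is reachable from a generic ``projection residual is Beta-distributed'' argument. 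Finally, the concentration bound on $\sigma_\xi^2$ is obtained in the paper by a Gaussian approximation of $\sum_{i>M_1}(H_{ii}-H_{i\varphi(i)})$ and the $3\sigma$ rule (whence the specific $\delta=1-0.9973^2\approx0.0054$); a Chebyshev or Bernstein tail as you propose would not reproduce that constant.
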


There are some calculation issues for $\sigma_\xi^2$ and $\sigma_\eta^2$, which will be detailed in Appendix. Anyway, Theorem \ref{Theorem: Initial} ensures that Algorithm \ref{alg:solve_partial_lsr} successfully recovers the subspace when initialized with the ground truth subspace basis. Specifically, the shuffled ratio $\frac{M_2}{M}$ in the retaining vector $\bm{\nu}^{(i)}$ decreases rapidly as the iteration index $i$ increases. Consequently, $\bm{\nu}^{(i)}$ quickly aligns with the ground-truth retaining subspace $\mathcal{S}^{(i)}$ in terms of cosine distance, eventually approaching zero. Intuitively, this can be understood as the following: with the ratio of retained entries being no more than $\gamma$ (as defined in Algorithm \ref{alg:solve_partial_lsr}), we can confidently assert that most of the shuffled entries have been removed. As a result, the retaining vector $\bm{\nu}^{(iter)}$ closely approximates the ground-truth retaining subspace $\mathcal{S}_{gt}^{(iter)}$.

Conversely, if the process starts with an entirely incorrect subspace, it is analogous to a situation where the data points have been completely shuffled, as discussed in the Appendix. In such a case, the entries are eliminated in a seemingly random fashion, making it impossible to distinguish between shuffled and unshuffled entries. Consequently, Algorithm \ref{alg:solve_partial_lsr} would fail, as corroborated by our theoretical analysis. In this scenario, the retaining vector $\bm{\nu}^{(i)}$ will not converge towards the incorrect subspace $\mathcal{S}_{wrong}^{(i)}$ at the same rate as when the correct subspace is used. This is because the retaining vector $\bm{\nu}^{(iter)}$ bears little correlation with $\mathcal{S}_{wrong}^{(iter)}$, given that the retained entries belong to an unrelated subspace.
Thus, by selecting an appropriate stopping criterion $\gamma$ in Algorithm \ref{alg:solve_partial_lsr}, we can effectively differentiate the ground truth subspace label by comparing the cosine distances between each retaining vector $\bm{\nu}_k^{(i)}$ and its corresponding subspace $\mathcal{S}_k^{(i)}$, for $k = 1, \ldots, L$.

\section{Experimental Evaluation}
\begin{table}[h]
\centering
\begin{tabularx}{230pt}{p{0.85cm}|X} 
\toprule
(L, p) & Permutation Error Ratio$(\%)$ \\
\midrule
(2, 2) &
\begin{tabular}[c]{@{}l@{}}
\textbf{MRUC-S}: 0.0 ± 0.0 ([0, 0]) \\
\textbf{MRUC}: 12.0 ± 15.5 ([0, 30])
\end{tabular} \\
\midrule
(3, 2) &
\begin{tabular}[c]{@{}l@{}}
\textbf{MRUC-S}: 0.0 ± 0.0 ([0, 0]) \\
\textbf{MRUC}: 30.0 ± 0.0 ([30, 30])
\end{tabular} \\
\midrule
(3, 3) &
\begin{tabular}[c]{@{}l@{}}
\textbf{MRUC-S}: 13.3 ± 16.6 ([0, 33.3]) \\
\textbf{MRUC}: 55.0 ± 1.4 ([53.3, 58.3])
\end{tabular} \\
\midrule
(3, 5) &
\begin{tabular}[c]{@{}l@{}}
\textbf{MRUC-S}: 17.0 ± 25.9 ([0, 65]) \\
\textbf{MRUC}: 71.5 ± 13.7 ([56.7, 91.7])
\end{tabular} \\
\midrule
(5, 5) &
\begin{tabular}[c]{@{}l@{}}
\textbf{MRUC-S}: 23.0 ± 17.9 ([0, 50]) \\
\textbf{MRUC}: 73.2 ± 0.6 ([73, 75])
\end{tabular} \\
\bottomrule
\end{tabularx}
\caption{Performance comparison of MRUC and MRUC-S. Values represent the permutation error ratio (\%), which is the average normalized Hamming distance between predicted and true permutation matrices. Metrics are reported as mean ± std ([min, max]) over multiple random initializations.}
\label{tab:MRUC synthetic experiments}
\end{table}

\begin{table*}[htbp]
\centering
\begin{tabular}{@{}ccccc@{}}
\toprule
\quad \textbf{Mean (Median)}  & 
\textbf{CE}\textsubscript{gt} & \textbf{CE}\textsubscript{recon} & 
\textbf{UOratio} & 
\textbf{SCerr} \\ 
\midrule
2 subjects  & 0.0000 (0.0000)& 0.0000 (0.0000)& 0.1042 (0.1095) & 0.0137 (0.0053) \\
3 subjects  & 0.0000 (0.0000)& 0.0110 (0.0105)& 0.0488 (0.0525) & 0.0106 (0.0072) \\
5 subjects  & 0.0168 (0.0190)& 0.0343 (0.0325)& 0.0333 (0.0250) & 0.0493 (0.0304) \\
8 subjects  & 0.0245 (0.0230)& 0.0595 (0.0630)& 0.0157 (0.0120) & 0.0959 (0.1007) \\
10 subjects & 0.0250 (0.0250)& 0.1002 (0.0845)& 0.0107 (0.0130) & 0.1336 (0.1563) \\
12 subjects & 0.0268 (0.0285)& 0.1025 (0.1030)& 0.0093 (0.0100) & 0.1409 (0.1892) \\ \bottomrule
\end{tabular}
\caption{Performance of Experiment on Extended YaleB Dataset}
\label{tab:face_CE}
\end{table*}

Before presenting the results of experiments conducted on synthetic and real-world datasets, it is important to clarify the evaluation metrics used to demonstrate the effectiveness of our approach.

\textbf{Outlier Classification Error} involves two metrics, \(\textbf{CE\textsubscript{gt}}\) and \(\textbf{CE\textsubscript{recon}}\), that measure outlier classification accuracy. \(\textbf{CE\textsubscript{gt}}\) is calculated using the ground truth subspaces, \(\mathbf{U}_1, \ldots, \mathbf{U}_L\), while \(\textbf{CE\textsubscript{recon}}\) uses reconstructed bases \(\widehat{\mathbf{U}}_1, \ldots, \widehat{\mathbf{U}}_L\). 
Then:
\begin{align*}
    &\textbf{CE\textsubscript{gt}} = \tfrac{\#\left(\text{Misclassified Outliers}\right)}{\#\left(\text{Outliers}\right)} \\
    &\textbf{CE\textsubscript{recon}} = \tfrac{\#\left(\text{Misclassified Detected Outliers}\right)}{\#\left(\text{Detected Outliers}\right)}
\end{align*}

\textbf{Matrix Recovery Error} is assessed by \(\textbf{RE\textsubscript{gt}}\) and \(\textbf{RE\textsubscript{recon}}\), which measure the accuracy of recovering outlier columns using normalized Frobenius norms. Specifically:
\begin{align*}
    &\textbf{RE\textsubscript{gt}} = \tfrac{\|\operatorname{Proj}_\mathcal{S}(\widehat{\mathbf{Y}}) - \mathbf{Y}\|_F}{\|\mathbf{Y}\|_F} \\
    &\textbf{RE\textsubscript{recon}} = \tfrac{\|\operatorname{Proj}_\mathcal{S}(\widehat{\mathbf{Y}}_d) - \mathbf{Y}_d\|_F}{\|\mathbf{Y}_d\|_F}
\end{align*}
where \(\widehat{\mathbf{Y}}\) and \(\widehat{\mathbf{Y}}_d\) are recovered outliers using ground truth and detected bases, respectively, and \(\mathbf{Y}\) and \(\mathbf{Y}_d\) are the corresponding ground truth outliers. \(\operatorname{Proj}_\mathcal{S}(\cdot)\) denotes projection onto the subspace.

\textbf{Auxiliary Metrics} include \textbf{UOratio} and \textbf{SCerr}:
\begin{align*}
    &\textbf{UOratio} = \tfrac{\#\left(\text{Undetected Outliers}\right)}{\#\left(\text{Outliers}\right)} \\
    &\textbf{SCerr} = \tfrac{\#\left(\text{Misclassified Detected Inliers}\right)}{\#\left(\text{Detected Inliers}\right)}
\end{align*}
\textbf{UOratio} evaluates the undetected ratio of outlier detection, while \textbf{SCerr} assesses subspace clustering error, both influencing \(\textbf{CE\textsubscript{recon}}\) and \(\textbf{RE\textsubscript{recon}}\).

\subsection{Experiment on Synthetic Data}
\begin{figure}[t]
    \centering
    \begin{subfigure}[t]{0.15\textwidth}
        \centering
        \includegraphics[width=\textwidth]{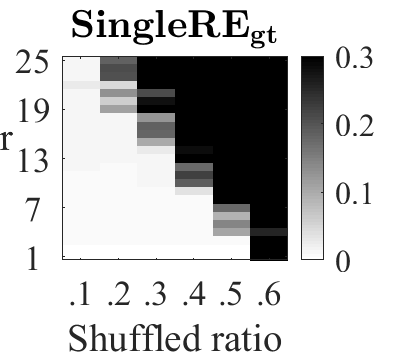}
        \caption{Recovery Error with GT Single Subspace}
        \label{subfig:Recovery_Error_For_Single_Subspace}
    \end{subfigure}
    \hfill
    \begin{subfigure}[t]{0.15\textwidth}
        \centering
        \includegraphics[width=\textwidth]{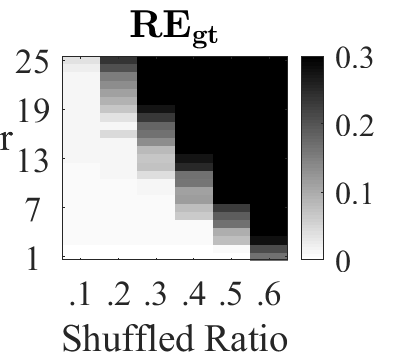}
        \caption{Recovery Error with GT Information}
        \label{subfig:Recovery_Error_with_GT_Information}
    \end{subfigure}
    \hfill
    \begin{subfigure}[t]{0.15\textwidth}
        \centering
        \includegraphics[width=\textwidth]{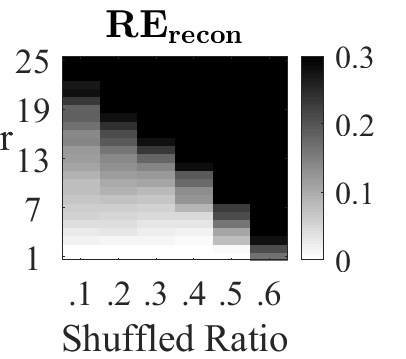}
        \caption{Recovery Error with Reconstructed Information}
        \label{subfig:Recovery_Error_with_Recon_Information}
    \end{subfigure}
    \vspace{0.1cm} 
    \begin{subfigure}[t]{0.15\textwidth}
        \centering
        \includegraphics[width=\textwidth]{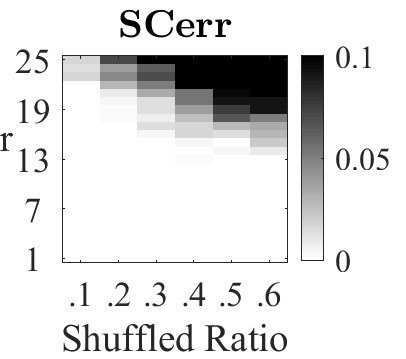}
        \caption{Subspace Clustering Error}
        \label{subfig:Missrate_In}
    \end{subfigure}
    \hfill
    \begin{subfigure}[t]{0.15\textwidth}
        \centering
        \includegraphics[width=\textwidth]{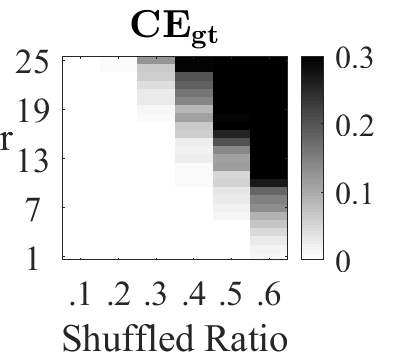}
        \caption{Outlier Classification Error with GT Information}
        \label{subfig:Missrate_Out_with_GT_Information}
    \end{subfigure}
    \hfill
    \begin{subfigure}[t]{0.15\textwidth}
        \centering
        \includegraphics[width=\textwidth]{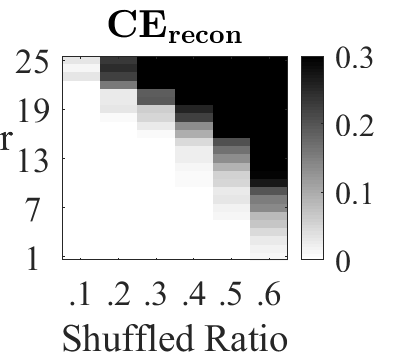}
        \caption{Outlier Classification Error with Reconstructed Information}
        \label{subfig:Missrate_Out_with_Recon_Information}
    \end{subfigure}
    \caption{Performance of PMSDR (Algorithm \ref{alg:algorithm_4_stage}) on Synthetic Data. Experiments are conducted for sparse permutations with shuffled ratios up to $0.6$ and subspace dimensions up to $25$ with the ambient space dimension being $50$. In multi-subspace cases, subspace settings are $2, 3, 5, 8, 10$, and the median error is plotted.}
    \label{fig:Synthetic Experiment}
\end{figure}

\begin{figure}[t]
    \centering
    \begin{subfigure}[t]{0.15\textwidth}
        \centering
        \includegraphics[width=\textwidth]{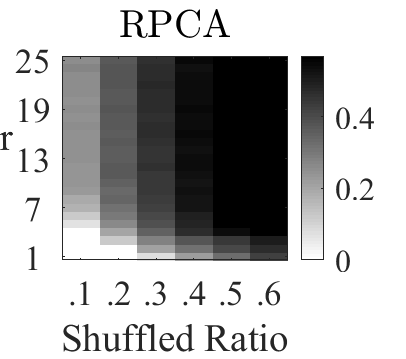}
        \caption{Recovery Error for RPCA}
        \label{subfig:Recovery_Error_For_RPCA}
    \end{subfigure}
    \hfill
    \begin{subfigure}[t]{0.15\textwidth}
        \centering
        \includegraphics[width=\textwidth]{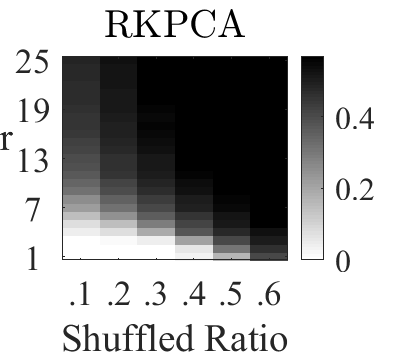}
        \caption{Recovery Error for RKPCA}
        \label{subfig:Recovery_Error_For_RKPCA}
    \end{subfigure}
    \hfill
    \begin{subfigure}[t]{0.15\textwidth}
        \centering
        \includegraphics[width=\textwidth]{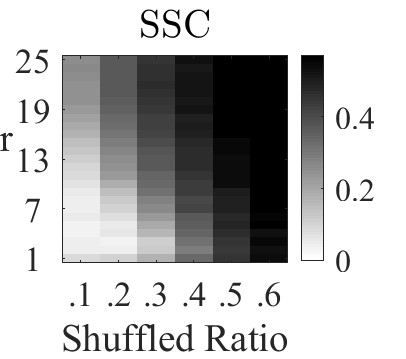}
        \caption{Recovery Error for SSC}
        \label{subfig:Recovery_Error_For_SSC}
    \end{subfigure}
    \vskip\baselineskip 
    \begin{subfigure}[t]{0.15\textwidth}
        \centering
        \includegraphics[width=\textwidth]{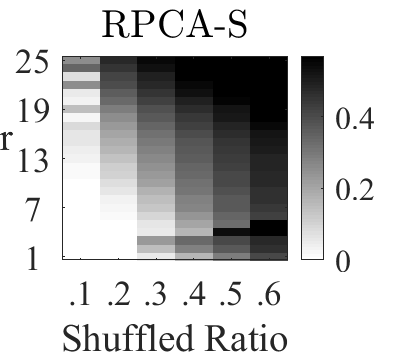}
        \caption{Recovery Error for RPCA-S}
        \label{subfig:Recovery_Error_For_PMSDR_RPCA}
    \end{subfigure}
    \hfill
    \begin{subfigure}[t]{0.15\textwidth}
        \centering
        \includegraphics[width=\textwidth]{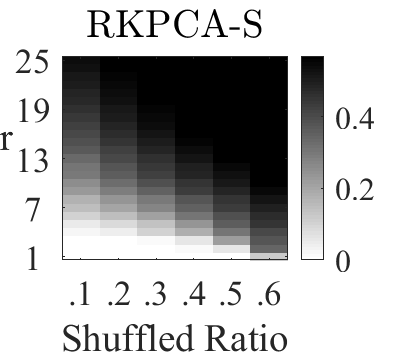}
        \caption{Recovery Error for RKPCA-S}
        \label{subfig:Recovery_Error_For_PMSDR_RKPCA}
    \end{subfigure}
    \hfill
    \begin{subfigure}[t]{0.15\textwidth}
        \centering
        \includegraphics[width=\textwidth]{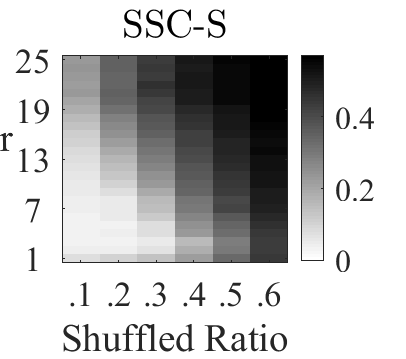}
        \caption{Recovery Error for SSC-S}
        \label{subfig:Recovery_Error_For_PMSDR_SSC}
    \end{subfigure}
    \caption{Synthetic experiments on RPCA, RKPCA, SSC, and their PMSDR-augmented versions. All experiments are conducted for sparse permutations with shuffled ratios no greater than 0.6 and subspace dimensions no greater than 50\% of the ambient space dimension, which is 50. The number of subspaces is fixed to 5.}
    \label{fig:Synthetic_Experiment_Comparison}
\end{figure}

We conduct three experiments: (1) analyzing Algorithm \ref{alg:algorithm_4_stage}, (2) comparing it with RPCA \cite{rpca}, RKPCA \cite{fan2019exactly}, and SSC \cite{SSC_a}, and (3) evaluating the impact of MRUC \cite{MRUC} on permutation matrix recovery when Algorithm \ref{alg:algorithm_4_stage} is augmented for multiple subspaces.

In the first experiment, the ambient dimension $M$ is $50$, each subspace group has $120$ samples, and the outlier proportion is $60\%$. The number of groups is $[2, 3, 5, 8, 10]$. The subspace rank varies from $1$ to $25$, and the shuffled ratio from $0.1$ to $0.6$, with a noise level of $40$ dB. The median error across all settings is recorded.
Figure \ref{fig:Synthetic Experiment} compares the estimation error of our method with single subspace results. Figure \ref{fig:Synthetic Experiment}(a) shows UPCA \cite{upca} on a single subspace as a baseline, while (b) and (c) show multi-subspace results, demonstrating minimal performance loss even with reconstructed information. Figures (e) and (f) highlight the robust outlier classification.

In the second experiment, we compare vanilla methods with their PMSDR-augmented versions, with the number of subspaces fixed at $5$. Figure \ref{fig:Synthetic_Experiment_Comparison} shows significant enhancement in RPCA and SSC, and slight improvement for RKPCA when augmented.
The third experiment compares PMSDR-augmented MRUC-S with MRUC using Hamming distance as the evaluation metric. The experimental setup includes $M=20$, $r=2$, and a shuffled ratio of $0.5$. Table \ref{tab:MRUC synthetic experiments} demonstrates the superior performance of PMSDR, highlighting the robustness of Algorithm \ref{alg:algorithm_4_stage}, even under varying subspace configurations.

In summary, our method effectively bridges multi- and single-subspace cases in low-rank scenarios, generalizing single-subspace recovery to multi-subspace contexts.

\begin{figure}[htbp]
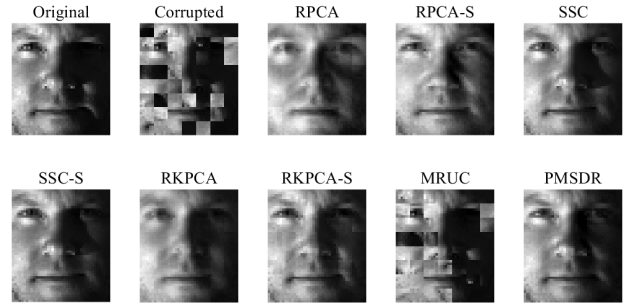

    \centering
    \foreach \folder in {Original, Corrupted, RPCA, RPCA-S, SSC} 
    {
        \begin{subfigure}[t]{0.098\textwidth} 
            \graphicspath{{\folder/}}
            \includegraphics[width=\textwidth]{\folder_23_3.png} 
            \caption*{}
        \end{subfigure}
        \hspace{-3mm}
    }
    
    \vspace{-7mm} 
    \foreach \folder in {SSC-S, RKPCA, RKPCA-S, MRUC, PMSDR} 
    {
        \begin{subfigure}[t]{0.098\textwidth} 
            \graphicspath{{\folder/}}
            \includegraphics[width=\textwidth]{\folder_23_3.png} 
            \caption*{}
        \end{subfigure}
        \hspace{-3mm}
    }

    \vspace{-7mm}
    \caption{Experimental results showing a subset of the image recovery experiments. The complete set of results is in Appendix \ref{Appendix: Face}.}
    \label{fig:face_images_compressed}
\end{figure}

\subsection{Experiment on Face Images}

\begin{table}[h]
\centering
\begin{tabular}{l|l|ccc}
\toprule
\textbf{Subspaces} & \textbf{Metric} & \textbf{Mean} & \textbf{Median} \\ 
\midrule
\multirow{5}{*}{2 subspaces} & \textbf{CE}\textsubscript{gt} 
                                    & $0.011$ & $0.000$ \\ 
& \textbf{CE}\textsubscript{recon}  & $0.040$ & $0.007$ \\ 
& \textbf{RE}\textsubscript{gt}     & $0.017$ & $0.005$ \\ 
& \textbf{RE}\textsubscript{recon}  & $0.021$ & $0.013$ \\ 
& \textbf{SCerr}                    & $0.027$ & $0.000$ \\ 
\midrule
\multirow{5}{*}{3 subspaces} & \textbf{CE}\textsubscript{gt} 
                                    & $0.018$ & $0.008$ \\ 
& \textbf{CE}\textsubscript{recon}  & $0.093$ & $0.023$ \\ 
& \textbf{RE}\textsubscript{gt}     & $0.014$ & $0.007$ \\ 
& \textbf{RE}\textsubscript{recon}  & $0.021$ & $0.012$ \\ 
& \textbf{SCerr}                    & $0.043$ & $0.004$ \\ 
\bottomrule
\end{tabular}
\caption{PMSDR Performance on Hopkins-155}
\label{tab:hopkins_PMSDR}
\end{table}

\begin{table}[htbp]
\centering
\begin{tabular}{l|c|c}
\hline
\textbf{Method} & \textbf{2 subspaces} & \textbf{3 subspaces} \\
\hline
\multicolumn{3}{c}{\textbf{Median RE (Mean RE)}} \\
\hline
PMSDR             & $0.013 \, (0.021)$ & $0.012 \, (0.021)$ \\
PMSDR\textsubscript{gt}          & $\mathbf{0.005 \, (0.017)}$ & $\mathbf{0.007 \, (0.014)}$ \\
\hline
RPCA              & $0.046 \, (0.052)$ & $0.047 \, (0.055)$ \\
RPCA-S            & $0.040 \, (0.045)$ & $0.043 \, (0.047)$ \\
RPCA\textsubscript{gt} -S        & $\mathbf{0.033 \, (0.040)}$ & $\mathbf{0.030 \, (0.036)}$ \\
\hline
RKPCA             & $\mathbf{0.009 \, (0.014)}$ & $\mathbf{0.008 \, (0.012)}$ \\
RKPCA-S           & $0.053 \, (0.064)$ & $0.047 \, (0.058)$ \\
RKPCA\textsubscript{gt} -S       & $0.039 \, (0.057)$ & $0.036 \, (0.049)$ \\ 
\hline
SSC               & $0.087 \, (0.090)$ & $0.100 \, (0.104)$ \\
SSC-S             & $0.091 \, (0.095)$ & $0.101 \, (0.102)$ \\
SSC\textsubscript{gt}-S          & $\mathbf{0.079 \, (0.084)}$ & $\mathbf{0.089 \, (0.090)}$ \\
\hline
MRUC              & $0.066 \, (0.084)$ & $0.085 \, (0.091)$    \\
MRUC-S            & $0.048 \, (0.074)$ & $0.056 \, (0.078)$    \\
MRUC\textsubscript{gt}-S         & $\mathbf{0.048 \, (0.070)}$ & $\mathbf{0.056 \, (0.073)}$    \\
\hline
\end{tabular}
\caption{Matrix Recovery Error for Comparison Experiments on Hopkins-155.}
\label{tab:hopkins_RPCA}
\end{table}


We applied our algorithm to the Extended Yale B dataset \cite{yaleb}, which includes $38$ subjects, each with $64$ downsampled face images of size $48 \times 42$ ($M=2016$).

In the first experiment, we selected $10$ subjects and corrupted $19$ images per group by shuffling $40\%$ of the pixels. We compared our PMSDR method with RPCA \cite{Robust_PCA}, SSC \cite{SSC_a}, and RKPCA \cite{fan2019exactly}, as well as their PMSDR-augmented counterparts (RPCA-S, SSC-S, RKPCA-S). The results, depicted in Figure \ref{fig:face_images}, demonstrate that PMSDR substantially improves matrix recovery and outlier correction. A subset of these results is presented in Figure \ref{fig:face_images_compressed}, with the complete set available in Appendix \ref{Appendix: Face}.

In the second experiment, we varied the number of subspaces $L$ from $2$ to $12$ and repeated the corruption process. The results, summarized in Table \ref{tab:face_CE}, indicate a strong performance, particularly when the ground truth is known. Specifically, \(\textbf{CE\textsubscript{gt}}\) is lower than \(\textbf{CE\textsubscript{recon}}\), due to increased subspace clustering errors \textbf{SCerr}. However, these findings highlight the robustness and adaptability of our algorithm.

\subsection{Experiment on Motion Segmentations}

We evaluated our algorithm on the Hopkins-155 database, which includes $117$ sequences with $2$ subspaces, $35$ with $3$ subspaces and $1$ with $5$ subspaces. Each sequence lies in a $4$-dimensional subspace \cite{hop_dim4_1,hop_dim4_2}. The shuffled and outlier ratios are both $0.4$, with a fixed subspace dimension of $4$. Data are mapped from $3$D to $2$D for better representation, and concatenated over frames. After preprocessing, the dimension of the ambient space ranges from $40$ to $70$.

We apply our 4-stage pipeline (PMSDR) and compare its performance against RPCA, RKPCA, SSC, and MRUC, along with their PMSDR-augmented variants, denoted by appending the suffix `-S'. Additionally, we investigate the impact of using ground-truth information for matrix recovery. The regularization parameters in RPCA, RKPCA, and SSC are tuned accordingly. Results in Table \ref{tab:hopkins_PMSDR} show PMSDR's robustness in matrix recovery and outlier classification, even without ground truth information. Table \ref{tab:hopkins_RPCA} highlights further improvements when combining these methods with our pipeline, except for RKPCA pairs, which still performs well as for the PMSDR-augmented version.

\subsection{Experiment on Data Re-identification}

We evaluated the proposed PMSDR pipeline alongside the RPCA, SSC, RKPCA, and MRUC methods on real-world educational and medical records, simulating a privacy protection scenario similar to \cite{upca}. The first dataset, described in \cite{record_linkage}, contains a matrix $M_{score} \in \mathbb{R}^{707\times 14}$ with scores of $707$ students across $14$ tests. To anonymize, the last $7$ columns were randomly permuted, with shuffled ratios from $0.1$ to $1$. The second dataset from \cite{Breast_tumor} involves a matrix $M_{tumor} \in \mathbb{R}^{357 \times 30}$, representing $357$ patients with $30$ features. Here, $50\%$ of the columns were permuted with similar shuffled ratios. Both matrices were normalized, and our method was applied with a subspace dimension of $3$ as in \cite{upca}. We assumed prior knowledge of the outlier ratio during the detection phase (Step \ref{Step 1: Outlier Detection.} in Algorithm \ref{alg:algorithm_4_stage}).

Given the lack of inherent multi-subspace scenarios in these datasets, no ground truth subspace information is available. Thus, even single subspace methods can perform reasonably well. We compared our PMSDR with RPCA ($\lambda$ optimized over $0.1:0.05:0.95$ and $\mu = 10\lambda$), RKPCA ($\lambda$ optimized similarly), SSC ($\alpha$ optimized over $[5, 20, 100, 200, 500, 1000]$), and MRUC with the best initialization. UPCA \cite{upca}, a strong single-subspace recovery method, was used as a baseline.

In PMSDR, we hypothesized subspaces $L=1, 2, 3$ despite the lack of natural subspaces. Figures \ref{fig:de-anonymization-educational} and \ref{fig:de-anonymization-medical} show that PMSDR outperforms UPCA in most cases, with slightly worse performance in low shuffled ratios due to possible undetected outliers affecting the basis of the subspace. This suggests that Algorithm \ref{alg:algorithm_4_stage} can uncover more hidden information, improving robustness and performance.

\begin{figure}[t]
    \centering
    \begin{minipage}[b]{0.48\textwidth}
        \includegraphics[width=\textwidth]{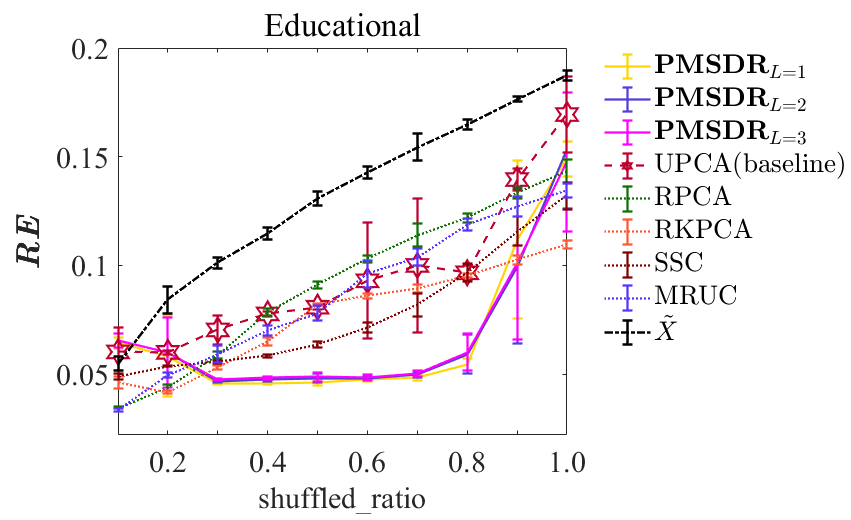}
        \caption{De-Anonymization Experiments for Educational Data comparing Algorithm \ref{alg:algorithm_4_stage}, UPCA \cite{upca}, and three other methods (RPCA, RKPCA, SSC and MRUC). The output of Algorithm \ref{alg:algorithm_4_stage} is denoted as \textbf{PMSDR}\textsubscript{$L=k$}, where the number of groups $k$ is set to \{1, 2, 3\}.}
        \label{fig:de-anonymization-educational}
    \end{minipage}
    \hfill
    \begin{minipage}[b]{0.48\textwidth}
        \includegraphics[width=\textwidth]{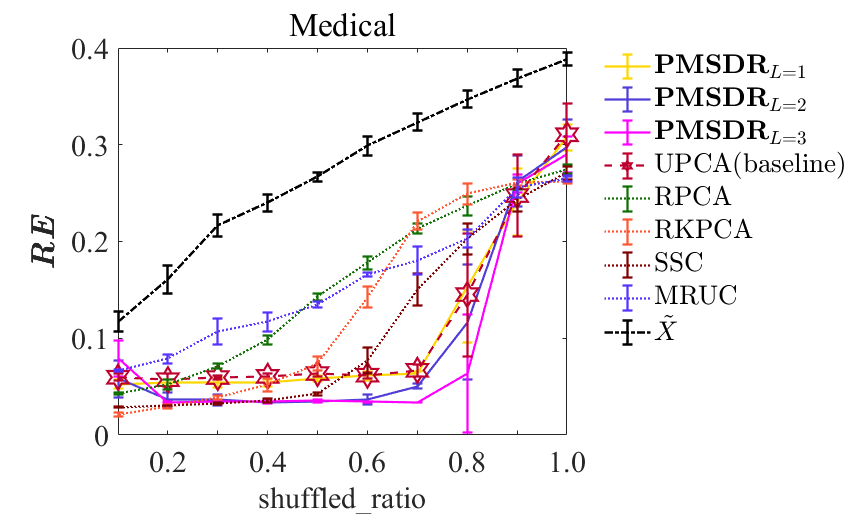}
        \caption{De-Anonymization Experiments for Medical Data comparing Algorithm \ref{alg:algorithm_4_stage}, UPCA \cite{upca}, and three other methods (RPCA, RKPCA, SSC and MRUC). The output of Algorithm \ref{alg:algorithm_4_stage} is denoted as \textbf{PMSDR}\textsubscript{$L=k$}, where the number of groups $k$ is set to \{1, 2, 3\}.}
        \label{fig:de-anonymization-medical}
    \end{minipage}

    \label{fig:de-anonymization}
\end{figure}


\section{Conclusion and Future Directions}

Our algorithm extended UPCA \cite{upca} by recovering corrupted data across multiple subspaces. For $L > 1$, complexity increases due to outlier matching. Instead of the robust PCA, we use outlier detection and subspace clustering to estimate bases. While our outlier classification (Step 3 in Algorithm \ref{alg:algorithm_4_stage}, which detailed in Algorithm \ref{alg:solve_partial_lsr}) performs well, it depends on accurate basis estimation, revealing potential areas for improvement.

The framework is flexible, integrating other methods, but currently handles only linear/affine cases. Extending it to non-linear contexts requires further research. The algorithm is also limited to partially shuffled data, a restriction future work should address. Applying permutation recovery methods like MRUC \cite{MRUC} as preprocessing could transform fully shuffled data into a partially shuffled state, enabling our PMSDR pipeline to function effectively.

In conclusion, our method enhances recovery across multiple subspaces, but further research is needed to improve basis estimation, handle non-linear scenarios, and overcome the partially shuffled data limitation.

\bigskip

\section*{Acknowledgments}
This work was supported by the Shenzhen Science and Technology Program under Grant No.JCYJ20210324130208022 (Fundamental Algorithms of Natural Language Understanding for Chinese Medical Text Processing) and the Youth Program 62106211 of the National Natural Science Foundation of China.

\bibliography{aaai25}

\begin{thebibliography}{37}
\providecommand{\natexlab}[1]{#1}

\bibitem[{Bedoya, Nagar, and Gupta(2007)}]{bedoya2007moments}
Bedoya, E.; Nagar, D.~K.; and Gupta, A.~K. 2007.
\newblock Moments of the complex matrix variate beta distribution.
\newblock \emph{PanAmerican Mathematical Journal}, 17(2): 21.

\bibitem[{Boukerche, Zheng, and Alfandi(2020)}]{boukerche2020outlier}
Boukerche, A.; Zheng, L.; and Alfandi, O. 2020.
\newblock Outlier detection: Methods, models, and classification.
\newblock \emph{ACM Computing Surveys (CSUR)}, 53(3): 1--37.

\bibitem[{Boult and Brown(1991)}]{hop_dim4_1}
Boult, T.~E.; and Brown, L.~G. 1991.
\newblock Factorization-based segmentation of motions.
\newblock In \emph{Proceedings of the IEEE workshop on visual motion},
  179--180. IEEE Computer Society.

\bibitem[{Byun et~al.(2006)Byun, Sohn, Bertino, and Li}]{byun2006secure}
Byun, J.-W.; Sohn, Y.; Bertino, E.; and Li, N. 2006.
\newblock Secure anonymization for incremental datasets.
\newblock In \emph{Secure Data Management: Third VLDB Workshop, SDM 2006,
  Seoul, Korea, September 10-11, 2006. Proceedings 3}, 48--63. Springer.

\bibitem[{Cai et~al.(2022)Cai, Fan, Guo, Wang, Zhang, and
  Zhang}]{Cai_2022_CVPR}
Cai, J.; Fan, J.; Guo, W.; Wang, S.; Zhang, Y.; and Zhang, Z. 2022.
\newblock Efficient Deep Embedded Subspace Clustering.
\newblock In \emph{Proceedings of the IEEE/CVF Conference on Computer Vision
  and Pattern Recognition (CVPR)}, 1--10.

\bibitem[{Cand{\`e}s et~al.(2011{\natexlab{a}})Cand{\`e}s, Li, Ma, and
  Wright}]{rpca}
Cand{\`e}s, E.~J.; Li, X.; Ma, Y.; and Wright, J. 2011{\natexlab{a}}.
\newblock Robust principal component analysis?
\newblock \emph{Journal of the ACM (JACM)}, 58(3): 1--37.

\bibitem[{Cand{\`e}s et~al.(2011{\natexlab{b}})Cand{\`e}s, Li, Ma, and
  Wright}]{Robust_PCA}
Cand{\`e}s, E.~J.; Li, X.; Ma, Y.; and Wright, J. 2011{\natexlab{b}}.
\newblock Robust principal component analysis?
\newblock \emph{Journal of the ACM (JACM)}, 58(3): 1--37.

\bibitem[{Cuturi(2013)}]{cuturi2013sinkhorn}
Cuturi, M. 2013.
\newblock Sinkhorn distances: Lightspeed computation of optimal transport.
\newblock \emph{Advances in neural information processing systems}, 26.

\bibitem[{David and Nagaraja(2004)}]{david2004order}
David, H.~A.; and Nagaraja, H.~N. 2004.
\newblock \emph{Order statistics}.
\newblock John Wiley \& Sons.

\bibitem[{Davidson and Szarek(2001)}]{davidson2001local}
Davidson, K.~R.; and Szarek, S.~J. 2001.
\newblock Local operator theory, random matrices and Banach spaces.
\newblock \emph{Handbook of the geometry of Banach spaces}, 1(317-366): 131.

\bibitem[{Domingo-Ferrer and Muralidhar(2016)}]{de-anonymization}
Domingo-Ferrer, J.; and Muralidhar, K. 2016.
\newblock New directions in anonymization: permutation paradigm, verifiability
  by subjects and intruders, transparency to users.
\newblock \emph{Information Sciences}, 337: 11--24.

\bibitem[{Dua, Graff et~al.(2017)}]{Breast_tumor}
Dua, D.; Graff, C.; et~al. 2017.
\newblock UCI machine learning repository.

\bibitem[{Eaton(1989)}]{Random_Orthogonal_Matrices}
Eaton, M.~L. 1989.
\newblock Group invariance applications in statistics.
\newblock IMS.

\bibitem[{Elhamifar and Vidal(2013)}]{SSC_a}
Elhamifar, E.; and Vidal, R. 2013.
\newblock Sparse subspace clustering: Algorithm, theory, and applications.
\newblock \emph{IEEE transactions on pattern analysis and machine
  intelligence}, 35(11): 2765--2781.

\bibitem[{Fan(2021)}]{kfsc2021kdd}
Fan, J. 2021.
\newblock Large-Scale Subspace Clustering via k-Factorization.
\newblock In \emph{Proceedings of the 27th ACM SIGKDD Conference on Knowledge
  Discovery \& Data Mining}, KDD '21, 342–352. New York, NY, USA: Association
  for Computing Machinery.
\newblock ISBN 9781450383325.

\bibitem[{Fan and Chow(2018)}]{FAN2018378}
Fan, J.; and Chow, T.~W. 2018.
\newblock Non-linear matrix completion.
\newblock \emph{Pattern Recognition}, 77: 378--394.

\bibitem[{Fan and Chow(2019)}]{fan2019exactly}
Fan, J.; and Chow, T.~W. 2019.
\newblock Exactly robust kernel principal component analysis.
\newblock \emph{IEEE transactions on neural networks and learning systems},
  31(3): 749--761.

\bibitem[{Fan et~al.(2019)Fan, Ding, Chen, and Udell}]{fan2019factor}
Fan, J.; Ding, L.; Chen, Y.; and Udell, M. 2019.
\newblock Factor group-sparse regularization for efficient low-rank matrix
  recovery.
\newblock \emph{Advances in neural information processing Systems}, 32.

\bibitem[{Fan and Udell(2019)}]{fan2019online}
Fan, J.; and Udell, M. 2019.
\newblock Online high rank matrix completion.
\newblock In \emph{Proceedings of the IEEE/CVF conference on computer vision
  and pattern recognition}, 8690--8698.

\bibitem[{Fan, Zhang, and Udell(2020)}]{fan2020polynomial}
Fan, J.; Zhang, Y.; and Udell, M. 2020.
\newblock Polynomial matrix completion for missing data imputation and
  transductive learning.
\newblock In \emph{Proceedings of the AAAI Conference on Artificial
  Intelligence}, volume~34, 3842--3849.

\bibitem[{Fellegi and Sunter(1969)}]{record_linkage}
Fellegi, I.~P.; and Sunter, A.~B. 1969.
\newblock A theory for record linkage.
\newblock \emph{Journal of the American Statistical Association}, 64(328):
  1183--1210.

\bibitem[{Georghiades, Belhumeur, and Kriegman(2001)}]{yaleb}
Georghiades, A.~S.; Belhumeur, P.~N.; and Kriegman, D.~J. 2001.
\newblock From few to many: Illumination cone models for face recognition under
  variable lighting and pose.
\newblock \emph{IEEE transactions on pattern analysis and machine
  intelligence}, 23(6): 643--660.

\bibitem[{Gupta and Nagar(2018)}]{Matrix_variate_distributions}
Gupta, A.~K.; and Nagar, D.~K. 2018.
\newblock \emph{Matrix variate distributions}.
\newblock Chapman and Hall/CRC.

\bibitem[{Hodge and Austin(2004)}]{hodge2004survey}
Hodge, V.; and Austin, J. 2004.
\newblock A survey of outlier detection methodologies.
\newblock \emph{Artificial intelligence review}, 22: 85--126.

\bibitem[{Ji, Mittal, and Beyah(2016)}]{ji2016graph}
Ji, S.; Mittal, P.; and Beyah, R. 2016.
\newblock Graph data anonymization, de-anonymization attacks, and
  de-anonymizability quantification: A survey.
\newblock \emph{IEEE Communications Surveys \& Tutorials}, 19(2): 1305--1326.

\bibitem[{Jiang(2006)}]{jiang2006many}
Jiang, T. 2006.
\newblock How many entries of a typical orthogonal matrix can be approximated
  by independent normals?

\bibitem[{Jiang and Ma(2019)}]{jiang2019plot}
Jiang, T.; and Ma, Y. 2019.
\newblock Plot of CLT.
\newblock \emph{Transactions of the American Mathematical Society}, 372(3):
  1509--1553.

\bibitem[{Liu et~al.(2012)Liu, Lin, Yan, Sun, Yu, and Ma}]{liu2012robust}
Liu, G.; Lin, Z.; Yan, S.; Sun, J.; Yu, Y.; and Ma, Y. 2012.
\newblock Robust recovery of subspace structures by low-rank representation.
\newblock \emph{IEEE transactions on pattern analysis and machine
  intelligence}, 35(1): 171--184.

\bibitem[{Muralidhar(2017)}]{re-identification}
Muralidhar, K. 2017.
\newblock Record re-identification of swapped numerical microdata.
\newblock \emph{Journal of Information Privacy and Security}, 13(1): 34--45.

\bibitem[{Slawski and Ben-David(2019)}]{slawski2019linear}
Slawski, M.; and Ben-David, E. 2019.
\newblock Linear regression with sparsely permuted data.

\bibitem[{Talebi et~al.(2020)Talebi, Feng, Huang, and Maroufy}]{talebi2020ehr}
Talebi, Y.; Feng, H.; Huang, Y.; and Maroufy, V. 2020.
\newblock EHR data cleaning.
\newblock In \emph{Statistics and Machine Learning Methods for EHR Data},
  79--109. Chapman and Hall/CRC.

\bibitem[{Tang et~al.(2021)Tang, Chang, Ye, and Zha}]{MRUC}
Tang, Z.; Chang, T.-H.; Ye, X.; and Zha, H. 2021.
\newblock Low-rank Matrix Recovery With Unknown Correspondence.
\newblock \emph{arXiv preprint arXiv:2110.07959}.

\bibitem[{Tomasi and Kanade(1992)}]{hop_dim4_2}
Tomasi, C.; and Kanade, T. 1992.
\newblock Shape and motion from image streams under orthography: a
  factorization method.
\newblock \emph{International journal of computer vision}, 9(2): 137--154.

\bibitem[{Unnikrishnan, Haghighatshoar, and Vetterli(2015)}]{unlabeledsensing}
Unnikrishnan, J.; Haghighatshoar, S.; and Vetterli, M. 2015.
\newblock Unlabeled sensing: Solving a linear system with unordered
  measurements.
\newblock In \emph{2015 53rd Annual Allerton Conference on Communication,
  Control, and Computing (Allerton)}, 786--793. IEEE.

\bibitem[{Yao, Peng, and Tsakiris(2021)}]{upca}
Yao, Y.; Peng, L.; and Tsakiris, M. 2021.
\newblock Unlabeled principal component analysis.
\newblock \emph{Advances in Neural Information Processing Systems}, 34.

\bibitem[{You, Robinson, and Vidal(2017)}]{you2017provable}
You, C.; Robinson, D.~P.; and Vidal, R. 2017.
\newblock Provable self-representation based outlier detection in a union of
  subspaces.
\newblock In \emph{Proceedings of the IEEE conference on computer vision and
  pattern recognition}, 3395--3404.

\bibitem[{Zhu et~al.(2019)Zhu, Ding, Robinson, Tsakiris, and Vidal}]{DPCP}
Zhu, Z.; Ding, T.; Robinson, D.; Tsakiris, M.; and Vidal, R. 2019.
\newblock A linearly convergent method for non-smooth non-convex optimization
  on the grassmannian with applications to robust subspace and dictionary
  learning.
\newblock \emph{Advances in Neural Information Processing Systems}, 32.

\end{thebibliography}

\clearpage
\onecolumn
\appendix

\section{Computational Complexity Analysis}
We analyze the time complexity of our PMSDR theoretically. The time complexities in Outlier Detection, Subspace Reconstruction, Outlier Classification, and Matrix Recovery are $\mathcal{O}(T_1MN^2)$, $\mathcal{O}(T_2MN^2+L\min(M(N/L)^2,(N/L)M^2))$, $\mathcal{O}(LT_3(rM^2+r^3))$, and $\mathcal{O}(N_Yr^2M^2)$ respectively. Suppose $N/L>M$, the total time complexity is $\mathcal{O}((T_1+T_2)MN^2+NM^2+T_3LrM^2+N_Yr^2M^2)$. This complexity can be further reduced using sparse matrix multiplication and truncated SVD. 

We also compare the time costs (second) of our method and the latest competitors MRUC and RKPCA on the synthetic data with different number of subspaces. As shown in the table, our method is more efficient.

\begin{table}[h!]
    \centering
    \begin{tabular}{c|c|c|c}
\hline
\text{No. of subspaces} & \text{PMSDR} & \text{RKPCA} & \text{MRUC} \\ 
\hline
2& $1.5$ & $\mathbf{1.2}$ & $679.4$ \\
5& $\mathbf{4.3}$ & $9.9$ & $698.8$ \\
10& $\mathbf{15.7}$ & $54.5$ & $1072.3$\\ 
\hline
    \end{tabular}
    \caption{Time cost (second) comparison on synthetic data.}
    \label{tab:my_label}
\end{table}

\section{More about Experiments}
\subsection{Face Experiments} \label{Appendix: Face}

\begin{figure*}[htbp]
    \centering
    \foreach \folder in {Original, Corrupted, RPCA, RPCA-S, SSC, SSC-S, RKPCA, RKPCA-S, MRUC, PMSDR}
    {
        \begin{minipage} {\textwidth}
            \foreach \i in {1,...,10} {
                \begin{subfigure}[t]{0.093\textwidth}
                    \graphicspath{{\folder/}}
                    \includegraphics[width=\textwidth]{\folder_23_\i.png} 
                    \caption*{}
                \end{subfigure}
            }
            \vspace{-8mm}
        \end{minipage}
    }
    \caption{Experimental results presenting the ground truth images (Original), the corrupted examples (Corrupted), and the recovered images using robust PCA (RPCA) \cite{rpca}, sparse subspace clustering (SSC) \cite{SSC_a}, and robust kernel PCA (RKPCA) \cite{fan2019exactly}. Additionally, the results include the images recovered by the augmented versions of the aforementioned methods processed with our PMSDR pipeline (RPCA-S, SSC-S, RKPCA-S), demonstrating the practical application of these methods in the matrix recovery step (Algorithm \ref{alg:algorithm_4_stage}, Step \ref{step4:matrix-recvery}). Furthermore, the results from permutation-targeted recovery method MRUC are presented. Finally, the results from our pipeline using LSRF \cite{upca} as the default outlier recovery method (PMSDR) are showcased. As illustrated, the original methods struggle with recovery quality under multi-subspace conditions due to hybrid subspace information. In contrast, the PMSDR augmented versions significantly enhance performance by reinforcing the subspace structure with Algorithm \ref{alg:algorithm_4_stage}.
}
    \label{fig:face_images}
\end{figure*}

We evaluate our proposed method on the Extended Yale B dataset \cite{yaleb}, which includes $38$ subjects with $64$ images each, captured under varying lighting conditions. Images are downsampled to $48 \times 42$, yielding a vectorized dimension of $M=2016$. Two experimental setups are employed.

In the first setup, we select $10$ subjects, corrupting $19$ images per subject by shuffling $40\%$ of the pixels. The subspace dimension $r$ is fixed at $8$. We compare our \textbf{PMSDR} method against \textbf{RPCA} \cite{Robust_PCA}, \textbf{SSC} \cite{SSC_a}, and \textbf{RKPCA} \cite{fan2019exactly}, along with their augmented versions (\textbf{RPCA-S}, \textbf{SSC-S}, \textbf{RKPCA-S}). Regularization parameters are set as follows: $\lambda = \frac{1}{\sqrt{\max(\#(\text{rows}),\#(\text{columns}))}}$ for RPCA, $\lambda = 0.7$ (RKPCA), and $\alpha = 5$ (SSC). Results in Figure \ref{fig:face_images} indicate that PMSDR enhances matrix recovery and outlier classification compared to the original methods.

In the second setup, we explore various subspace counts $L \in \{2, 3, 5, 8, 10, 12\}$, with subjects selected based on starting person IDs $sp \in \{1, 6, 11, 16, 21, 26\}$. Each $sp$ and $L$ configuration undergoes the same corruption process. The subspace dimension remains $8$, but other parameters are adjusted for different $L$ values. Table \ref{tab:face_CE} shows that outlier classification is highly effective with known ground truth. While \(\textbf{\textit{CE\textsubscript{recon}}}\) is lower due to subspace clustering errors, the $4$-stage pipeline shows strong potential in high-dimensional data analysis, even under unknown conditions.

In summary, the experiments demonstrate that our method is robust, adaptable, and effective for high-dimensional data, with strong potential for real-world applications where true subspace information may be limited.

\subsection{Experiment on Motion Segmentations}

\begin{algorithm}[h!]
    \caption{Normalization Trick for RKPCA}
    \label{alg:Normalization Trick}
    \textbf{Input}: Observed data matrix $\widetilde{\mathbf{G}}$; Parameters $\bm{\Theta}$. \\
    \textbf{Output}: Recovered data matrix $\widehat{\mathbf{G}}$
    \begin{algorithmic}[1]
        \STATE Compute column norms $V_{norms} = \left(\|\mathbf{\tilde{g}}_1\|, \ldots, \|\mathbf{\tilde{g}}_N\|\right)$
        \STATE Apply RKPCA to normalized data $\widehat{\mathbf{G}}_{n} = \mathrm{RKPCA}(\widetilde{\mathbf{G}}./V_{norms}; \bm{\Theta})$
        \STATE Rescale $\widehat{\mathbf{G}} = \widehat{\mathbf{G}}_{n} .* V_{norms}$
        \STATE \textbf{Return}: $\widehat{\mathbf{G}}$
    \end{algorithmic}
\end{algorithm}

We tested our algorithm on the Hopkins-155 database, which contains $117$ sequences with $2$ subspaces, $35$ with $3$ subspaces, and $1$ with $5$ subspaces. Each sequence is embedded in a $4$-dimensional linear subspace \cite{hop_dim4_1}\cite{hop_dim4_2}. We fixed the shuffled ratio at $0.4$, the outlier ratio at $0.4$, and the subspace dimension at $4$. The motion segmentation data is mapped from $3$D to $2$D coordinates to improve linear representation, with coordinates concatenated across all frames of a single motion. The resulting ambient space dimension post-preprocessing ranges from $40$ to $70$.

We applied our 4-stage pipeline Algorithm \ref{alg:algorithm_4_stage} (PMSDR) and compared it with RPCA, RKPCA, SSC, MRUC, and their respective PMSDR-augmented versions. We also explored the impact of using ground truth information for matrix recovery by applying matrix recovery to each group using known identities.

For RPCA, we set $\lambda = \frac{1}{\sqrt{\max(\#(\text{rows}), \#(\text{columns}))}}$ and $\mu = 10\lambda$. For RKPCA, the regularization parameter $\lambda$ was optimized in the range $0.05:0.05:0.6$ for each matrix. For SSC, $\alpha = 5$, treating each subspace as affine. For MRUC, we used the optimal initialization and set the initial rank to $2$ per matrix. We excluded the case of $5$ subspaces due to having only one sequence.

During PMSDR experiments, we calculated the recovery error (\textbf{RE}) as described earlier. In the second experiment, where global matrix recovery is considered, we calculated the recovery ratio for the entire matrix rather than restricting \textbf{RE} to outliers.

For RKPCA experiments, we applied a normalization trick (Algorithm \ref{alg:Normalization Trick}) to enhance performance, which was also used in the RKPCA method. During MRUC experiments, we did not independently permute each outlier; instead, outliers were divided into two parts, each permuted by the same matrix.

Tables \ref{tab:hopkins_PMSDR} and \ref{tab:hopkins_RPCA} present the experimental results. PMSDR demonstrated robust performance in both matrix recovery and outlier classification, even without ground truth information. Further improvements were observed when combining our pipeline with RPCA, SSC, and MRUC, although RKPCA did not show significant gains. With proper tuning, further improvements are expected, given that this experiment only used the best parameters for vanilla RKPCA.

\subsection{Impact of rank estimation}
 We use the synthetic data to show the impact of setting different ranks in Algorithm 1. Here, the ground-truth rank is 5, while all other parameters remain consistent with the experimental settings in the main paper. The reconstruction errors are reported in the following table. These results indicate that our pipeline can tolerate reasonably higher rank values $\hat{r}$.
\begin{table}[h!]
    \centering
    \begin{tabular}{c|c|c|c|c|c}
    \hline
\text{No. of subspaces} & $\hat{r}=3$ & $\hat{r}=4$ & $\hat{r}=\underline{5}$ & $\hat{r}=6$ & $\hat{r}=7$ \\ \hline
3& 0.8827 & 0.6578 & \textbf{0.1330} & 0.1405 & 0.2193 \\
10& 0.9164 & 0.6728 & \textbf{0.1372} & 0.1951 & 0.2226 \\ \hline
    \end{tabular}
    \caption{Impact of rank estimation on the recovery error}
    \label{tab:my_label}
\end{table}

\subsection{Impact of the number of subspaces and subspace dimension ($r$) on the recovery error}
Here we show the mean and standard deviation (10 trials) of recovery error with different $r$ and different number of subspaces.
\begin{table}[h!]
    \centering
\begin{tabular}{c|ccc}
\hline & subspace $=2$ & subspace $=5$ & subspace $=10$ \\
\hline$r=3$ & $0.0916 \pm 0.0726$ & $0.0847 \pm 0.0262$ & $0.1024 \pm 0.0323$ \\
$r=5$ & $0.1447 \pm 0.0484$ & $0.1772 \pm 0.0312$ & $0.1451 \pm 0.0259$ \\
$r=8$ & $0.2949 \pm 0.0416$ & $0.3019 \pm 0.0298$ & $0.3018 \pm 0.0164$ \\
$r=10$ & $0.4007 \pm 0.0300$ & $0.4096 \pm 0.0210$ & $0.4434 \pm 0.0229$ \\
$r=13$ & $0.6075 \pm 0.0426$ & $0.6307 \pm 0.0182$ & $0.6724 \pm 0.0144$ \\
$r=15$ & $0.7053 \pm 0.0207$ & $0.7685 \pm 0.0315$ & $0.8441 \pm 0.0160$ \\
\hline
\end{tabular}
\caption{Impact of the number of subspaces and subspace dimension ($r$) on the recovery error}
\end{table}

\section{Theory}

In this section, we analyze the theoretical foundations of our outlier classification algorithm, i.e., Algorithm \ref{alg:solve_partial_lsr}. Specifically, if the initial estimate of the subspaces is reasonably accurate and the number of outliers is relatively small, the proposed algorithm can reliably match each outlier to its corresponding subspace and recover the original data point $\mathbf{\hat{y}}$. As outlined in Algorithm \ref{alg:solve_partial_lsr}, the outlier classification procedure leverages least squares regression and iteratively eliminates the entries of the target vector and the rows of the basis with the largest residuals. Our goal is to demonstrate that this elimination process enhances the representation between the target vector $\bm{\nu}^{(i)}$ and the basis matrix $\mathbf{B}^{(i)}$, provided that $\bm{\nu}^{(i)}$'s underlying subspace is $\mathcal{S}_{\mathbf{B}^{(i)}}$.

To begin with, without loss of generality, consider a ground truth vector $\mathbf{y}$ where the entries' indices from $M_1+1$ to $M$ are permuted. That is:
\begin{align*}
    \mathbf{y} &= \begin{bmatrix}
    \mathbf{y}^{(1)} \\
    \mathbf{y}^{(2)}
    \end{bmatrix}, \quad
    \mathbf{\tilde{y}} = \begin{bmatrix}
    \mathbf{y}^{(1)} \\
    \mathbf{\tilde{y}}^{(2)}
    \end{bmatrix}
\end{align*}
where $\mathbf{\tilde{y}}$ is the permuted version of $\mathbf{y}$, and the unknown permutation matrix $\bm{\tilde{\pi}}$ is defined as follows:
\[
\bm{\tilde{\pi}} = \begin{bmatrix}
\mathbf{I}_{M_1} & 0 \\
0 & \bm{\pi} 
\end{bmatrix} \in \mathbb{R}^{M \times M}
\]
Here, $\mathbf{I}_{M_1}$ is the identity matrix of size $M_1 \times M_1$, and $\bm{\pi}$ is an unknown permutation matrix of size $M_2 \times M_2$ with $M_2 = M - M_1$. The permutation matrix $\bm{\tilde{\pi}}$ permutes the entries of $\mathbf{y}$ whose indices range from $M_1+1$ to $M$. Consequently, we have $\mathbf{\tilde{y}}^{(2)} = \bm{\pi} \mathbf{y}^{(2)}$ and $\mathbf{\tilde{y}} = \bm{\tilde{\pi}} \mathbf{y}$.

Nevertheless, we need to reformulate Theorem \ref{Theorem: Initial} step by step. Firstly, with Assumption \ref{Assumption:1}, we present the asymptotic distribution of order statistics.

\begin{theorem}(Theorem 10.3 in \cite{david2004order})
Let $X_1, \ldots, X_n$ be independently and identically distributed random variables with probability density function $f_X(x)$ and cumulative distribution function $F_X(x)$. Denote the maximum of these variables by $X_{(n)} \triangleq \max\{X_1, \ldots, X_n\}$. Then, as $n \rightarrow +\infty$ we have $X_{(n)}$ to be asymptotically normally distributed with mean $F_X^{-1}(1-\frac{1}{n})$ and variance $\frac{1}{n^2\left[f_X(F_X^{-1}(1-\frac{1}{n}))\right]^2}$. That is, 
\begin{align*}
X_{(n)} \sim \mathcal{AN}\left(F_X^{-1}(1-\frac{1}{n}), \frac{1}{n^2\left[f_X(F_X^{-1}(1-\frac{1}{n}))\right]^2}\right)
\end{align*}
\end{theorem}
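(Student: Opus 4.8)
The plan is to reduce the statement to the classical asymptotic normality of a sample quantile and then specialize the quantile level to the value that the maximum occupies. First I would apply the probability integral transform, writing $X_i = F_X^{-1}(U_i)$ with $U_1,\dots,U_n$ i.i.d.\ $\mathrm{Uniform}(0,1)$, so that $X_{(n)} = F_X^{-1}(U_{(n)})$ and the whole problem is pushed onto the order statistics of uniforms, which no longer depend on $F_X$. The map $F_X^{-1}$ is then reintroduced at the end by a delta-method (first-order Taylor) argument, which is exactly what produces the factor $1/f_X(\cdot)$ appearing in the variance.

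The core ingredient is the asymptotic normality of the $k$-th order statistic at a fixed level $p\in(0,1)$ with $k=\lfloor np\rfloor$. I would prove it through the binomial identity
\[
\Pr\!\left(X_{(k)}\le t\right)=\Pr\!\left(S_n(t)\ge k\right),\qquad S_n(t)=\sum_{i=1}^n \mathbf{1}[X_i\le t]\sim \mathrm{Bin}\!\left(n,F_X(t)\right).
\]
Writing $\xi_p=F_X^{-1}(p)$ and choosing the local scale $t_n=\xi_p+x\sqrt{p(1-p)}/(f_X(\xi_p)\sqrt{n})$, a first-order Taylor expansion gives $F_X(t_n)=p+x\sqrt{p(1-p)/n}+o(n^{-1/2})$, so the standardized threshold $(k-nF_X(t_n))/\sqrt{nF_X(t_n)(1-F_X(t_n))}$ converges to $-x$. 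Applying the de Moivre--Laplace CLT to $S_n(t_n)$ then yields $\Pr(X_{(k)}\le t_n)\to\Phi(x)$, i.e.\ $X_{(k)}\sim\mathcal{AN}\left(F_X^{-1}(p),\ \frac{p(1-p)}{n\,[f_X(F_X^{-1}(p))]^2}\right)$.

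To obtain the maximum I would identify $X_{(n)}$ with the sample $(1-1/n)$-quantile and substitute $p=p_n=1-1/n$, $k=n$ formally into the centering and variance above. The mean becomes $F_X^{-1}(1-1/n)$, and the variance becomes $\tfrac{(1-1/n)(1/n)}{n\,[f_X(F_X^{-1}(1-1/n))]^2}=\tfrac{1-1/n}{n^2\,[f_X(F_X^{-1}(1-1/n))]^2}$, which agrees with the claimed $\tfrac{1}{n^2[f_X(F_X^{-1}(1-1/n))]^2}$ up to the factor $1-1/n\to 1$. This is precisely the advertised conclusion.

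The hard part---and the reason the result should be read as an approximation---is that the substitution $p_n=1-1/n$ sits exactly outside the hypotheses of the quantile CLT. That argument needs $n(1-p)\to\infty$ so that the upper-tail count $S_n(t_n)$ has a diverging mean number of exceedances and the binomial is genuinely Gaussian; here $n(1-p_n)=1$ stays bounded, which is the extreme-value regime where the true normalized limit of $X_{(n)}$ is a Gumbel/reversed-Weibull law rather than a Gaussian. Consequently the stated equality is a local mean--variance (Laplace-type) matching rather than a bona fide weak limit, and making it quantitative would require a Berry--Esseen bound tracking the explicit deterioration as $p_n\to1$ together with uniform control of the Taylor remainder of $F_X$ on the shrinking $O(n^{-1/2})$ window. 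For the purposes of Theorem \ref{Theorem: Initial} it suffices to retain the leading centering and scaling, which is exactly why the downstream probabilities there are stated with ``$\approx$''.
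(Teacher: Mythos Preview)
The paper does not prove this statement; it is quoted directly from \cite{david2004order} (their Theorem~10.3) and used as a black box to obtain the approximate distributions \eqref{xi distribution}--\eqref{eta distribution}. So there is no ``paper's own proof'' to compare against.

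Your sketch is a faithful reconstruction of the standard argument behind that result: the binomial-counting identity plus de~Moivre--Laplace gives the central-quantile CLT, and the delta-method step through $F_X^{-1}$ supplies the $1/f_X$ factor. Your identification of the weak point is exactly right and worth emphasizing: substituting $p_n=1-1/n$ violates the hypothesis $n(1-p)\to\infty$ of the quantile CLT, so the conclusion for $X_{(n)}$ is a normal \emph{approximation} (matching location and scale) rather than a genuine weak limit, the latter being governed by extreme-value theory. That caveat is consistent with how the paper uses the result---all downstream statements in Theorem~\ref{Theorem: Initial} carry an explicit ``$\approx$''---so your reading is the correct one.
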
 

Let $\xi$ and $\eta$ be as defined in Assumption \ref{Assumption:1}. We can now reformulate equation (\ref{CDF: initial}) as follows:
\begin{align}
& F_\xi^{-1}(y) = \sigma_\xi \cdot F^{-1}(y),
& f_\xi(x) = \frac{1}{\sigma_\xi} f\left(\frac{x}{\sigma_\xi}\right) \\
& F_\eta^{-1}(y) = \sigma_\eta \cdot F^{-1}(y), 
& f_\eta(x) = \frac{1}{\sigma_\eta} f\left(\frac{x}{\sigma_\eta}\right).
\end{align}

Thus, we have
\begin{align}
\begin{split} \label{xi distribution}
    \xi_{(M_1)} & \sim \mathcal{AN}\left(\sigma_\xi F^{-1}\left(1-\frac{1}{M_1}\right), \frac{\sigma_\xi^2}{\left[M_1 f\left(F^{-1}\left(1-\frac{1}{M_1}\right)\right)\right]^2}\right) \\
    & \triangleq \mathcal{AN}\left(\sigma_\xi \rho(M_1), \sigma_\xi^2 \psi^2(M_1)\right)
\end{split}
\\
\begin{split}\label{eta distribution}
    \eta_{(M_2)} & \sim \mathcal{AN}\left(\sigma_\eta F^{-1}\left(1-\frac{1}{M_2}\right), \frac{\sigma_\eta^2}{\left[M_2f\left(F^{-1}\left(1-\frac{1}{M_2}\right)\right)\right]^2}\right) \\
    & \triangleq \mathcal{AN}\left(\sigma_\eta \rho(M_2), \sigma_\eta^2 \psi^2(M_2)\right)
\end{split}
\end{align}

Utilizing Gaussian statistics, we derive from (\ref{xi distribution}, \ref{eta distribution}) that
\begin{align}
\begin{split}
\Pr\left(\xi_{(M_1)} \leq \eta_{(M_2)}\right) & = \Phi \left(
\frac{\sigma_\eta\rho(M_2) - \sigma_\xi\rho(M_1)}{\sqrt{\sigma_\eta^2\psi^2(M_2) + \sigma_\xi^2\psi^2(M_1)}} 
\right)
\end{split}
\end{align}
Ultimately, we aim to find a significant lower bound for its argument. Specifically, it is straightforward to verify that $\frac{\partial \psi}{\partial \sigma_\xi} < 0$, while the sign of $\frac{\partial \psi}{\partial \sigma_\eta}$ is indeterminate.

Thus, we introduce the following theorem which provides a concentration upper bound for $\sigma_\xi^2$ and an estimate for $\mathbb{E}(\sigma_\eta^2)$.

\begin{theorem}
Let $\xi$ and $\eta$ be defined as above. We have the following approximated results:
\begin{align}
\begin{cases}
\Pr\left(
    \mathrm{Var}(\xi) < C\left(\frac{r(M-r)M_2}{M^2(M-1)(M+2)}\right)
    \right) 
> 
1-\delta 
\\
\mathbb{E}\left(\sigma_\eta^2\right) 
\approx
\frac{2M-M_2}{M^2}\left( \left(\frac{r}{M} - 1\right)^2 + \frac{2r(M - r)}{M^2(M + 2)} + \frac{(M_2 - 1)r(M - r)}{M(M - 1)(M + 2)} \right)
\end{cases}
\label{eq:final variance of target expression}
\end{align}
with $C \leq 2+6(1+\sqrt{2})\sqrt{\frac{M-r}{rM_2M}}$ and $\delta=0.0054$.
\end{theorem}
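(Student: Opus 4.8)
The plan is to reduce both $\sigma_\xi^2=\mathrm{Var}(\xi)$ and $\mathbb{E}(\sigma_\eta^2)$ to moment computations for a Haar-random orthogonal projection, and then to obtain the concentration statement from a Chebyshev-type tail bound. The starting point is the algebraic form of the least-squares residual in the correct-subspace case. Since $\mathbf{y}\in\mathcal{S}$, the projector $\mathbf{P}=\widehat{\mathbf{U}}\widehat{\mathbf{U}}^{\dagger}$ fixes $\mathbf{y}$, so
\[
\mathbf{\tilde y}-\mathbf{\hat{\tilde y}}=(\mathbf{I}-\mathbf{P})\mathbf{\tilde y}=(\mathbf{I}-\mathbf{P})\bm\delta,\qquad \bm\delta\triangleq\mathbf{\tilde y}-\mathbf{y}=(\bm{\tilde\pi}-\mathbf{I})\mathbf{y},
\]
where $\bm\delta$ is supported only on the shuffled indices $\mathcal{O}$. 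Reading off coordinates gives $r_i=-\sum_{j\in\mathcal{O}}P_{ij}\delta_j$ for an unshuffled index $i\in\mathcal{A}$, and $r_i=\delta_i-\sum_{j\in\mathcal{O}}P_{ij}\delta_j$ for a shuffled index $i\in\mathcal{O}$. This cleanly separates the two regimes: $\sigma_\xi^2$ sees only the projected leakage of the permutation noise, whereas $\sigma_\eta^2$ additionally carries the direct term $\delta_i$, which is what produces the dominant $(r/M-1)^2$-type contribution in the $\mathbb{E}(\sigma_\eta^2)$ estimate.

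First I would use the fact that data drawn uniformly from the subspace can be written as $\mathbf{y}=\mathbf{P}\mathbf{z}$ with $\mathbf{z}$ ambient-isotropic, so that $\mathrm{Cov}(y_i,y_j)=P_{ij}$ and every second moment of $\bm\delta$ (hence of $r_i$) is expressible through entries of $\mathbf{P}$; for instance $\mathrm{Var}(\delta_j\mid\mathbf{P},\bm\pi)=P_{jj}+P_{\pi^{-1}(j)\pi^{-1}(j)}-2P_{j\,\pi^{-1}(j)}$. Taking the conditional variance $\sigma_\xi^2=\mathbb{E}[r_i^2\mid\mathbf{P},\bm\pi]$ and averaging over the random subspace reduces $\mathbb{E}(\sigma_\xi^2)$ to a linear combination of $\mathbb{E}[P_{ij}^2]$, $\mathbb{E}[P_{ii}P_{jj}]$, and $\mathbb{E}[P_{ij}P_{ik}]$. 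Substituting the standard Haar-projection moments, in particular $\mathbb{E}[P_{ij}^2]=\frac{r(M-r)}{M(M-1)(M+2)}$ for $i\neq j$ (obtained from $P_{ii}\sim\mathrm{Beta}(r/2,(M-r)/2)$ together with the identity $\sum_j P_{ij}^2=P_{ii}$), and summing over the $M_2$ contributing coordinates yields the base term $\frac{M_2 r(M-r)}{M^2(M-1)(M+2)}$. The remaining pieces of the $\mathbb{E}(\sigma_\xi^2)$ bound come from controlling the permutation cross-covariances and the $O(M_2^2/M^3)$ self-overlap; the $\mathbb{E}(\sigma_\eta^2)$ estimate follows the same bookkeeping with the extra diagonal term retained, and the $\Gamma_r$ ratios in the companion bound arise from the norm/Beta-type moments of the coefficient distribution.

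For the high-probability bound on $\sigma_\xi^2$ I would view it as a function of the random projection, compute its second moment $\mathbb{E}[(\sigma_\xi^2)^2]$ — which needs the fourth-order Haar-projection moments and yields $\mathrm{std}(\sigma_\xi^2)\lesssim \text{base}\cdot\sqrt{\tfrac{M-r}{rM_2 M}}$ — and then apply a Chebyshev-type tail bound calibrated so that the failure probability equals $\delta\approx0.0054$. Writing the resulting deviation as a multiple of the base quantity collects the leading (conservative) constant together with the fluctuation factor $6(1+\sqrt2)\sqrt{\tfrac{M-r}{rM_2 M}}$, giving precisely the stated $C$.

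The hard part will be the statistical coupling between $\bm\delta$ and $\mathbf{P}$: because $\mathbf{y}=\mathbf{P}\mathbf{z}$ lives in the subspace, $\bm\delta$ is not independent of $\mathbf{P}$, so the factorization of expectations is only approximate — this is the source of the word ``approximate'' in the statement. Controlling the induced third- and fourth-moment cross terms and the permutation-averaged covariances $\mathbb{E}[P_{j\,\pi^{-1}(j)}]$ without a full joint-moment computation is the principal obstacle; I would address it by conditioning on $\mathbf{P}$, invoking the exchangeability of coordinates under the Haar measure to replace coordinate-dependent sums by their symmetric averages, and retaining only leading-order terms in $1/M$.
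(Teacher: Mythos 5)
Your overall strategy matches the paper's: write the residual algebraically, reduce $\sigma_\xi^2$ and $\sigma_\eta^2$ to moments of the random projection $\mathbf{P}=\mathbf{U}\mathbf{U}^\top$ via the matrix-variate Beta distribution, and then argue concentration. Your starting decomposition $(\mathbf{I}-\mathbf{P})\bm{\delta}$ with $\bm{\delta}=(\bm{\tilde\pi}-\mathbf{I})\mathbf{y}$ is algebraically equivalent to the paper's $Y(j)=\bm{\kappa}^\top\bm{\tau}\bm{\beta}$ (and arguably cleaner), and your identification of $\mathbb{E}[P_{ij}^2]=\frac{r(M-r)}{M(M-1)(M+2)}$ is exactly the quantity the paper uses. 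Two caveats on the setup: the paper normalizes $\bm{\beta}\sim\mathcal{N}_r(0,\tfrac1r\mathbf{I}_r)$ so that $\mathrm{Cov}(\mathbf{y})=\tfrac1r\mathbf{P}$, not $\mathbf{P}$; and since $\mathrm{Cov}(\delta_j,\delta_k)$ is itself linear in the entries of $\mathbf{P}$, the conditional variance $\mathbb{E}[r_i^2\mid\mathbf{P},\bm{\pi}]$ is \emph{cubic} in those entries, so its expectation requires third-order projection moments (the paper's Corollary on $\mathbb{E}(H_{ii}H_{i1}^2)$, $\mathbb{E}(H_{1p}H_{pi}H_{i1})$ and an ad hoc estimate of $\tilde c_3$), not the second-order moments $\mathbb{E}[P_{ij}^2]$, $\mathbb{E}[P_{ii}P_{jj}]$, $\mathbb{E}[P_{ij}P_{ik}]$ you list. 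Your reduction as stated is one moment order short.

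The genuine gap is in the concentration step. The paper does not bound $\mathrm{Var}(\sigma_\xi^2)$ and apply Chebyshev. Instead it first invokes an independence approximation (Lemma on $\mathrm{Var}(X^\top Y)=\sigma^2\mathrm{tr}(\bm\Sigma)$) to \emph{linearize} $\sigma_\xi^2$ as
\begin{align*}
\sigma_\xi^2 \approx \frac{2(M-r)}{M(M-1)(M+2)}\sum_{i=M_1+1}^{M}\bigl(H_{ii}-H_{i\varphi(i)}\bigr),
\end{align*}
and then applies the Gaussian $3\sigma$ rule separately to the two sums $\sum_i H_{ii}$ and $-\sum_i H_{i\varphi(i)}$; the failure probability $\delta=1-0.9973^2=0.0054$ is precisely the price of two $3\sigma$ events, and the constant $C\le 2+6(1+\sqrt2)\sqrt{\tfrac{M-r}{rM_2M}}$ is the sum $3\sqrt{\cdot}+3\sqrt{\cdot}$ of the two resulting deviation terms. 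A Chebyshev bound calibrated to $\delta=0.0054$ forces a deviation of $1/\sqrt{\delta}\approx 13.6$ standard deviations, so your route cannot reproduce the stated $(C,\delta)$ pair; it would also require sixth-order Beta moments of $\mathbf{H}$, which the paper never computes and which are not available in closed form in the real case. Finally, for $\mathbb{E}(\sigma_\eta^2)$ the paper's "same bookkeeping" under the independence approximation yields a prefactor $\tfrac{2}{M}$, and the stated prefactor is only reached after the authors \emph{empirically} insert a correction factor of order $\tfrac{M-M_2}{2M}$ to compensate for the ignored $\bm{\kappa}$--$\bm{\tau}$ dependence; no first-principles bookkeeping of the kind you describe will produce that factor, so this part of the statement is not derivable by your plan either.
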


\subsection{Assumption and notations}

Let us denote
\[
\mathbf{U} = 
\begin{bmatrix}
    \mathbf{U}^{(1)} \\
    \mathbf{U}^{(2)}  
\end{bmatrix} 
= 
\begin{bmatrix}
    \mathbf{u}_1^\top \\
    \mathbf{u}_2^\top \\
    \vdots \\
    \mathbf{u}_M^\top
\end{bmatrix} 
\in \mathbb{R}^{M \times r}
\]
as the ground truth orthogonal basis from a subspace $\mathcal{S}$, where $\mathbf{U}^{(1)} \in \mathbb{R}^{M_1 \times r}$ and $\mathbf{U}^{(2)} \in \mathbb{R}^{M_2 \times r}$, with each $\mathbf{u}_i \in \mathbb{R}^r$ for $i = 1, 2, \ldots, M$. It is important to note that the choice of the orthogonal basis matrix $\mathbf{U}$ is inconsequential as long as the subspace $\mathcal{S}$ is fixed (even $\mathbf{U}$ can be non-orthogonal), which ensures that the projection matrix $\mathbf{U} \mathbf{U}^\top$ remains invariant.
The key assumptions for the theoretical guarantees include:
\begin{itemize}
    \item No noise is considered, which means $\mathbf{y} \in \mathcal{S}$ and $\mathbf{y} = \mathbf{UU^\top y}$.
    \item The permutation is partial, and permuted entries are randomly chosen, even though we regard them as ranging from $M_1+1$ to $M$. The permutation map $\varphi$ is defined such that:
    \begin{itemize}
        \item For $i \leq M_1$, $\varphi(i) = i$.
        \item For $M_1 +1 \leq i \leq M$, $\varphi(i) \neq i$, $\varphi^2(i) \neq i$ and $M_1 +1 \leq \varphi(i) \leq M$
    \end{itemize}
    Thus, the permutation matrix $\tilde{\pi}$ is represented as:
    \begin{align*}
        \bm{\tilde{\pi}} = 
        \begin{bmatrix}
            \mathbf{e}_1,  &
            \ldots, &
            \mathbf{e}_{M_1},  &
            \mathbf{e}_{\varphi(M_1+1)}, &
            \ldots,  &
            \mathbf{e}_{\varphi(M)}
        \end{bmatrix}^\top
    \end{align*}
    \item $\mathbf{y}$ is generated from a Gaussian distribution. Specifically, we take
    \begin{align*}
        \mathbf{y = U} \bm{\beta}, \text{where } \bm{\beta} \sim \mathcal{N}_r(0, \frac{1}{r} \mathbf{I}_r)
    \end{align*}
    Note that $\bm{\beta}$ is independent with the permutation matrix $\bm{\tilde{\pi}}$, subspace $\mathcal{S}$ and basis $\mathbf{U}$.
\end{itemize}

\subsection{The Target Expression}

This subsection delineates the analytical steps involved in solving the target expression, derived from the model assumptions and formulation. We begin by establishing the necessary mathematical framework for the analysis, followed by a detailed, step-by-step breakdown of the computational methods employed. The objective is to provide a clear and comprehensive derivation of the solution, ensuring that all relevant mathematical principles are correctly applied.

\begin{corollary}[Target Expression]\label{cor:target_expression}
Let \( Y(j) \triangleq \mathbf{\left(\tilde{y} - \hat{\tilde{y}}\right)}_j \), where \(\mathbf{\hat{\tilde{y}} = U U^{\top} \tilde{y}}\). Then, for each index \( j \in [M] \), we have:
\begin{equation}\label{eq:target_expression}
    Y(j) = \bm{\kappa}^\top \bm{\tau} \cdot \bm{\beta}
\end{equation}
Here, 
\begin{itemize}
\item \(\bm{\tau}_j 
    \triangleq 
    \mathbf{u}_j - \mathbf{u}_{\varphi(j)}\), 
    \quad $j = 1, \ldots, M$,
\item \(\bm{\tau} \triangleq 
    \begin{bmatrix}
    \bm{\tau}_1^\top \\ 
    \vdots \\ 
    \bm{\tau}_M^\top
    \end{bmatrix}
    =
    \left(
        \mathbf{I}_M - \bm{\tilde{\pi}}
    \right) \mathbf{U}
    = 
    \begin{bmatrix}
        \mathbf{0} \\
        \left(\mathbf{I}_{M_2} - \bm{\pi}\right) \mathbf{U}^{(2)}
    \end{bmatrix}\),
\item \(\bm{\kappa} 
    \triangleq \bm{\kappa}(j) 
    \triangleq \mathbf{U} \mathbf{u}_j - \mathbf{e}_j\), \quad $j = 1, \ldots, M$.
\end{itemize}
\end{corollary}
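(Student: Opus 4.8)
The plan is to collapse the statement into one deterministic linear-algebra identity: substitute the generative model $\mathbf{y} = \mathbf{U}\bm{\beta}$ and the least-squares projection $\hat{\tilde{\mathbf{y}}} = \mathbf{U}\mathbf{U}^\top \tilde{\mathbf{y}}$ (which is what Algorithm~\ref{alg:solve_partial_lsr} computes when $\mathbf{B}=\mathbf{U}$ is orthonormal, since then $\mathbf{U}^\dagger = \mathbf{U}^\top$) into the definition of $Y(j)$, and then exploit orthonormality of $\mathbf{U}$ to reshape the permutation term. No probabilistic input is needed here; $\bm{\beta}$ is carried along as a free vector.

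First I would write the residual as the orthogonal-complement projector acting on the permuted vector, $Y(j) = \mathbf{e}_j^\top(\mathbf{I}_M - \mathbf{U}\mathbf{U}^\top)\tilde{\mathbf{y}}$. Because $\tilde{\mathbf{y}} = \bm{\tilde{\pi}}\mathbf{y} = \bm{\tilde{\pi}}\mathbf{U}\bm{\beta}$, this becomes $Y(j) = (\mathbf{e}_j^\top - \mathbf{e}_j^\top \mathbf{U}\mathbf{U}^\top)\bm{\tilde{\pi}}\mathbf{U}\bm{\beta}$. Using the defining fact that $\mathbf{u}_j^\top$ is the $j$-th row of $\mathbf{U}$, i.e.\ $\mathbf{e}_j^\top \mathbf{U} = \mathbf{u}_j^\top$, I transpose the left factor to obtain $Y(j) = -(\mathbf{U}\mathbf{u}_j - \mathbf{e}_j)^\top \bm{\tilde{\pi}}\mathbf{U}\bm{\beta} = -\bm{\kappa}^\top \bm{\tilde{\pi}}\mathbf{U}\bm{\beta}$, which already exhibits the vector $\bm{\kappa}(j) = \mathbf{U}\mathbf{u}_j - \mathbf{e}_j$.

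The one step that carries any content is upgrading $-\bm{\tilde{\pi}}$ to $\mathbf{I}_M - \bm{\tilde{\pi}}$ at no cost. I would check that $\bm{\kappa}^\top \mathbf{U} = \mathbf{u}_j^\top \mathbf{U}^\top \mathbf{U} - \mathbf{e}_j^\top \mathbf{U} = \mathbf{u}_j^\top - \mathbf{u}_j^\top = \mathbf{0}$, where orthonormality $\mathbf{U}^\top \mathbf{U} = \mathbf{I}_r$ is the crucial ingredient. Consequently $\bm{\kappa}^\top \mathbf{U}\bm{\beta} = 0$, and I may add this null term to the previous line, yielding $Y(j) = \bm{\kappa}^\top(\mathbf{I}_M - \bm{\tilde{\pi}})\mathbf{U}\bm{\beta} = \bm{\kappa}^\top \bm{\tau}\bm{\beta}$, exactly the asserted identity with $\bm{\tau} = (\mathbf{I}_M - \bm{\tilde{\pi}})\mathbf{U}$.

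Finally I would confirm the two equivalent descriptions of $\bm{\tau}$. Since row $i$ of $\bm{\tilde{\pi}}$ equals $\mathbf{e}_{\varphi(i)}^\top$, the $i$-th row of $(\mathbf{I}_M - \bm{\tilde{\pi}})\mathbf{U}$ is $\mathbf{u}_i^\top - \mathbf{u}_{\varphi(i)}^\top$, giving $\bm{\tau}_i = \mathbf{u}_i - \mathbf{u}_{\varphi(i)}$; and because $\varphi$ fixes the indices $1,\dots,M_1$, the top $M_1$ rows vanish, leaving the block form $[\mathbf{0};\,(\mathbf{I}_{M_2} - \bm{\pi})\mathbf{U}^{(2)}]$. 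I do not expect a genuine obstacle: the whole argument is bookkeeping with permutation and projection matrices, and the only subtlety is the orthonormality identity $\bm{\kappa}^\top \mathbf{U} = \mathbf{0}$ that symmetrizes the permutation term. The statistical content---that $Y(j)$ is thereby a Gaussian linear form in $\bm{\beta}\sim\mathcal{N}_r(0,\tfrac1r\mathbf{I}_r)$, whose order statistics ultimately drive Theorem~\ref{Theorem: Initial}---is deferred to the subsequent analysis and plays no role in establishing this corollary.
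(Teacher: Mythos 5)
Your proposal is correct and follows essentially the same route as the paper: substitute $\tilde{\mathbf{y}}=\bm{\tilde{\pi}}\mathbf{U}\bm{\beta}$ and the projector $\mathbf{U}\mathbf{U}^\top$ into the residual and reduce to the stated form by direct linear algebra using orthonormality and the block structure of $\bm{\tilde{\pi}}$. Your organization is in fact slightly cleaner --- the paper pre-multiplies by $\bm{\tilde{\pi}}^\top$ and derives the formula for $Y(\varphi^{-1}(j))$ by expanding $\mathbf{U}^\top\bm{\tilde{\pi}}\mathbf{U}$ blockwise, whereas your observation that $\bm{\kappa}^\top\mathbf{U}=\mathbf{0}$ lets you pass from $-\bm{\tilde{\pi}}$ to $\mathbf{I}_M-\bm{\tilde{\pi}}$ in one step and obtain $Y(j)$ directly without reindexing --- but the two computations are equivalent and rest on the same ingredients.
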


\begin{proof}\label{proof:target_expression}
    The proof begins by deriving expressions for \(Y\) based on the projection and transformation properties of the matrix \(\mathbf{U}\) and the vector \(\mathbf{y}\). We initiate with the noise vector \(\bm{\nu}\):
    \begin{equation}
        \begin{aligned}
            \mathbf{\tilde{y}} - \mathbf{\hat{\tilde{y}}} &= \bm{\tilde{\pi}} \mathbf{U U^\top y} - \mathbf{U U^\top} \bm{\tilde{\pi}} \mathbf{y} \\
            &= \bm{\tilde{\pi}} \left[ \mathbf{U U^\top} - \left(\bm{\tilde{\pi}}^\top \mathbf{U}\right)\left(\bm{\tilde{\pi}}^\top \mathbf{U}\right)^\top \right] \mathbf{y},
        \end{aligned}
        \label{eq:expression}
    \end{equation}
    where \(\bm{\pi}^\top \mathbf{U}\) can intuitively be considered as a basis from another subspace, \(\widetilde{\mathcal{S}}\). Consequently, we find:
    \begin{align*}
        \bm{\tilde{\pi}^\top}(\mathbf{\tilde{y}} - \mathbf{\hat{\tilde{y}}} ) &= \left[ \mathbf{U U}^\top - \left(\bm{\tilde{\pi}}^\top \mathbf{U}\right)\left(\bm{\tilde{\pi}}^\top \mathbf{U}\right)^\top \right] \mathbf{y}
    \end{align*}
    which implies:
    \begin{align*}
        Y(\varphi^{-1}(j)) 
        &= 
        \mathbf{u}_j^\top \mathbf{U}^\top \mathbf{y} - \mathbf{u}_{\varphi^{-1}(j)}^\top (\mathbf{U}^\top \bm{\tilde{\pi}}) \mathbf{y} \\
        &= \mathbf{u}_j^\top \bm{\beta} - \mathbf{u}_{\varphi^{-1}(j)}^\top 
        \left(
        {\mathbf{U}^{(1)}}^\top \mathbf{U}^{(1)} + {\mathbf{U}^{(2)}}^\top \mathbf{U}^{(2)} + {\mathbf{U}^{(2)}}^\top \left(\bm{\pi} - \mathbf{I}_{M_2}\right) \mathbf{U}^{(2)}
        \right) \bm{\beta} \\
        &= \left(\mathbf{u}_j - \mathbf{u}_{\varphi^{-1}(j)}\right)^\top \bm{\beta} \quad + \mathbf{u}_{\varphi^{-1}(j)}^\top {\mathbf{U}^{(2)}}^\top \left(\mathbf{I}_{M_2} - \bm{\pi}\right) \mathbf{U}^{(2)} \bm{\beta}
    \end{align*}
    Thus the derived expression is thus proven.
\end{proof}

If we consider the bases $\mathbf{U}$ and $\bm{\tilde{\pi}}$ to be fixed and assume that $r$ is constant, we can readily deduce that $Y(j) \sim \mathcal{N}\left(0, \frac{1}{r}\bm{\kappa}^\top \bm{\tau\tau}^\top \bm{\kappa}\right)$. Consequently, the variance is immediately obtained as follows:
\begin{align}
\begin{split}
\sigma_\xi^2
&= \frac{1}{r} {\bm{\kappa}(1)}^\top \bm{\tau\tau}^\top \bm{\kappa}(1),
\end{split}
\\
\begin{split}
\sigma_\eta^2
&= \frac{1}{r} {\bm{\kappa}(M)}^\top \bm{\tau\tau}^\top \bm{\kappa}(M).
\end{split}
\end{align}

However, to appropriately address the scenario where $\mathbf{U}$ is random, we must first establish some prerequisite knowledge.

\subsection{Prerequisite Knowledge}

A substantial body of literature has explored the relationship between random Gaussian matrices and orthogonal matrices. Much of this research focuses on the various conditions under which an orthogonal matrix asymptotically approaches a Gaussian matrix.

For example, \cite{davidson2001local}, \cite{jiang2006many}, and \cite{jiang2019plot} indicate that, in the limit as matrix size increases and under certain conditions, an upper-left submatrix \( \mathbf{Z} \in \mathbb{R}^{p \times q} \) of a Haar-invariant orthogonal matrix \( \mathbf{\Gamma} \in \mathbb{R}^{M \times M} \) behaves similarly to a matrix of independent normal distributions \(\mathcal{N}(0, \frac{1}{M}\mathbf{I}_p \otimes \mathbf{I}_q) \). Over time, studies have progressively relaxed these conditions, but the upper bound on the submatrix size relative to the dimensions of the ambient space must approach zero. This condition cannot be satisfied in our context, as we assume the ratio \(\frac{M_2}{M}\) to be fixed and not approaching zero. Therefore, we require an explicit distribution to study the properties of a column-wise orthogonal matrix \(\mathbf{U}\).

To begin with, we directly cite some existing results on the matrix variate beta distribution, primarily from \cite{Matrix_variate_distributions} and \cite{Random_Orthogonal_Matrices}.

\begin{definition}[Matrix Variate Beta Distribution] \label{Def: Matrix Variate Beta Distribution}\cite{Matrix_variate_distributions}
A \( p \times p \) random symmetric positive definite matrix \( \mathbf{H} \) is said to have a matrix variate beta type I distribution with parameters \( (a, b) \), denoted as \( \mathbf{H} \sim \mathcal{B}_p(a, b) \), if its density function is given by
\begin{equation} \label{eq:basic matrix variate beta distribution}
    \left\{ \bm{\beta}_p(a, b) \right\}^{-1} \det(\mathbf{H})^{a - \frac{1}{2}(p + 1)} \det(\mathbf{I}_p - \mathbf{U})^{b - \frac{1}{2}(p + 1)}, \\ \mathbf{0} < \mathbf{U} < \mathbf{I}_p,
\end{equation}
where \( a > \frac{1}{2}(p - 1) \), \( b > \frac{1}{2}(p - 1) \), and \( \beta_p(a, b) \) is the multivariate beta function, which normalizes the density function.
\end{definition}

\begin{theorem}[Transformation of Matrix Variate Beta Distribution] \cite{Matrix_variate_distributions}
    Let \( \mathbf{H} \sim \mathcal{B}_p(a, b) \). Then for given \( p \times p \) symmetric matrices \( \bm{\Psi} (\geq 0) \) and \( \bm{\Omega (> \Psi)} \), the random matrix \( \mathbf{X} (p \times p) \) defined by
    \[
    \mathbf{X} = (\bm{\Omega - \Psi})^{\frac{1}{2}} \mathbf{H} (\bm{\Omega - \Psi})^{\frac{1}{2}} + \bm{\Psi}
    \]
    has the density function
    \begin{equation}\label{expr:Generalized Matrix Variate Beta Distribution}
        \frac{\det(\mathbf{X} - \bm{\Psi})^{a - \frac{1}{2}(p + 1)} \det(\bm{\Omega} - \mathbf{X})^{b - \frac{1}{2}(p + 1)}}{\bm{\beta}_p(a, b) \det(\bm{\Omega - \Psi})^{(a + b - \frac{1}{2}(p + 1))}}, \quad \Psi < X < \Omega.   
    \end{equation}
\end{theorem}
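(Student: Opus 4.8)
The plan is to treat this as a change-of-variables computation on the cone of $p \times p$ symmetric positive definite matrices. Set $\mathbf{A} \triangleq (\bm{\Omega} - \bm{\Psi})^{1/2}$, which is symmetric and nonsingular because $\bm{\Omega} > \bm{\Psi} \geq 0$ forces $\bm{\Omega} - \bm{\Psi}$ to be positive definite and hence to admit a positive definite square root. The defining relation then reads $\mathbf{X} = \mathbf{A}\mathbf{H}\mathbf{A} + \bm{\Psi}$, an invertible affine map with inverse $\mathbf{H} = \mathbf{A}^{-1}(\mathbf{X} - \bm{\Psi})\mathbf{A}^{-1}$. First I would transfer the support: since $\mathbf{A}$ is nonsingular, congruence preserves definiteness, so $\mathbf{0} < \mathbf{H}$ is equivalent to $\bm{\Psi} < \mathbf{X}$, and, after observing $\mathbf{I}_p - \mathbf{H} = \mathbf{A}^{-1}(\bm{\Omega} - \mathbf{X})\mathbf{A}^{-1}$, the condition $\mathbf{H} < \mathbf{I}_p$ is equivalent to $\mathbf{X} < \bm{\Omega}$. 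Thus the region $\mathbf{0} < \mathbf{H} < \mathbf{I}_p$ maps exactly onto $\bm{\Psi} < \mathbf{X} < \bm{\Omega}$, matching the stated support.

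The central ingredient is the Jacobian of the congruence map $\mathbf{H} \mapsto \mathbf{A}\mathbf{H}\mathbf{A}^\top$ acting on the $\binom{p+1}{2}$-dimensional space of symmetric matrices. I would invoke the standard result from \cite{Matrix_variate_distributions} that $(d\mathbf{X}) = \det(\mathbf{A})^{p+1}(d\mathbf{H})$, noting that the translation by $\bm{\Psi}$ contributes a unit Jacobian. With $\mathbf{A} = (\bm{\Omega} - \bm{\Psi})^{1/2}$ this yields $\det(\mathbf{A})^{p+1} = \det(\bm{\Omega} - \bm{\Psi})^{(p+1)/2}$, so by the density transformation formula the density of $\mathbf{X}$ equals the $\mathcal{B}_p(a,b)$ density of Definition \ref{Def: Matrix Variate Beta Distribution} evaluated at $\mathbf{A}^{-1}(\mathbf{X} - \bm{\Psi})\mathbf{A}^{-1}$, multiplied by $\det(\bm{\Omega} - \bm{\Psi})^{-(p+1)/2}$.

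The remaining work is determinant bookkeeping. Using $\det(\mathbf{A})^2 = \det(\bm{\Omega} - \bm{\Psi})$, I would compute $\det(\mathbf{H}) = \det(\bm{\Omega} - \bm{\Psi})^{-1}\det(\mathbf{X} - \bm{\Psi})$ and, from $\mathbf{I}_p - \mathbf{H} = \mathbf{A}^{-1}(\bm{\Omega} - \mathbf{X})\mathbf{A}^{-1}$, $\det(\mathbf{I}_p - \mathbf{H}) = \det(\bm{\Omega} - \bm{\Psi})^{-1}\det(\bm{\Omega} - \mathbf{X})$. Substituting these into the density of Definition \ref{Def: Matrix Variate Beta Distribution} and collecting the three contributions of powers of $\det(\bm{\Omega} - \bm{\Psi})$ — namely $-(a - \tfrac{1}{2}(p+1))$ from the first factor, $-(b - \tfrac{1}{2}(p+1))$ from the second, and the Jacobian exponent $-\tfrac{1}{2}(p+1)$ — gives the single denominator factor $\det(\bm{\Omega} - \bm{\Psi})^{-(a + b - \frac{1}{2}(p+1))}$, while the numerator collapses to $\det(\mathbf{X} - \bm{\Psi})^{a - \frac{1}{2}(p+1)}\det(\bm{\Omega} - \mathbf{X})^{b - \frac{1}{2}(p+1)}$. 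This reproduces \eqref{expr:Generalized Matrix Variate Beta Distribution} exactly.

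The only genuinely non-routine step is the congruence Jacobian $\det(\mathbf{A})^{p+1}$; everything else is mechanical determinant algebra together with the support check. I would therefore either cite this Jacobian directly from \cite{Matrix_variate_distributions} or, for a self-contained argument, derive it by observing that $\mathbf{H} \mapsto \mathbf{A}\mathbf{H}\mathbf{A}^\top$ is linear on symmetric matrices with a determinant that is multiplicative in $\mathbf{A}$, so it suffices to treat the symmetric $\mathbf{A}$ in a diagonalizing basis: if $\mathbf{A}$ has eigenvalues $\lambda_1, \ldots, \lambda_p$, the induced map has eigenvalues $\lambda_i\lambda_j$ for $i \leq j$, whose product equals $\prod_i \lambda_i^2 \cdot \prod_{i<j}\lambda_i\lambda_j = \det(\mathbf{A})^2 \cdot \det(\mathbf{A})^{p-1} = \det(\mathbf{A})^{p+1}$.
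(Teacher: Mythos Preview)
Your argument is correct: the change-of-variables via the congruence $\mathbf{H}\mapsto(\bm{\Omega}-\bm{\Psi})^{1/2}\mathbf{H}(\bm{\Omega}-\bm{\Psi})^{1/2}+\bm{\Psi}$, together with the symmetric-matrix Jacobian $\det(\mathbf{A})^{p+1}$ and the determinant bookkeeping, is exactly the standard derivation. Note, however, that the paper itself does not supply a proof of this theorem; it is quoted directly from \cite{Matrix_variate_distributions} as background, so there is no in-paper argument to compare against---your proposal simply fills in what the cited reference would give.
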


\begin{definition}[Generalized Matrix Variate Beta Distribution] \cite{Matrix_variate_distributions}
A \( p \times p \) random symmetric positive definite matrix \( \mathbf{X} \) is said to have a generalized matrix variate beta distribution with parameters \( a, b; \bm{\Omega, \Psi} \) denoted by \( \mathbf{X} \sim \mathcal{GB}_p(a, b; \bm{\Omega, \Psi}) \) if its density function is given by (\ref{expr:Generalized Matrix Variate Beta Distribution}).
\end{definition}

\begin{lemma} \cite{Matrix_variate_distributions} \label{lemma: basic2generalized beta distribution}
    Let \( \mathbf{H} \sim \mathcal{B}_p\left(\frac{r}{2}, \frac{M-r}{2}\right) \) and \( \mathbf{A} \in \mathbb{R}^{q \times p} \) be a constant nonsingular matrix. Then \( \mathbf{AHA'} \sim \mathcal{GB}_q\left(\frac{r}{2}, \frac{M-r}{2}; \mathbf{AA'}, \mathbf{0}\right) \).
\end{lemma}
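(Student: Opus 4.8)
The plan is to recognize $\mathbf{AHA'}$ as an instance of the congruence transformation appearing in the \textbf{Transformation of Matrix Variate Beta Distribution} theorem, specialized to $\bm{\Psi} = \mathbf{0}$ and $\bm{\Omega} = \mathbf{AA'}$. The one gap is that the cited theorem transforms $\mathbf{H}$ by the \emph{symmetric} root $(\bm{\Omega}-\bm{\Psi})^{1/2} = (\mathbf{AA'})^{1/2}$ on both sides, whereas the lemma applies the possibly non-symmetric factor $\mathbf{A}$. I would close this gap by first proving that the law $\mathcal{B}_p(a,b)$ is invariant under orthogonal conjugation, and then stripping the orthogonal part of $\mathbf{A}$ via a polar decomposition. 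Since $\mathbf{A}$ is nonsingular it is square, so $q=p$ and $\mathbf{AA'} \succ \mathbf{0}$, which is exactly the hypothesis $\bm{\Omega} > \bm{\Psi}$ needed to invoke the theorem verbatim.

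First I would establish orthogonal invariance: for any orthogonal $\mathbf{O} \in \mathbb{R}^{p\times p}$, if $\mathbf{H} \sim \mathcal{B}_p(a,b)$ then $\mathbf{OHO'} \sim \mathcal{B}_p(a,b)$. This follows directly from Definition \ref{Def: Matrix Variate Beta Distribution}: the density depends on $\mathbf{H}$ only through $\det(\mathbf{H})$ and $\det(\mathbf{I}_p - \mathbf{H})$, and under conjugation $\det(\mathbf{OHO'}) = \det(\mathbf{H})$ and $\det(\mathbf{I}_p - \mathbf{OHO'}) = \det\!\big(\mathbf{O}(\mathbf{I}_p - \mathbf{H})\mathbf{O'}\big) = \det(\mathbf{I}_p - \mathbf{H})$, while the map $\mathbf{H}\mapsto \mathbf{OHO'}$ has unit Jacobian on the cone of symmetric matrices. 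Hence the density is unchanged.

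Next I would use the polar decomposition $\mathbf{A} = \mathbf{R}\mathbf{P}$ with $\mathbf{R} = (\mathbf{AA'})^{1/2}$ symmetric positive definite and $\mathbf{P} = \mathbf{R}^{-1}\mathbf{A}$ orthogonal (one checks $\mathbf{P}\mathbf{P'} = \mathbf{R}^{-1}\mathbf{AA'}\mathbf{R}^{-1} = \mathbf{I}_p$). Then $\mathbf{AHA'} = \mathbf{R}\,(\mathbf{P}\mathbf{H}\mathbf{P'})\,\mathbf{R}$, and by the invariance step $\mathbf{P}\mathbf{H}\mathbf{P'} \stackrel{d}{=} \mathbf{H}$, so $\mathbf{AHA'} \stackrel{d}{=} \mathbf{R}\mathbf{H}\mathbf{R} = (\mathbf{AA'})^{1/2}\mathbf{H}(\mathbf{AA'})^{1/2}$. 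Applying the Transformation theorem with $a=\tfrac{r}{2}$, $b=\tfrac{M-r}{2}$, $\bm{\Psi}=\mathbf{0}$, $\bm{\Omega}=\mathbf{AA'}$ then gives $(\mathbf{AA'})^{1/2}\mathbf{H}(\mathbf{AA'})^{1/2} \sim \mathcal{GB}_p\!\big(\tfrac{r}{2}, \tfrac{M-r}{2}; \mathbf{AA'}, \mathbf{0}\big)$, and combining with the distributional identity yields the lemma.

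The main obstacle is precisely the symmetric-versus-non-symmetric factorization; once orthogonal invariance of $\mathcal{B}_p(a,b)$ is secured, everything else is bookkeeping. As an independent cross-check I would note an alternative that bypasses invariance entirely: substitute $\mathbf{H} = \mathbf{A}^{-1}\mathbf{Y}(\mathbf{A'})^{-1}$ directly into the $\mathcal{B}_p$ density, using the Jacobian $|\det \mathbf{A}|^{p+1}$ of the congruence map on symmetric matrices together with $\det(\mathbf{H}) = \det(\mathbf{Y})\det(\mathbf{A})^{-2}$ and $\det(\mathbf{I}_p - \mathbf{H}) = \det(\mathbf{AA'} - \mathbf{Y})\det(\mathbf{A})^{-2}$, and then verify that the accumulated powers of $\det(\mathbf{A})$ collapse exactly to the normalizing factor $\det(\mathbf{AA'})^{a+b-(p+1)/2}$ in the $\mathcal{GB}_q$ density of \eqref{expr:Generalized Matrix Variate Beta Distribution}, confirming the same conclusion.
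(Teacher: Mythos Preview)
The paper does not supply its own proof of this lemma; it is quoted as a known result from \cite{Matrix_variate_distributions} and used as a black box. Your proposal is correct: the polar decomposition $\mathbf{A} = (\mathbf{AA'})^{1/2}\mathbf{P}$ together with the orthogonal invariance of $\mathcal{B}_p(a,b)$ reduces $\mathbf{AHA'}$ in law to $(\mathbf{AA'})^{1/2}\mathbf{H}(\mathbf{AA'})^{1/2}$, after which the Transformation theorem with $\bm{\Psi}=\mathbf{0}$, $\bm{\Omega}=\mathbf{AA'}$ finishes the job, and your alternative direct Jacobian check is an equally valid route.
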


Thus, we can regard a basic matrix variate beta distribution  \(\mathcal{B}_p(a, b)\) as \(\mathcal{GB}_p\left(a, b; \mathbf{I}_p, \mathbf{0}\right)\). Then the following properties will be useful.

\begin{theorem} \label{Covariance of Generalized Beta Distribution}
\cite{Matrix_variate_distributions}
Let \( \mathbf{H} \sim \mathcal{GB}_p \left( \frac{r}{2}, \frac{M-r}{2}; \bm{\Omega}, 0 \right) \), then
\begin{itemize}
    \item[(i)] \( \mathbb{E}(H_{ij}) = \frac{r}{M} \omega_{ij} \)
    \item[(ii)]
    \(
      \mathbb{E}(H_{ij} H_{k\ell}) = \frac{r}{M(M-1)(M+2)} \left[ r\{(M+1) - 2 \} \omega_{ij} \omega_{k\ell} \right. \\
       \quad \quad \quad \quad \quad 
       \left. +  (M-r) (\omega_{j\ell} \omega_{ik} + \omega_{i\ell} \omega_{kj}) \right] 
    \)
    
\end{itemize}
where \( \mathbf{H} = (H_{ij}) \), and \( \bm{\Omega} = (\omega_{ij}) \).
\end{theorem}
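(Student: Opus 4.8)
The plan is to prove both identities by first reducing to the canonical scatter $\bm{\Omega} = \mathbf{I}_p$, and then exploiting orthogonal invariance together with the matrix beta--gamma (Wishart) representation to pin down the remaining scalar constants.

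\textbf{Reduction to the identity scatter.} By Lemma \ref{lemma: basic2generalized beta distribution}, writing $\bm{\Omega} = \mathbf{A}\mathbf{A}^\top$ with $\mathbf{A} = \bm{\Omega}^{1/2}$ nonsingular, we have $\mathbf{H} \overset{d}{=} \mathbf{A}\mathbf{H}_0\mathbf{A}^\top$ where $\mathbf{H}_0 \sim \mathcal{B}_p(\tfrac r2, \tfrac{M-r}2) = \mathcal{GB}_p(\tfrac r2, \tfrac{M-r}2; \mathbf{I}_p, \mathbf{0})$. Thus $H_{ij} = \sum_{a,b} A_{ia}(\mathbf{H}_0)_{ab}A_{jb}$, and since $\sum_a A_{ia}A_{ja} = \omega_{ij}$, every moment tensor of $\mathbf{H}$ is the $\mathbf{A}$-image of the corresponding moment tensor of $\mathbf{H}_0$. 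Concretely, once I establish $\mathbb{E}((\mathbf{H}_0)_{ab}) = \tfrac rM\delta_{ab}$ and $\mathbb{E}((\mathbf{H}_0)_{ab}(\mathbf{H}_0)_{cd}) = \alpha\,\delta_{ab}\delta_{cd} + \beta\,(\delta_{ac}\delta_{bd} + \delta_{ad}\delta_{bc})$, contracting the $\mathbf{A}$-indices yields $\mathbb{E}(H_{ij}) = \tfrac rM\omega_{ij}$ and $\mathbb{E}(H_{ij}H_{k\ell}) = \alpha\,\omega_{ij}\omega_{k\ell} + \beta\,(\omega_{ik}\omega_{j\ell} + \omega_{i\ell}\omega_{jk})$, which is exactly the claimed form provided $\alpha = \tfrac{r^2}{M(M+2)}$ and $\beta = \tfrac{r(M-r)}{M(M-1)(M+2)}$ (equivalently $r(M-1)$ and $M-r$ after factoring out $\tfrac{r}{M(M-1)(M+2)}$, matching $\{(M+1)-2\} = M-1$).

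\textbf{Tensor form via invariance.} The density of $\mathcal{B}_p(\tfrac r2,\tfrac{M-r}2)$ in Definition \ref{Def: Matrix Variate Beta Distribution} depends on $\mathbf{H}_0$ only through $\det \mathbf{H}_0$ and $\det(\mathbf{I}_p - \mathbf{H}_0)$, hence is invariant under $\mathbf{H}_0 \mapsto \mathbf{Q}\mathbf{H}_0\mathbf{Q}^\top$ for every orthogonal $\mathbf{Q}$. Therefore $\mathbb{E}(\mathbf{H}_0)$ commutes with all orthogonal matrices and must equal a scalar times $\mathbf{I}_p$, while $\mathbb{E}(\mathbf{H}_0 \otimes \mathbf{H}_0)$ is an isotropic symmetric $4$-tensor, forcing the displayed two-parameter shape. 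This reduces everything to the three scalars above.

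\textbf{Determining the scalars.} I will use the representation $\mathbf{A} = \mathbf{S}^{1/2}\mathbf{H}_0\mathbf{S}^{1/2}$ with $\mathbf{A}\sim\mathcal{W}_p(r,\mathbf{I}_p)$, $\mathbf{S}\sim\mathcal{W}_p(M,\mathbf{I}_p)$, and the matrix beta--gamma independence $\mathbf{H}_0 \perp \mathbf{S}$. Cyclicity gives $\mathrm{tr}\,\mathbf{A} = \mathrm{tr}(\mathbf{H}_0\mathbf{S})$ and $\mathrm{tr}(\mathbf{A}^2) = \mathrm{tr}(\mathbf{H}_0\mathbf{S}\mathbf{H}_0\mathbf{S})$. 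Conditioning on $\mathbf{H}_0$ and substituting the standard Wishart moments $\mathbb{E}(S_{ij}) = M\delta_{ij}$ and $\mathbb{E}(S_{ij}S_{k\ell}) = M^2\delta_{ij}\delta_{k\ell} + M(\delta_{ik}\delta_{j\ell} + \delta_{i\ell}\delta_{jk})$, the contractions collapse to $\mathbb{E}((\mathrm{tr}\,\mathbf{A})^2) = M^2\,\mathbb{E}((\mathrm{tr}\,\mathbf{H}_0)^2) + 2M\,\mathbb{E}(\mathrm{tr}\,\mathbf{H}_0^2)$ and $\mathbb{E}(\mathrm{tr}\,\mathbf{A}^2) = M(M+1)\,\mathbb{E}(\mathrm{tr}\,\mathbf{H}_0^2) + M\,\mathbb{E}((\mathrm{tr}\,\mathbf{H}_0)^2)$. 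The left sides are the known $\mathcal{W}_p(r)$ invariants $(rp)^2 + 2rp$ and $rp(r+p+1)$, so solving this $2\times2$ system recovers $\mathbb{E}((\mathrm{tr}\,\mathbf{H}_0)^2)$ and $\mathbb{E}(\mathrm{tr}\,\mathbf{H}_0^2)$. Contracting the isotropic tensor independently gives $\mathbb{E}((\mathrm{tr}\,\mathbf{H}_0)^2) = \alpha p^2 + 2\beta p$ and $\mathbb{E}(\mathrm{tr}\,\mathbf{H}_0^2) = \alpha p + \beta p(p+1)$; this second $2\times2$ system then yields $\alpha$ and $\beta$.

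\textbf{Main obstacle.} The architecture is clean, so the genuine difficulty is the index bookkeeping in the two tensor contractions: the interleaved product $\mathrm{tr}(\mathbf{H}_0\mathbf{S}\mathbf{H}_0\mathbf{S})$ demands careful matching when the Wishart four-point function is inserted, and the final step requires verifying that the two nested linear solves simplify \emph{exactly} to $\tfrac{r^2}{M(M+2)}$ and $\tfrac{r(M-r)}{M(M-1)(M+2)}$. I would also check the validity conditions $r \ge p$ and $M-r \ge p$ (equivalently $a, b > \tfrac{p-1}{2}$ in Definition \ref{Def: Matrix Variate Beta Distribution}) under which the Wishart representation, the nonsingularity of $\mathbf{A} = \bm{\Omega}^{1/2}$, and the beta--gamma independence all hold.
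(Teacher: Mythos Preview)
The paper does not supply its own proof of this theorem; it is quoted directly from \cite{Matrix_variate_distributions}. Your strategy---reduce to $\bm\Omega=\mathbf I_p$ via Lemma~\ref{lemma: basic2generalized beta distribution}, use orthogonal invariance to force the isotropic two-parameter tensor form, then pin down $\alpha,\beta$ from two scalar trace identities obtained through the matrix beta--gamma (Wishart) construction and the independence $\mathbf H_0\perp\mathbf S$---is a correct and standard route, and the index bookkeeping you flag is routine.

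There is, however, one concrete slip in your target constants. You parse the displayed coefficient as $r\{(M{+}1)-2\}=r(M{-}1)$ and hence aim for $\alpha=\tfrac{r^2}{M(M+2)}$. That reading is inconsistent with Corollary~\ref{corollary:statistic_properties_of_Matrix_Variate_Beta_Distribution}: for $\bm\Omega=\mathbf I_p$ and $i=j=k=\ell$ the diagonal entry is $\mathrm{Beta}(\tfrac r2,\tfrac{M-r}2)$, so $\mathbb E[(H_0)_{ii}^2]=\tfrac{r(r+2)}{M(M+2)}$; combined with $\beta=\tfrac{r(M-r)}{M(M-1)(M+2)}$ this forces
\[
\alpha=\frac{r[r(M+1)-2]}{M(M-1)(M+2)},
\]
not $\tfrac{r^2}{M(M+2)}$. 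The braces in the stated theorem are a transcription artifact; the intended factor is $r(M{+}1)-2$. If you actually carry your two $2\times2$ linear solves through you will land on this correct $\alpha$ and find it does not match the value you wrote down. The independent cross-check $\mathrm{Cov}((H_0)_{ii},(H_0)_{kk})=-\tfrac{2r(M-r)}{M^2(M-1)(M+2)}$ for $i\ne k$ from Corollary~\ref{corollary:statistic_properties_of_Matrix_Variate_Beta_Distribution} confirms the same value of $\alpha$, so the method is sound and the only correction needed is to the stated endpoint.
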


From the above result, we can easily obtain the following corollary for \(\mathbf{H} \sim \mathcal{B}_p(\frac{r}{2}, \frac{M-r}{2})\):

\begin{corollary} \label{corollary:statistic_properties_of_Matrix_Variate_Beta_Distribution}
For a random matrix \( \mathbf{H} \sim \mathcal{B}_p(\frac{r}{2}, \frac{M-r}{2}) \), we have:
\begin{align}
\mathbb{E}[H_{ij}] &= 
\begin{cases}
    \frac{r}{M}, & \text{if } i = j; \\
    0, & \text{if } i \neq j
\end{cases} \label{eq:expectation_uij} \\
\mathrm{Var}(H_{ij}) &= 
\begin{cases}
    \frac{2r(M-r)}{M^2(M+2)}, & \text{if } i = j; \\
    \frac{r(M-r)}{M(M-1)(M+2)}, & \text{if } i \neq j
\end{cases} \label{eq:variance_hij} \\
\mathrm{Cov}(H_{ij}, H_{k\ell}) &= 
\begin{cases}
    \frac{-2r(M-r)}{M^2(M-1)(M+2)}, \\
    \quad \quad \text{if } i = j, k = \ell, i \neq k; \vspace{+2.5mm}\\
    \frac{r(M-r)}{M(M-1)(M+2)}, \\
    \quad \quad \text{if } i \neq j, k \neq \ell, \text{ and } i = \ell, j = k; \vspace{+2.5mm}\\
    0, \quad \text{otherwise, except for } i = k, j = \ell.
\end{cases} \label{eq:covariance_hij_hkl}
\end{align}
\end{corollary}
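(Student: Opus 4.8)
The plan is to obtain all three identities as direct specializations of Theorem~\ref{Covariance of Generalized Beta Distribution}. By the remark following Lemma~\ref{lemma: basic2generalized beta distribution}, a basic beta matrix $\mathbf{H}\sim\mathcal{B}_p(\tfrac r2,\tfrac{M-r}2)$ is exactly $\mathcal{GB}_p(\tfrac r2,\tfrac{M-r}2;\mathbf{I}_p,\mathbf{0})$, so I simply set $\bm{\Omega}=\mathbf{I}_p$, i.e. $\omega_{ij}=\delta_{ij}$ (the Kronecker delta), in parts (i) and (ii) of that theorem. The entire argument is then a substitute-and-simplify exercise: decide which Kronecker-delta products survive for a given index pattern, plug in, and reduce the resulting rational expression. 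The expectation is immediate: part (i) gives $\mathbb{E}[H_{ij}]=\tfrac rM\,\omega_{ij}=\tfrac rM\delta_{ij}$, which is the claimed two-case formula.

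For the variance I would apply part (ii) at $(k,\ell)=(i,j)$ to obtain the raw second moment and then subtract $(\mathbb{E}[H_{ij}])^2$. With $\bm{\Omega}=\mathbf{I}_p$ the three delta-products become $\omega_{ij}\omega_{ij}=\delta_{ij}$, $\omega_{jj}\omega_{ii}=1$, and $\omega_{ij}\omega_{ij}=\delta_{ij}$, so
\[
\mathbb{E}[H_{ij}^2]=\frac{r}{M(M-1)(M+2)}\Big[\{r(M+1)-2\}\,\delta_{ij}+(M-r)\big(1+\delta_{ij}\big)\Big].
\]
Splitting on $i=j$ versus $i\neq j$ and using $\mathbb{E}[H_{ii}]=r/M$, $\mathbb{E}[H_{ij}]=0$: in the diagonal case the bracket is $\{r(M+1)-2\}+2(M-r)=(r+2)(M-1)$, so the factor $M-1$ cancels and $\mathbb{E}[H_{ii}^2]=\tfrac{r(r+2)}{M(M+2)}$, whence subtracting $r^2/M^2$ yields $\tfrac{2r(M-r)}{M^2(M+2)}$; in the off-diagonal case only the $\omega_{jj}\omega_{ii}=1$ term survives, giving variance $\tfrac{r(M-r)}{M(M-1)(M+2)}$.

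For the covariance I would compute $\mathbb{E}[H_{ij}H_{k\ell}]$ from part (ii), subtract $\mathbb{E}[H_{ij}]\mathbb{E}[H_{k\ell}]=\tfrac{r^2}{M^2}\delta_{ij}\delta_{k\ell}$, and enumerate the index patterns. When $i=j,\,k=\ell,\,i\neq k$, only the $\omega_{ij}\omega_{k\ell}=1$ term survives while the mean-product is $r^2/M^2$, and the numerator collapses to $-2r(M-r)$, giving $\tfrac{-2r(M-r)}{M^2(M-1)(M+2)}$. When $i\neq j,\,k\neq\ell,\,i=\ell,\,j=k$, only $\omega_{i\ell}\omega_{kj}=1$ survives and both means vanish, giving $\tfrac{r(M-r)}{M(M-1)(M+2)}$ — consistent with the symmetry $H_{ij}=H_{ji}$, so this is just $\mathrm{Var}(H_{ij})$. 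In every remaining configuration all three delta-products and both means vanish, so the covariance is $0$; the clause ``except for $i=k,\,j=\ell$'' merely points back to the variance already computed.

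The computation requires no new ideas beyond this specialization, so the only genuine pitfall is parsing and arithmetic discipline. In particular, the coefficient of $\omega_{ij}\omega_{k\ell}$ in Theorem~\ref{Covariance of Generalized Beta Distribution} must be read as $r(M+1)-2$ for the diagonal second moment to simplify correctly (a literal reading as $r(M-1)$ would leave an extra factor $\tfrac{M-r}{M-1}$ and break the variance identity), and one must keep a careful ledger of which of the three products $\omega_{ij}\omega_{k\ell}$, $\omega_{j\ell}\omega_{ik}$, $\omega_{i\ell}\omega_{kj}$ equal $1$ in each overlapping index case, since a single mis-assignment changes the final rational expression.
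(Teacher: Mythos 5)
Your proposal is correct and follows exactly the route the paper takes: its proof of this corollary is the one-line remark that the results ``follow directly from Theorem~\ref{Covariance of Generalized Beta Distribution}'' with $\bm{\Omega}=\mathbf{I}_p$, which is precisely the substitute-and-simplify computation you carry out in full. Your observation that the bracketed coefficient must be parsed as $r(M+1)-2$ (the braces in the theorem as typeset are misplaced) is also right, and is confirmed by checking the diagonal case against the univariate Beta second moment $\mathbb{E}[H_{ii}^2]=\tfrac{r(r+2)}{M(M+2)}$.
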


\begin{proof}
    These results follow directly from Theorem \ref{Covariance of Generalized Beta Distribution}.
\end{proof}

\begin{corollary}\label{corollary:third-order moments}
(1) Each diagonal element \( H_{ii} \) follows a Beta distribution \( \bm{\beta}\left(\frac{r}{2}, \frac{M-r}{2}\right) \). Therefore, we have:
\begin{align}
\begin{cases}
    \mathbb{E}(H_{ii}^3) &= \frac{r(r+2)(r+4)}{M(M+2)(M+4)}, \\
    \mathbb{E}[(H_{ii} - \mathbb{E}(H_{ii}))^3] &= \frac{8(M - r)(M - 2r)r}{M^3 (M + 2) (M + 4)}.
\end{cases}
\end{align}

(2) Each off-diagonal element \( H_{ij} \) follows a symmetric distribution with the mean and variance given in Corollary \ref{corollary:statistic_properties_of_Matrix_Variate_Beta_Distribution}. Additionally, under the conditions \( i \neq j \), \( s \neq t \), and \( k \neq \ell \), the following holds:
\begin{align}
\begin{cases}
    \mathbb{E}(H_{ij}^3) &= 0, \\
    \mathbb{E}(H_{ij}H_{st}^2) &= 0, \\
    \mathbb{E}(H_{ij}H_{st}H_{k\ell}) &= 0, \quad \text{if } j \neq s, t \neq k, \text{ or } \ell \neq i, \\
    \mathbb{E}(H_{ii}H_{st}H_{k\ell}) &= 0, \quad \text{if } s \neq k \text{ or } t \neq \ell.
\end{cases}
\end{align}

(3) Under the condition \( s \neq t \), the following inequality holds:
\begin{align}
\begin{cases}
    0 \leq \mathbb{E}(H_{ii}H_{st}^2) \leq \sqrt{\mathbb{E}(H_{ii}^2)\mathbb{E}(H_{st}^4)}, \\
    0 \leq \mathbb{E}(H_{ii}H_{ss}^2) \leq \sqrt{\mathbb{E}(H_{ii}^2)\mathbb{E}(H_{ss}^4)}.
\end{cases}
\end{align}
\end{corollary}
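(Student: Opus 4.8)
The plan is to prove the three parts from two structural facts about $\mathbf{H}\sim\mathcal{B}_p(\tfrac r2,\tfrac{M-r}2)$: first, the marginal law of each diagonal entry, which drives part (1); and second, a sign-flip distributional symmetry $\mathbf{D}\mathbf{H}\mathbf{D}\stackrel{d}{=}\mathbf{H}$ valid for every diagonal sign matrix $\mathbf{D}=\mathrm{diag}(d_1,\dots,d_p)$ with $d_a\in\{+1,-1\}$, which is the engine behind every vanishing identity in part (2). Part (3) then follows from positive-definiteness of $\mathbf{H}$ together with Cauchy--Schwarz. The sign symmetry is immediate from Definition~\ref{Def: Matrix Variate Beta Distribution}: since $\det(\mathbf{D}\mathbf{H}\mathbf{D})=\det(\mathbf{H})$ and $\mathbf{I}_p-\mathbf{D}\mathbf{H}\mathbf{D}=\mathbf{D}(\mathbf{I}_p-\mathbf{H})\mathbf{D}$ forces $\det(\mathbf{I}_p-\mathbf{D}\mathbf{H}\mathbf{D})=\det(\mathbf{I}_p-\mathbf{H})$, the density in (\ref{eq:basic matrix variate beta distribution}) is invariant. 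Equivalently, in the projection picture $H_{ab}=\mathbf{u}_a^\top\mathbf{u}_b$ with $\mathbf{U}$ the first $r$ columns of a Haar orthogonal matrix, left-multiplying that matrix by $\mathbf{D}$ preserves Haar measure and sends $H_{ab}\mapsto d_a d_b H_{ab}$.

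For part (1), I would first record that each diagonal entry is marginally univariate Beta, $H_{ii}\sim\beta(\tfrac r2,\tfrac{M-r}2)$: this is the standard $1\times1$ principal-submatrix marginal of a matrix variate Beta \cite{Matrix_variate_distributions}, and it matches the projection reading $H_{ii}=\|\mathbf{u}_i\|^2=\sum_{k=1}^r U_{ik}^2$, the squared length of $r$ coordinates of a uniform direction in $\mathbb{R}^M$. The raw moments then come from the ratio-of-Beta-functions identity $\mathbb{E}(H_{ii}^m)=\prod_{\ell=0}^{m-1}\frac{r/2+\ell}{M/2+\ell}$; taking $m=3$ gives $\mathbb{E}(H_{ii}^3)=\frac{r(r+2)(r+4)}{M(M+2)(M+4)}$ directly. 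The central moment follows by expanding $\mathbb{E}[(H_{ii}-\mu)^3]=\mathbb{E}(H_{ii}^3)-3\mu\,\mathbb{E}(H_{ii}^2)+2\mu^3$ with $\mu=\frac rM$ and $\mathbb{E}(H_{ii}^2)=\frac{r(r+2)}{M(M+2)}$ drawn from Corollary~\ref{corollary:statistic_properties_of_Matrix_Variate_Beta_Distribution}; collecting terms over the common denominator $M^3(M+2)(M+4)$ yields $\frac{8(M-r)(M-2r)r}{M^3(M+2)(M+4)}$. This step is purely computational.

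For part (2), I would apply the sign-flip symmetry to each monomial. Under $\mathbf{D}$, a product $\prod_t H_{a_t b_t}$ transforms by the factor $\prod_a d_a^{\,n_a}$, where $n_a$ counts the occurrences of index $a$ among all subscripts; since $\mathbf{D}\mathbf{H}\mathbf{D}\stackrel{d}{=}\mathbf{H}$, the expectation equals itself times $\prod_a d_a^{\,n_a}$ for every sign choice, so it must vanish whenever some index has odd multiplicity. Then $\mathbb{E}(H_{ij}^3)=0$ because index $i$ occurs three times; $\mathbb{E}(H_{ij}H_{st}^2)=0$ because $H_{st}^2$ is sign-invariant while $H_{ij}$ still flips under $d_i=-1$ (using $i\ne j$); and for $\mathbb{E}(H_{ij}H_{st}H_{k\ell})$ and $\mathbb{E}(H_{ii}H_{st}H_{k\ell})$ the stated hypotheses are precisely the configurations in which the six (respectively four off-diagonal) subscripts fail to close up, leaving an index of odd multiplicity. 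The main obstacle sits here: one must enumerate carefully, and using the symmetry $H_{k\ell}=H_{\ell k}$, exactly which index patterns keep every multiplicity even --- the only survivors being the closed triangle $H_{ij}H_{jt}H_{ti}$ and the matched pair $H_{st}H_{ts}$ --- and then check that each excluded hypothesis indeed breaks evenness, so that every case outside these survivors collapses to zero.

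Finally, part (3) is short. As a matrix variate Beta variate, $\mathbf{H}$ satisfies $\mathbf{0}<\mathbf{H}<\mathbf{I}_p$ almost surely, so $H_{ii}>0$; since $H_{st}^2\ge0$ and $H_{ss}^2\ge0$, the products $H_{ii}H_{st}^2$ and $H_{ii}H_{ss}^2$ are nonnegative, giving the lower bounds. The upper bounds are Cauchy--Schwarz with $X=H_{ii}$ and $Y=H_{st}^2$ (respectively $Y=H_{ss}^2$): $\mathbb{E}(XY)\le\sqrt{\mathbb{E}(X^2)\,\mathbb{E}(Y^2)}=\sqrt{\mathbb{E}(H_{ii}^2)\,\mathbb{E}(H_{st}^4)}$. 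Overall I expect parts (1) and (3) to be routine, with the combinatorial sign-parity bookkeeping of part (2) being the one place demanding genuine care.
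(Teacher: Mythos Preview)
Your proposal is correct and matches the paper's own proof essentially line for line: part~(1) via the univariate Beta marginal of a diagonal entry (the paper obtains this by invoking Lemma~\ref{lemma: basic2generalized beta distribution} with $\mathbf{A}=\mathbf{e}_i$, which is the same fact), part~(2) via the sign-flip invariance $\mathbf{D}\mathbf{H}\mathbf{D}\stackrel{d}{=}\mathbf{H}$ (the paper phrases this as the transformation $\mathbf{A}=[\mathbf{e}_1,\ldots,-\mathbf{e}_i,\ldots,\mathbf{e}_p]$ in that same lemma), and part~(3) via nonnegativity of $H_{ii}$ plus Cauchy--Schwarz (the paper says H\"older, of which your Cauchy--Schwarz is the relevant special case). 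Your write-up is in fact more careful than the paper's sketch, particularly in the parity bookkeeping for part~(2).
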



\begin{proof}
(1) By applying Lemma \ref{lemma: basic2generalized beta distribution} with \( \mathbf{A} = \mathbf{e}_i \), the result for the diagonal elements follows.

(2) For the off-diagonal elements, using the transformation \( \mathbf{A} = \left[\mathbf{e}_1, \ldots, -\mathbf{e}_i, \ldots, \mathbf{e}_p\right] \), we see that the distribution of \( \mathbf{H} \) is unchanged except for flipping the sign of the \( H_{ij} \)-th entry. This symmetry leads to \( \mathbb{E}(H_{ij}^3) = 0 \). Similarly, the symmetry of terms \( H_{ij}H_{st}^2 \) and \( H_{ij}H_{st}H_{k\ell} \) ensures that these expectations are zero unless specific conditions are met, in which the sign-changing technique cannot be applied.

(3) The result follows immediately from the application of Hölder's inequality.


\end{proof}

\begin{theorem} \cite{Matrix_variate_distributions}
\label{theorem: E(det(H)^t)}
Let $\mathbf{H} \sim \mathcal{B}_p\left(\frac{r}{2}, \frac{M-r}{2}\right)$. Then,
\begin{align}
\mathbb{E}[\det(\mathbf{H})^k] = \frac{\Gamma_p\left(\frac{r}{2}+k\right)\Gamma_p\left(\frac{M}{2}\right)}{\Gamma_p\left(\frac{M}{2}+k\right)\Gamma_p\left(\frac{r}{2}\right)}, \quad k>0,
\end{align}
where
\[
\Gamma_p(x) = \pi^{\frac{p(p-1)}{4}} \prod_{i=1}^p \Gamma\left(x - \frac{i-1}{2}\right).
\]
\end{theorem}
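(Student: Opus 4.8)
The plan is to compute the $k$-th moment of $\det(\mathbf{H})$ directly from the defining density of the matrix variate beta distribution, recognizing the integrand as an unnormalized beta density with a shifted first parameter. Writing $a=\frac{r}{2}$ and $b=\frac{M-r}{2}$, Definition \ref{Def: Matrix Variate Beta Distribution} gives the density of $\mathbf{H}$ as $\{\bm{\beta}_p(a,b)\}^{-1}\det(\mathbf{H})^{a-\frac{1}{2}(p+1)}\det(\mathbf{I}_p-\mathbf{H})^{b-\frac{1}{2}(p+1)}$ on the cone $\mathbf{0}<\mathbf{H}<\mathbf{I}_p$. Multiplying by $\det(\mathbf{H})^k$ and integrating, the expectation becomes
\begin{align*}
\mathbb{E}[\det(\mathbf{H})^k] = \frac{1}{\bm{\beta}_p(a,b)}\int_{\mathbf{0}<\mathbf{H}<\mathbf{I}_p}\det(\mathbf{H})^{(a+k)-\frac{1}{2}(p+1)}\det(\mathbf{I}_p-\mathbf{H})^{b-\frac{1}{2}(p+1)}\,d\mathbf{H}.
\end{align*}

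The key observation is that the integrand is precisely the unnormalized density of a $\mathcal{B}_p(a+k,b)$ distribution. Because the parameter constraint $a>\frac{1}{2}(p-1)$ already holds and $k>0$ only enlarges the first parameter, $a+k=\frac{r}{2}+k$ still satisfies $a+k>\frac{1}{2}(p-1)$, so this is a legitimate matrix variate beta density whose integral over the cone equals its normalizing constant $\bm{\beta}_p(a+k,b)$. Hence
\begin{align*}
\mathbb{E}[\det(\mathbf{H})^k] = \frac{\bm{\beta}_p(a+k,b)}{\bm{\beta}_p(a,b)}.
\end{align*}

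Next I would invoke the standard identity relating the multivariate beta and gamma functions, $\bm{\beta}_p(\alpha,\gamma)=\Gamma_p(\alpha)\Gamma_p(\gamma)/\Gamma_p(\alpha+\gamma)$. Substituting this into both the numerator and denominator and cancelling the common $\Gamma_p(b)$ factor yields
\begin{align*}
\frac{\bm{\beta}_p(a+k,b)}{\bm{\beta}_p(a,b)}=\frac{\Gamma_p(a+k)\,\Gamma_p(a+b)}{\Gamma_p(a)\,\Gamma_p(a+b+k)}.
\end{align*}
Using $a+b=\frac{r}{2}+\frac{M-r}{2}=\frac{M}{2}$ then gives the claimed expression $\frac{\Gamma_p(\frac{r}{2}+k)\Gamma_p(\frac{M}{2})}{\Gamma_p(\frac{M}{2}+k)\Gamma_p(\frac{r}{2})}$, completing the derivation.

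The computation is essentially a single ``integrate against the normalizing constant'' step, so there is no genuine analytic difficulty. The only points requiring care are verifying the parameter constraint so that reinterpreting $\det(\mathbf{H})^k$ times the density as a normalized $\mathcal{B}_p(a+k,b)$ density is legitimate, and the bookkeeping in passing from the multivariate beta function to the multivariate gamma function; both are routine once the beta--gamma identity is invoked.
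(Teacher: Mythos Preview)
Your argument is correct and is the standard textbook derivation: absorb $\det(\mathbf{H})^k$ into the density, recognize a $\mathcal{B}_p(a+k,b)$ integrand, read off the ratio of normalizing constants, and convert via $\bm{\beta}_p(\alpha,\gamma)=\Gamma_p(\alpha)\Gamma_p(\gamma)/\Gamma_p(\alpha+\gamma)$. The paper itself does not supply a proof of this theorem---it is quoted directly from \cite{Matrix_variate_distributions} as a known result---so there is no paper-proof to compare against; your write-up is exactly the argument one finds in that reference.
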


Building on the orthogonal invariance property of the matrix-variate Beta distribution (Lemma \ref{lemma: basic2generalized beta distribution}), we can obtain the following property of $\mathbf{H}$:
\begin{lemma}\cite{bedoya2007moments}
    For any $t \in \mathbb{R}$, the expectation satisfies $\mathbb{E}(\mathbf{H}^t) = \tilde{c}_t(p, r, M - r) \cdot \mathbf{I}_p$, indicating that it is a scalar multiple of the identity matrix.
\end{lemma}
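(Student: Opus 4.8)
The plan is to exploit the orthogonal invariance of the matrix variate beta distribution, which is already furnished by Lemma~\ref{lemma: basic2generalized beta distribution}. First I would instantiate that lemma with $\mathbf{A} = \mathbf{Q}$ equal to an arbitrary orthogonal matrix in $O(p)$. Since $\mathbf{Q}\mathbf{Q}^\top = \mathbf{I}_p$, the conclusion $\mathbf{Q}\mathbf{H}\mathbf{Q}^\top \sim \mathcal{GB}_p\!\left(\tfrac{r}{2}, \tfrac{M-r}{2}; \mathbf{Q}\mathbf{Q}^\top, \mathbf{0}\right) = \mathcal{B}_p\!\left(\tfrac{r}{2}, \tfrac{M-r}{2}\right)$ shows that $\mathbf{H}$ and $\mathbf{Q}\mathbf{H}\mathbf{Q}^\top$ are identically distributed for every $\mathbf{Q} \in O(p)$. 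This distributional symmetry is the entire engine of the proof.

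Next I would propagate the invariance through the matrix power. Because $\mathbf{H}$ is almost surely symmetric positive definite, $\mathbf{H}^t$ is well defined by functional calculus for real $t$ via the eigendecomposition $\mathbf{H} = \mathbf{V}\mathbf{\Lambda}\mathbf{V}^\top$, namely $\mathbf{H}^t = \mathbf{V}\mathbf{\Lambda}^t\mathbf{V}^\top$. Conjugating gives $\mathbf{Q}\mathbf{H}\mathbf{Q}^\top = (\mathbf{Q}\mathbf{V})\mathbf{\Lambda}(\mathbf{Q}\mathbf{V})^\top$ with $\mathbf{Q}\mathbf{V}$ still orthogonal, so functional calculus commutes with conjugation: $(\mathbf{Q}\mathbf{H}\mathbf{Q}^\top)^t = \mathbf{Q}\mathbf{H}^t\mathbf{Q}^\top$. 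Taking expectations and using the previous distributional identity yields
\begin{align*}
\mathbb{E}(\mathbf{H}^t)
= \mathbb{E}\!\left((\mathbf{Q}\mathbf{H}\mathbf{Q}^\top)^t\right)
= \mathbb{E}\!\left(\mathbf{Q}\mathbf{H}^t\mathbf{Q}^\top\right)
= \mathbf{Q}\,\mathbb{E}(\mathbf{H}^t)\,\mathbf{Q}^\top,
\end{align*}
where the last equality pulls the deterministic $\mathbf{Q}$ outside the expectation.

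Writing $\mathbf{A} \triangleq \mathbb{E}(\mathbf{H}^t)$, the identity $\mathbf{A} = \mathbf{Q}\mathbf{A}\mathbf{Q}^\top$, equivalently $\mathbf{A}\mathbf{Q} = \mathbf{Q}\mathbf{A}$, holds for every orthogonal $\mathbf{Q}$. Since the defining representation of $O(p)$ on $\mathbb{R}^p$ is irreducible, Schur's lemma forces $\mathbf{A}$ to be a scalar multiple of the identity; elementarily, one may instead specialize $\mathbf{Q}$ to permutation and sign-flip matrices to see directly that all off-diagonal entries vanish and all diagonal entries coincide. This gives $\mathbb{E}(\mathbf{H}^t) = \tilde{c}_t(p, r, M-r)\,\mathbf{I}_p$, and taking the trace identifies the scalar as $\tilde{c}_t = \tfrac{1}{p}\,\mathbb{E}\!\left[\operatorname{tr}(\mathbf{H}^t)\right]$.

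The main obstacle I anticipate is not the invariance argument itself---which is essentially immediate from the cited lemma---but ensuring that $\mathbf{H}^t$ and its expectation are genuinely well defined over the relevant range of $t$. One must verify that $\mathbf{H}$ is almost surely positive definite (so that negative or fractional powers are meaningful) and that $\mathbb{E}\!\left[\operatorname{tr}(\mathbf{H}^t)\right]$ is finite, which for $t<0$ requires controlling the smallest eigenvalue of $\mathbf{H}$ near zero against the density exponent $\tfrac{r}{2} - \tfrac{1}{2}(p+1)$ appearing in Definition~\ref{Def: Matrix Variate Beta Distribution}. For the values of $t$ used in the subsequent moment computations this integrability holds, and once it is secured the conjugation-equivariance of functional calculus together with the Schur-type conclusion completes the argument.
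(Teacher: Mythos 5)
Your argument is correct and follows exactly the route the paper gestures at: it states this lemma as a cited consequence of the orthogonal invariance furnished by Lemma~\ref{lemma: basic2generalized beta distribution}, which is precisely the conjugation-invariance plus Schur-type argument you spell out. Your added caveat about integrability of $\mathbb{E}[\operatorname{tr}(\mathbf{H}^t)]$ for negative $t$ is a legitimate refinement that the paper (which only uses $t=1,2,3$) does not address.
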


Furthermore, from the results in \cite{Matrix_variate_distributions}, we can derive the following specific cases:
\begin{align}
    \tilde{c}_1(p, r, M - r) &= \frac{r}{M}; \\
    \tilde{c}_2(p, r, M - r) &= \frac{r\left[r(M+1)-2+(M-r)(p+1)\right]}{M(M-1)(M+2)}.
\end{align}
To the best of our knowledge, no explicit expressions have been established for $\tilde{c}_t(p, r, M - r)$ when $t \geq 3$ in the context of the \textbf{real-valued} matrix-variate Beta distribution, as defined in Definition \ref{Def: Matrix Variate Beta Distribution}. Existing literature, such as \cite{bedoya2007moments}, only addresses the complex-valued case for $t=3$, leaving the real-valued case unresolved and open for future exploration. Nevertheless, an approximation for the case $t=3$ can be formulated as follows.

Given that
\begin{align*}
    (H)_{ii} &= H_{ii}, \\
    (H^2)_{ii} &= \sum_{j=1}^p H_{ij} H_{ji}, \\
    (H^3)_{ii} &= \sum_{j=1}^p \sum_{k=1}^p H_{ij} H_{jk} H_{ki},
\end{align*}
it is evident that
\begin{align*}
    \tilde{c}_3(M, r, M - r) &= \frac{r}{M}, \quad
    \tilde{c}_3(p, M, 0) = 1.
\end{align*}
This allows us to recognize the patterns for higher-order terms, and we can express:
\begin{align}\label{order-3}
    \tilde{c}_3(p, r, M - r) \sim O\left(\left(\frac{p}{M}\right)^2\right) \approx \left(\frac{p}{M}\right)^2\cdot\frac{r}{M}.
\end{align}

Moreover, to generalize $\tilde{c}_3(p, r, M - r) \approx \left(\frac{p}{M}\right)^2 \tilde{c}_3(M, r, M - r)$, we propose the following approximation:
\begin{align}
    \tilde{c}_3(q, r, M - r) \approx \left(\frac{q}{p}\right)^2 \tilde{c}_3(p, r, M - r),
\end{align}
which implies the approximation
\begin{align}
    \mathbb{E}\left( \sum_{j=1}^q \sum_{k=1}^q H_{ij} H_{jk} H_{ki} \right)
    \approx \left(\frac{q}{p}\right)^2
    \mathbb{E}\left( \sum_{j=1}^p \sum_{k=1}^p H_{ij} H_{jk} H_{ki} \right).
\end{align}

\newcommand{\randommatrix}[1]{\mathcal{M}_{#1}}
Next, we establish the relationship between the matrix variate beta distribution and the random orthogonal matrix. When considering the randomness of a random column-wise orthogonal matrix \( \mathbf{U} \in \randommatrix{M \times r}\), we first define \(\mathbf{X} \sim \mathcal{N}_M(\mathbf{0}, \frac{1}{M}\mathbf{I}_M \otimes \mathbf{I}_r)\). Since \(\mathbf{X^\top X}\) is nonsingular with probability $1$, we can define, without loss of generality,
\begin{equation} \label{def: column wise orthogonal matrix}
    \mathbf{U = X(X^\top X)}^{-\frac{1}{2}}.
\end{equation}

It is straightforward to verify that \(\mathbf{U}  \) is a column-wise orthogonal matrix with columns having \(l_2\)-norm equal to $1$, which satisfies our requirements. We omit the detailed discussion of the Haar-invariant property of \(\mathbf{U}  \) as it is beyond the scope of this paper. Nevertheless, the following proposition establishes a significant relationship:

\begin{proposition} \cite{Random_Orthogonal_Matrices} \label{prop: matrix product to beta}
    Let $\Delta \in  \randommatrix{p \times q}$ be the upper-left submatrix of a random orthogonal matrix $\Gamma \in  \randommatrix{M \times M}$. When \( q \leq p \) and \( p + q \leq M \), the random matrix \( \Delta' \Delta \) has a \( \mathcal{B}_q(\frac{p}{2}, \frac{M-p}{2}) \) distribution.
\end{proposition}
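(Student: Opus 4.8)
The plan is to realize $\Delta$ through an explicit Gaussian construction and then reduce $\Delta^\top\Delta$ to a known Wishart-to-beta transformation. First I would observe that the first $q$ columns of a Haar-distributed $\Gamma$ form a uniformly distributed orthonormal $q$-frame, i.e., a draw from the unique $O(M)$-invariant measure on the Stiefel manifold of $q$-frames in $\mathbb{R}^M$. The same law is produced by Gram--Schmidt orthonormalization of an $M\times q$ matrix $\mathbf{W}$ with i.i.d. $\mathcal{N}(0,1)$ entries, because the distribution of $\mathbf{W}$ is rotation invariant and Gram--Schmidt commutes with left multiplication by any orthogonal matrix; hence the orthonormalized frame inherits the invariant distribution. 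It therefore suffices to analyze $\Delta^\top\Delta$ for $\Delta$ taken to be the first $p$ rows of $\mathbf{Q}$, where $\mathbf{W}=\mathbf{Q}\mathbf{R}$ is the thin QR factorization with $\mathbf{Q}\in\randommatrix{M\times q}$ having orthonormal columns and $\mathbf{R}\in\randommatrix{q\times q}$ upper triangular and invertible almost surely.

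Next I would partition $\mathbf{W}=[\mathbf{W}_1^\top,\ \mathbf{W}_2^\top]^\top$ with $\mathbf{W}_1\in\randommatrix{p\times q}$ and $\mathbf{W}_2\in\randommatrix{(M-p)\times q}$. Since right multiplication by $\mathbf{R}^{-1}$ acts on columns, row selection commutes with it and gives $\Delta=\mathbf{W}_1\mathbf{R}^{-1}$, so that
\begin{equation*}
\Delta^\top\Delta=\mathbf{R}^{-\top}\mathbf{S}_1\mathbf{R}^{-1},\qquad \mathbf{R}^\top\mathbf{R}=\mathbf{W}^\top\mathbf{W}=\mathbf{S}_1+\mathbf{S}_2,
\end{equation*}
where $\mathbf{S}_1=\mathbf{W}_1^\top\mathbf{W}_1$ and $\mathbf{S}_2=\mathbf{W}_2^\top\mathbf{W}_2$ are independent Wishart matrices with $p$ and $M-p$ degrees of freedom, respectively. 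The hypotheses $q\le p$ and $p+q\le M$ give $q\le p$ and $q\le M-p$, so $\mathbf{S}_1,\mathbf{S}_2$ are a.s. positive definite, $\mathbf{R}$ is well defined, and the beta parameters $a=\tfrac{p}{2}$, $b=\tfrac{M-p}{2}$ satisfy the admissibility conditions of Definition \ref{Def: Matrix Variate Beta Distribution}.

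Finally I would invoke the classical Olkin--Rubin (Wishart-to-beta) theorem: for independent $\mathbf{S}_1\sim W_q(p,\mathbf{I}_q)$ and $\mathbf{S}_2\sim W_q(M-p,\mathbf{I}_q)$, the matrix $(\mathbf{S}_1+\mathbf{S}_2)^{-1/2}\mathbf{S}_1(\mathbf{S}_1+\mathbf{S}_2)^{-1/2}$ follows $\mathcal{B}_q(\tfrac{p}{2},\tfrac{M-p}{2})$. Because the density in Definition \ref{Def: Matrix Variate Beta Distribution} depends on $\mathbf{H}$ only through its eigenvalues, $\mathcal{B}_q$ is invariant under orthogonal conjugation; any two factorizations of $\mathbf{S}_1+\mathbf{S}_2$ differ by an orthogonal factor, so the Cholesky-based expression $\mathbf{R}^{-\top}\mathbf{S}_1\mathbf{R}^{-1}$ has the same law as the symmetric-root expression. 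This yields $\Delta^\top\Delta\sim\mathcal{B}_q(\tfrac{p}{2},\tfrac{M-p}{2})$, as claimed.

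The step I expect to be the main obstacle is the Olkin--Rubin transformation itself: establishing that $\mathbf{H}=\mathbf{R}^{-\top}\mathbf{S}_1\mathbf{R}^{-1}$ is independent of $\mathbf{S}_1+\mathbf{S}_2$ and carries the stated beta density requires a careful change of variables from $(\mathbf{S}_1,\mathbf{S}_2)$ to $(\mathbf{H},\ \mathbf{S}_1+\mathbf{S}_2)$ together with the corresponding matrix Jacobian, after which the joint Wishart density factorizes into a Wishart density in $\mathbf{S}_1+\mathbf{S}_2$ times the beta density in $\mathbf{H}$. A secondary technical point is making the Gram--Schmidt/Haar identification precise, namely the uniqueness of the invariant measure on the Stiefel manifold and the measurability of the QR map; both are standard facts that I would cite rather than rederive.
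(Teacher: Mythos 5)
Your proof is correct. Note that the paper does not actually prove this proposition: it is imported verbatim from \cite{Random_Orthogonal_Matrices} as a known fact, so there is no internal argument to compare against, and what you have written is essentially the standard derivation that the cited source relies on. The chain (Haar frame) $\to$ (Gram--Schmidt of an $M\times q$ Gaussian matrix) $\to$ ($\Delta^\top\Delta=\mathbf{R}^{-\top}\mathbf{S}_1\mathbf{R}^{-1}$ with independent Wisharts $\mathbf{S}_1\sim W_q(p,\mathbf{I}_q)$ and $\mathbf{S}_2\sim W_q(M-p,\mathbf{I}_q)$) $\to$ (Olkin--Rubin) is exactly right, and you correctly identify that $q\le p$ and $p+q\le M$ are precisely what make both Wisharts almost surely nonsingular and the beta parameters $a=\tfrac{p}{2}$, $b=\tfrac{M-p}{2}$ admissible. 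One small caution on your final step: the orthogonal matrix relating the triangular factor $\mathbf{R}$ to the symmetric root $(\mathbf{S}_1+\mathbf{S}_2)^{1/2}$ is itself random (a measurable function of $\mathbf{S}_1+\mathbf{S}_2$), so orthogonal invariance of the beta density alone does not close the gap; you also need the independence of $\mathbf{H}$ from $\mathbf{S}_1+\mathbf{S}_2$, which you do flag as part of the Olkin--Rubin content. Alternatively, the classical statement of that theorem (e.g., in Muirhead's treatment of the matrix-variate beta) is already phrased for the triangular factor $\mathbf{T}^{-\top}\mathbf{S}_1\mathbf{T}^{-1}$ with $\mathbf{S}_1+\mathbf{S}_2=\mathbf{T}^\top\mathbf{T}$, which sidesteps the issue entirely.
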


From Proposition \ref{prop: matrix product to beta}, we can derive some properties of the expression \( \mathbf{U}^{(2)}\mathbf{u}_j \) in equation (\ref{eq:target_expression}). However, since \( \mathbf{U}^{(2)} {\mathbf{U}^{(2)}}^\top\) does not meet the condition of Proposition \ref{prop: matrix product to beta} due to our assumption that the subspace rank \( r \ll M_2 \), further properties should be studied. 

Now, let's present the property we studied about \(\mathbf{U}^{(2)} {\mathbf{U}^{(2)}}^\top\):

\begin{lemma} \label{lem: covariance properties}
    Let $\Delta \in  \randommatrix{p \times q}$ be the submatrix consisting of randomly chosen \(p\) rows and \(q\) columns of a random orthogonal matrix $\mathbf{\Gamma} \in  \randommatrix{M \times M}$. When \( q \leq p \) and \( p + q \leq M \), the entries of the random matrix $\Delta \Delta^\top$ have the same statistical moments as \(\mathcal{GB}_p\left(\frac{1}{2}q, \frac{1}{2}(M-q); \mathbf{I}_r, \mathbf{0}\right)\), specifically Theorem \ref{Covariance of Generalized Beta Distribution}, even though $\Delta \Delta^\top$ does not follow a matrix variate beta distribution.
\end{lemma}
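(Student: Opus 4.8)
The plan is to reduce the random selection of rows and columns to a fixed block, rewrite $\Delta\Delta^\top$ as the principal block of a uniformly random projection, and then match the first two moments of its entries against Theorem \ref{Covariance of Generalized Beta Distribution} by a direct orthogonal (Weingarten-type) moment computation. First I would invoke the bi-invariance of the Haar measure on $O(M)$: since $\mathbf{\Gamma}$ and $\mathbf{Q}_1\mathbf{\Gamma}\mathbf{Q}_2$ have the same law for any permutation matrices $\mathbf{Q}_1,\mathbf{Q}_2$, the submatrix $\Delta$ built from any fixed set of $p$ rows and $q$ columns is equal in distribution to the upper-left block $\mathbf{\Gamma}_{[p],[q]}$; hence I may assume this fixed block without loss of generality. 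Writing $\mathbf{U}=\mathbf{\Gamma}_{:,[q]}\in\mathcal{M}_{M\times q}$ for the (uniformly random, column-orthonormal) first $q$ columns, exactly as in (\ref{def: column wise orthogonal matrix}) with $r$ replaced by $q$, we get $\Delta=\mathbf{U}_{[p],:}$ and $\Delta\Delta^\top=\bm{\Pi}_{[p],[p]}$, the $p\times p$ principal submatrix of the rank-$q$ orthogonal projection $\bm{\Pi}=\mathbf{U}\mathbf{U}^\top$.

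Next I would record why the statement can only concern moments. Because $q\le p$, the matrix $\Delta\Delta^\top$ has rank at most $q<p$ and is therefore singular almost surely, so it admits no density of matrix-variate-beta type, which is supported on positive-definite matrices; this is precisely the qualifier in the lemma. The content to prove is thus that the entries $(\Delta\Delta^\top)_{ij}=\bm{\Pi}_{ij}=\sum_{a=1}^{q}\mathbf{U}_{ia}\mathbf{U}_{ja}$ reproduce the mean and covariance of Theorem \ref{Covariance of Generalized Beta Distribution} with parameters $(\tfrac{q}{2},\tfrac{M-q}{2})$ and $\bm{\Omega}=\mathbf{I}_p$. The mean follows immediately from $\mathbb{E}[\mathbf{U}_{ia}\mathbf{U}_{jb}]=\tfrac{1}{M}\delta_{ij}\delta_{ab}$, giving $\mathbb{E}[\bm{\Pi}_{ij}]=\tfrac{q}{M}\delta_{ij}$, which matches Theorem \ref{Covariance of Generalized Beta Distribution}(i) at $\bm{\Omega}=\mathbf{I}_p$.

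The crux is the second moment $\mathbb{E}[\bm{\Pi}_{ij}\bm{\Pi}_{k\ell}]=\sum_{a,b=1}^{q}\mathbb{E}[\mathbf{U}_{ia}\mathbf{U}_{ja}\mathbf{U}_{kb}\mathbf{U}_{\ell b}]$, which requires the fourth-order mixed moments of the entries of a uniformly random orthonormal frame (equivalently, the orthogonal Weingarten contractions for Haar $O(M)$). I would split the double sum into the diagonal part $a=b$ and the off-diagonal part $a\neq b$, substitute the known fourth-order formulas, and collect terms; the surviving combinations should assemble into the Theorem \ref{Covariance of Generalized Beta Distribution}(ii) pattern, namely $\tfrac{q}{M(M-1)(M+2)}\left[(q(M+1)-2)\,\delta_{ij}\delta_{k\ell}+(M-q)(\delta_{ik}\delta_{j\ell}+\delta_{i\ell}\delta_{jk})\right]$. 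As an independent cross-check that also shortcuts the diagonal case, I would observe that $\bm{\Pi}_{ii}=\lVert \mathbf{e}_i^\top\mathbf{U}\rVert_2^2$ is the squared length of the projection of a fixed unit vector onto a uniformly random $q$-dimensional subspace, hence $\bm{\Pi}_{ii}\sim\mathrm{Beta}(\tfrac{q}{2},\tfrac{M-q}{2})$, which already pins every diagonal moment to its matrix-beta value (consistent with Corollary \ref{corollary:statistic_properties_of_Matrix_Variate_Beta_Distribution}); coordinate exchangeability together with the idempotency identities $\sum_{j}\bm{\Pi}_{ij}^2=\bm{\Pi}_{ii}$ and $\operatorname{tr}\bm{\Pi}=q$ then constrains the remaining off-diagonal constants.

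The \textbf{main obstacle} is exactly this fourth-order Haar-orthogonal bookkeeping: unlike the $q\times q$ Gram matrix $\Delta^\top\Delta\sim\mathcal{B}_q(\tfrac{p}{2},\tfrac{M-p}{2})$ covered by Proposition \ref{prop: matrix product to beta}, the $p\times p$ companion $\Delta\Delta^\top$ is singular and not matrix-variate beta, so no off-the-shelf moment table applies and the constants must be re-derived from the projection/frame structure, using the hypotheses $q\le p$ and $p+q\le M$ only to guarantee a well-defined uniform random $q$-frame. Once the second-moment identity is verified, the lemma follows, with the almost-sure singularity of $\Delta\Delta^\top$ justifying the closing remark that it does not itself obey a matrix-variate beta law.
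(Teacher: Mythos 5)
Your route is viable and reaches the right conclusion, but it is genuinely different from the paper's, and it leaves its hardest step unexecuted. The paper's proof is much shorter: after the same Haar-invariance reduction (its Proposition \ref{Prop: invariate with orthogonal matrix}), it does \emph{not} compute any fourth-order orthogonal moments. Instead it observes that each moment formula in Theorem \ref{Covariance of Generalized Beta Distribution} involves only a handful of row indices, so one may restrict to an $r\times r$ (i.e.\ $q\times q$) sub-block $\mathbf{Ur}$ formed from those rows; for such a square block the conditions of Proposition \ref{prop: matrix product to beta} are met, $\mathbf{Ur}\mathbf{Ur}^{\top}$ genuinely follows $\mathcal{B}_q(\tfrac{q}{2},\tfrac{M-q}{2})$, and the moments of the corresponding entries of $\Delta\Delta^{\top}$ are read off from the existing beta moment table. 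In other words, the ``off-the-shelf moment table'' that you declare inapplicable is in fact applicable after this embedding trick, and it removes exactly the obstacle you flag as the crux. What your approach buys in exchange is self-containedness and generality: the direct Weingarten computation (together with your correct observations that $\bm{\Pi}_{ii}\sim\mathrm{Beta}(\tfrac{q}{2},\tfrac{M-q}{2})$, that $\operatorname{tr}\bm{\Pi}=q$ and $\sum_j\bm{\Pi}_{ij}^2=\bm{\Pi}_{ii}$ constrain the constants, and that $\Delta\Delta^{\top}$ is a.s.\ singular hence cannot be matrix-variate beta) would establish the second-moment pattern for arbitrary index configurations, including those involving more than $q$ distinct rows, which the paper's sub-block argument does not literally cover when $q<4$. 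However, as written your proposal stops at ``substitute the known fourth-order formulas and collect terms''; until those contractions are actually carried out and shown to produce $\tfrac{q}{M(M-1)(M+2)}\bigl[(q(M+1)-2)\delta_{ij}\delta_{k\ell}+(M-q)(\delta_{ik}\delta_{j\ell}+\delta_{i\ell}\delta_{jk})\bigr]$, the argument is a plan rather than a proof, so either complete that computation or adopt the paper's sub-block reduction to finish.
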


To demonstrate Lemma \ref{lem: covariance properties}, we need another proposition which is adapted from Proposition 7.1 in \cite{Random_Orthogonal_Matrices}, presented as follows.

\begin{proposition} \label{Prop: invariate with orthogonal matrix}
    For a random column-wise orthogonal matrix $\mathbf{\Gamma}_1$ defined by (\ref{def: column wise orthogonal matrix}) and fixed orthogonal transformations $f, g$, $f\mathbf{\Gamma}_1g$ and $\mathbf{\Gamma}_1$ share the same distribution.
\end{proposition}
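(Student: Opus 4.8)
The plan is to exploit two facts: the orthogonal invariance of the isotropic Gaussian ensemble, and the observation that the polar-type map $\Phi(\mathbf{M}) \triangleq \mathbf{M}(\mathbf{M}^\top \mathbf{M})^{-1/2}$ defining $\mathbf{\Gamma}_1$ in (\ref{def: column wise orthogonal matrix}) is \emph{equivariant} under left and right orthogonal actions. Writing $\mathbf{\Gamma}_1 = \Phi(\mathbf{X})$ with $\mathbf{X} \sim \mathcal{N}_M(\mathbf{0}, \tfrac{1}{M}\mathbf{I}_M \otimes \mathbf{I}_r)$, and taking $f$ to be $M \times M$ orthogonal and $g$ to be $r \times r$ orthogonal, the goal reduces to showing $f\,\Phi(\mathbf{X})\,g = \Phi(f\mathbf{X}g)$ and then transferring the distributional invariance of $\mathbf{X}$ through $\Phi$.

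First I would record the Gaussian invariance. Since the entries of $\mathbf{X}$ are i.i.d.\ $\mathcal{N}(0, 1/M)$, its law is invariant under $\mathbf{X} \mapsto f\mathbf{X}g$: the vectorized action corresponds to multiplication by the orthogonal matrix $g \otimes f$, under which the covariance $\tfrac{1}{M}\mathbf{I}_M \otimes \mathbf{I}_r$ is fixed, so $f\mathbf{X}g \stackrel{d}{=} \mathbf{X}$. Next I would verify the equivariance of $\Phi$. Setting $\mathbf{X}' = f\mathbf{X}g$ and using $f^\top f = \mathbf{I}_M$, one gets $\mathbf{X}'^\top \mathbf{X}' = g^\top (\mathbf{X}^\top \mathbf{X})\, g$. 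The key algebraic step is that for any symmetric positive definite $\mathbf{A}$ and orthogonal $g$ one has $(g^\top \mathbf{A} g)^{-1/2} = g^\top \mathbf{A}^{-1/2} g$, since conjugation by $g$ merely rotates the eigenbasis of $\mathbf{A}$ while fixing its eigenvalues, and the inverse square root acts through functional calculus on those eigenvalues. Substituting yields $\Phi(\mathbf{X}') = f\mathbf{X}g \cdot g^\top (\mathbf{X}^\top \mathbf{X})^{-1/2} g = f\,\Phi(\mathbf{X})\,g = f\mathbf{\Gamma}_1 g$.

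Finally I would combine the two observations. Because $\Phi$ is a fixed measurable map and $f\mathbf{X}g \stackrel{d}{=} \mathbf{X}$, applying $\Phi$ preserves the distribution, so $f\mathbf{\Gamma}_1 g = \Phi(f\mathbf{X}g) \stackrel{d}{=} \Phi(\mathbf{X}) = \mathbf{\Gamma}_1$, which is the claim.

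The only delicate point, and the one I would state carefully, is the commutation identity $(g^\top \mathbf{A} g)^{-1/2} = g^\top \mathbf{A}^{-1/2} g$; the rest is bookkeeping. I would also note that $\mathbf{X}^\top \mathbf{X}$ is invertible almost surely, so $\Phi$ is well defined on a probability-one event, which is enough to conclude equality in distribution.
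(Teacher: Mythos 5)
Your proof is correct. The paper itself does not prove this proposition---it is stated as an adaptation of Proposition 7.1 in the cited reference on random orthogonal matrices---so there is no in-paper argument to compare against; your route (rotational invariance of the isotropic Gaussian ensemble under $\mathbf{X}\mapsto f\mathbf{X}g$, combined with equivariance of the polar-factor map $\Phi(\mathbf{M})=\mathbf{M}(\mathbf{M}^\top\mathbf{M})^{-1/2}$) is the standard one and is complete. The two points you flag as delicate are handled correctly: the identity $(g^\top\mathbf{A}g)^{-1/2}=g^\top\mathbf{A}^{-1/2}g$ follows from uniqueness of the symmetric positive definite square root (or functional calculus), and almost-sure invertibility of $\mathbf{X}^\top\mathbf{X}$ for $r\leq M$ makes $\Phi$ well defined on a probability-one event, which suffices for equality in distribution. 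One minor presentational point: you should state explicitly at the outset that $f$ is taken to be $M\times M$ orthogonal and $g$ to be $r\times r$ orthogonal, since the proposition's phrase ``fixed orthogonal transformations'' leaves the dimensions implicit; your argument uses $f^\top f=\mathbf{I}_M$ and $gg^\top=\mathbf{I}_r$ in an essential way.
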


Thus, we can move any $p$ rows and $q$ columns to the upper-left block with some fixed permutation matrix and obtain the desired results. Specifically, we can handle any $r$ rows of $\mathbf{U}^{(2)}$, denoted as $\mathbf{Ur} \in  \randommatrix{r \times r}$, and apply Proposition \ref{prop: matrix product to beta} and Theorem \ref{Covariance of Generalized Beta Distribution} to $\mathbf{Ur}$. Then Lemma \ref{lem: covariance properties} is proved.


\begin{figure*}[t]
    \centering
    \includegraphics[width=1.05\textwidth,trim={0 40 0 0},clip]{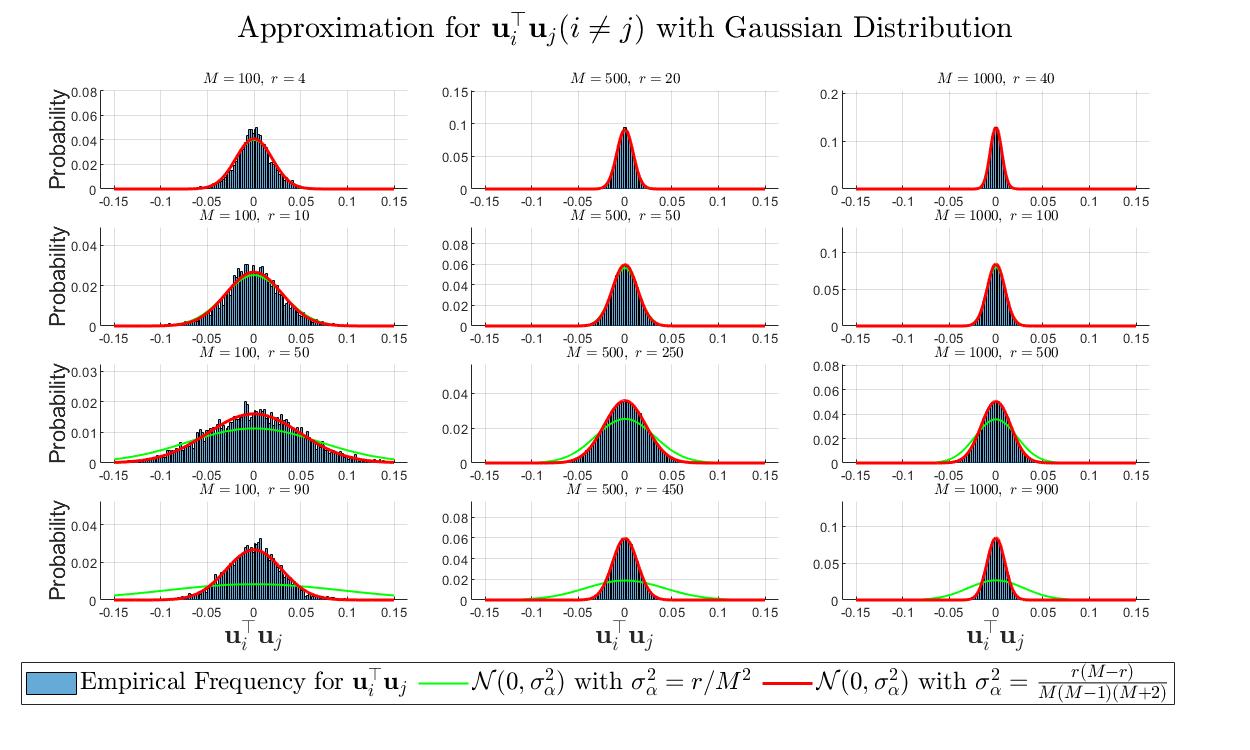}
    \caption{Approximation performance of the Gaussian distribution for $\bm{\alpha} = {\mathbf{U}^{(2)}}\mathbf{u}_j $ under different combinations of $M$ and $r$. The blue histogram represents the empirical frequency of $\mathbf{u}_i^\top \mathbf{u}_j$, which are extracted from all entries in the upper triangular part of one same matrix $\mathbf{U}\mathbf{U}^{\top}$. The green and red curves represent normal distribution fits with $\sigma_{\alpha}^2 = \frac{r}{M^2}$ and $\sigma_{\alpha}^2 = \frac{r(M-r)}{M(M-1)(M+2)}$, respectively. For small $r$, both approximations closely match the empirical distribution. As $r$ increases, the fit with $\sigma_{\alpha}^2 = \frac{r(M-r)}{M(M-1)(M+2)}$ becomes more accurate, highlighting the need for precise $\sigma_{\alpha}^2$ calculation and demonstrating the effectiveness of the Gaussian approximation.}
    \label{fig:orth_v7}
\end{figure*}

\subsection{Approximation of Target Expression}

Let us recall the target expression (\ref{eq:target_expression}):

\begin{equation*}
    Y(j) = \bm{\kappa}^\top\bm{\tau}\cdot\bm{\beta} \sim \mathcal{N}\left(0, \frac{1}{r}\bm{\kappa}^\top \bm{\tau\tau}^\top \bm{\kappa}\right) \quad \text{for } j = 1, \ldots, M
\end{equation*}

In the following discussion, we will not focus on precise derivations but will make reasonable assumptions and provide an approximate analysis, supported by synthetic experiments.

\subsubsection{Cases for $\sigma_\xi^2$:}
We denote $\mathbf{H}=[H_{ij}]_{M\times M}=\mathbf{U} \mathbf{U}^\top$. Let $\bm{\alpha} \triangleq \mathbf{U}^{(2)}\mathbf{u}_1$. By invoking Corollary \ref{corollary:statistic_properties_of_Matrix_Variate_Beta_Distribution} and Lemma \ref{lem: covariance properties}, the covariance of $\bm{\alpha}$ can be readily computed as $\mathrm{Cov}(\bm{\alpha}) = \sigma_{\alpha}^2 \cdot \mathbf{I}_{M_2}$, where
\begin{equation}\label{var: of U*uj}
    \sigma_{\alpha}^2 = \mathrm{Var}\left(\mathbf{u}_i^\top \mathbf{u}_1\right) = \frac{r(M-r)}{M(M-1)(M+2)}, \quad i \geq M_1+1.
\end{equation}
Thus, we treat $\bm{\alpha}$ as a random vector following a Gaussian distribution $\mathcal{N}(\mathbf{0}, \sigma_{\alpha}^2 \mathbf{I}_{M_2})$. In a similar manner, we assume that all off-diagonal elements of $\mathbf{H}$ are i.i.d. and distributed as $\mathcal{N}(0, \sigma_{\alpha}^2)$. As shown in Figure \ref{fig:orth_v7}, the Gaussian approximation with an accurately estimated variance yields strong performance. Consequently, we can confidently approximate $\mathbb{E}(H_{ij}^4)$ as $3\mathbb{E}(H_{ij}^2)^2 = \frac{3r^2(M-r)^2}{M^2(M-1)^2(M+2)^2}$.

Since that:
\begin{align*}
    \bm{\tau}^\top\bm{\kappa}(1) 
    &= 
    {\mathbf{U}^{(2)}}^\top
    \left(\mathbf{I}_{M_2}-\bm{\pi}\right) \mathbf{U}^{(2)}\mathbf{u}_1 \\
    &=
    \sum_{i=M_1+1}^{M}\left(\mathbf{u}_i{\mathbf{u}_i}^\top - \mathbf{u}_i\mathbf{u}_{\varphi(i)}^\top\right) \mathbf{u}_1 \\
    &=
    \sum_{i=M_1+1}^{M}\mathbf{u}_i \left(H_{i1} -  H_{\varphi(i) 1}\right)
\end{align*}

Thus, the expression for $\sigma_\xi^2$ becomes:

\begin{align}\label{var(alpha nu) when j <= n1; Half}
\begin{split}
\sigma_\xi^2 
&= \frac{1}{r} \cdot \bm{\kappa}^\top(1) \bm{\tau\tau}^\top \bm{\kappa}(1) \\
&= \frac{1}{r}\sum_{p=M_1+1}^{M}\sum_{i=M_1+1}^{M} H_{pi}
\left(H_{p1} -  H_{\varphi(p) 1}\right)\left(H_{i1} -  H_{\varphi(i) 1}\right)
\end{split}
\end{align}

By leveraging Corollary \ref{corollary:third-order moments} and relation (\ref{order-3}), we deduce that:

\begin{align}
\begin{split}
\mathbb{E}(\sigma_\xi^2) 
&=
\frac{2}{r} \sum_{i=M_1+1}^M 
\mathbb{E}\left[H_{ii}H_{i1}^2\right] + \frac{1}{r}{\sum\sum}_{p\neq i}\mathbb{E}\left[
H_{1p}H_{pi}H_{i1}
\right]
\\
&=
\frac{1}{r} \sum_{i=M_1+1}^M 
\mathbb{E}
\left[
    \mu_{ii}\mathbb{E}(H_{i1}^2) + \mathbb{E}((H_{ii}- \mu_{ii})H_{i1}^2)
\right] + 
\frac{1}{r}\sum^M_{p=M_1+1}\sum^M_{i=M_1+1}\mathbb{E}\left[
H_{1p}H_{pi}H_{i1}
\right]
\\
&\approx
\frac{1}{r} \sum_{i=M_1+1}^M 
\mathbb{E}
\left[
    \mu_{ii}\mathbb{E}(H_{i1}^2) + \mathbb{E}((H_{ii}- \mu_{ii})H_{i1}^2)
\right] + 
\frac{1}{r}\cdot\left(\frac{M_2}{M}\right)^2\tilde{\mathrm{c}}_3(M, r, M-r) \\
&\leq \frac{M_2}{r}
\cdot
\left[
\frac{r^2(M-r)}{M^2(M-1)(M+2)}+\sqrt{\mathrm{Var}(H_{ii})\mathbb{E}(H_{i1}^4)}
\right] \\
&=
\frac{M_2r(M-r)}{M^2(M-1)(M+2)} + \frac{1}{r}\frac{M_2^2}{M^2}\frac{r}{M} +\frac{M_2}{r}\cdot \sqrt{\frac{2r(M-r)}{M^2(M+2)}\frac{3r^2(M-r)^2}{M^2(M-1)^2(M+2)^2}} \\
&= \frac{M_2r(M - r)}{M^2(M - 1)(M + 2)} + \frac{\sqrt{6}M_2r^{1/2}(M - r)^{3/2}}{M^2(M - 1)(M + 2)^{3/2}}+\frac{M_2^2}{M^3}
\end{split}
\end{align}
where $\mu_{ii} \triangleq \mathbb{E}(H_{ii}) =\frac{r}{M}$.

Nevertheless, our primary objective is to establish a concentration bound. To achieve this, we utilize the Gaussian distribution to derive an approximate upper bound. Let's assume $\bm{\alpha}$ independent of $\bm{\tau}$. Then we introduce a useful lemma:
\begin{lemma} \label{lem: var of product}
    Let \( X \) and \( Y \) be zero-mean random variables such that \( \mathrm{Cov}(X) = \sigma^2 \mathbf{I}_M \) and \( \mathrm{Cov}(Y) = \bm{\Sigma} \), where \( X \) and \( Y \) are independent. Then,
    \begin{align*}
        \mathrm{Var}(X^\top Y) = \sigma^2 \mathrm{tr}(\bm{\Sigma}).
    \end{align*}
\end{lemma}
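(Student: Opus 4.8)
The plan is to reduce $\mathrm{Var}(X^\top Y)$ to a second moment and then exploit independence coordinatewise. First I would observe that $X^\top Y = \sum_{i=1}^M X_i Y_i$ has zero mean: since $X$ and $Y$ are independent and both are centered, $\mathbb{E}[X_i Y_i] = \mathbb{E}[X_i]\,\mathbb{E}[Y_i] = 0$ for every $i$, so $\mathbb{E}[X^\top Y] = 0$. Consequently the variance coincides with the raw second moment, $\mathrm{Var}(X^\top Y) = \mathbb{E}\!\left[(X^\top Y)^2\right]$.

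Next I would expand the square in coordinates and factor each cross term using independence:
\begin{align*}
\mathbb{E}\!\left[(X^\top Y)^2\right]
= \sum_{i=1}^M \sum_{j=1}^M \mathbb{E}[X_i X_j Y_i Y_j]
= \sum_{i=1}^M \sum_{j=1}^M \mathbb{E}[X_i X_j]\,\mathbb{E}[Y_i Y_j].
\end{align*}
Because both vectors are centered, the raw second-moment matrices equal the covariance matrices, so $\mathbb{E}[X_i X_j] = (\mathrm{Cov}(X))_{ij} = \sigma^2 \delta_{ij}$ and $\mathbb{E}[Y_i Y_j] = (\bm{\Sigma})_{ij}$. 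The Kronecker delta collapses the double sum onto its diagonal, yielding $\sum_{i=1}^M \sigma^2 (\bm{\Sigma})_{ii} = \sigma^2 \,\mathrm{tr}(\bm{\Sigma})$, which is exactly the claim. (Equivalently, one may argue without coordinates by writing $(X^\top Y)^2 = \mathrm{tr}(X X^\top Y Y^\top)$, applying the cyclic property of the trace, and then using independence to pass the expectation through the product as $\mathrm{tr}\!\left(\mathbb{E}[X X^\top]\,\mathbb{E}[Y Y^\top]\right) = \mathrm{tr}(\sigma^2 \mathbf{I}_M\,\bm{\Sigma})$.)

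The single point demanding care is the factorization $\mathbb{E}[X_i X_j Y_i Y_j] = \mathbb{E}[X_i X_j]\,\mathbb{E}[Y_i Y_j]$, which uses the full independence of the vectors $X$ and $Y$ rather than mere uncorrelatedness; this is what licenses splitting the fourth-order moment into a product of second-order moments. Beyond that there is no genuine obstacle: the argument is elementary, and the zero-mean hypotheses are precisely what allow the raw second-moment matrices to be identified with $\sigma^2 \mathbf{I}_M$ and $\bm{\Sigma}$. In the intended application $X = \bm{\alpha}$ (so $\sigma^2 = \sigma_\alpha^2$) and $Y$ is the $\bm{\tau}$-dependent factor, and this lemma supplies the variance of their inner product once independence of $\bm{\alpha}$ and $\bm{\tau}$ is assumed, as stated just above.
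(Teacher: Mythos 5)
Your proof is correct and is essentially the same elementary argument as the paper's: both reduce $\mathrm{Var}(X^\top Y)$ to the coordinate expansion $\sum_{i,j}\mathbb{E}[X_iX_j]\,\mathbb{E}[Y_iY_j]$ via independence and then collapse the sum using $\mathrm{Cov}(X)=\sigma^2\mathbf{I}_M$. The paper phrases it as ``variance of a sum plus cross-covariances'' while you compute the raw second moment directly, but the content and the key factorization step are identical.
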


\begin{proof}
    We can express the variance of the product as:
    \begin{align*}
        \mathrm{Var}(X^\top Y) &= \sum_{i = 1}^M \mathrm{Var}(X_i Y_i) + \sum_{i \neq j} \mathrm{Cov}(X_i Y_i, X_j Y_j) \\
        &= \sum_{i = 1}^M \mathrm{Var}(X_i) \mathrm{Var}(Y_i) + \sum_{i \neq j} (\mathbb{E}[X_i X_j Y_i Y_j] - \mathbb{E}[X_i Y_i] \mathbb{E}[X_j Y_j]) \\
        &= \sum_{i = 1}^M \sigma^2 \Sigma_{ii} + 0 \\
        &= \sigma^2 \mathrm{tr}(\bm{\Sigma}).
    \end{align*}
    The second equality holds because:
    \begin{align*}
        \mathrm{Var}(X_i Y_i) &= \mathrm{Var}(X_i) \mathrm{Var}(Y_i) + \mathrm{Var}(X_i)(\mathbb{E}[Y_i])^2  + \mathrm{Var}(Y_i)(\mathbb{E}[X_i])^2,
    \end{align*}
    as long as \( X_i \) and \( Y_i \) are independent.
\end{proof}

Thus, assuming that \( \bm{\alpha} \) is independent of \( \bm{\tau} \), we can compute \( \sigma_\xi^2 \) as follows:
\begin{align}\label{var(alpha nu) when j <= n1; Half}
\begin{split}
\sigma_\xi^2
&= \frac{r(M-r)}{M(M-1)(M+2)} \cdot \mathrm{tr}(\bm{\Sigma}) \\
&= \frac{r(M-r)}{M(M-1)(M+2)} \cdot \frac{1}{r}\sum_{i = M_1+1}^{M}\bm{\tau}_i^\top \bm{\tau}_i \\
&= \frac{2(M-r)}{M(M-1)(M+2)} \cdot \sum_{i = M_1+1}^{M} \left(H_{ii} - H_{i\varphi(i)}\right).
\end{split}
\end{align}

Consequently, we approximate:
\begin{align*}
    -\sum_{i = M_1+1}^{M}H_{i\varphi(i)} \sim \mathcal{N}\left(0, \frac{M_2 r (M-r)}{M(M-1)(M+2)}\right),
\end{align*}
and apply the $3\sigma$ principle.

Moreover, from Corollary \ref{corollary:statistic_properties_of_Matrix_Variate_Beta_Distribution}, we can readily compute:
\begin{align*}
\mathbb{E} \left( \sum_{i = M_1+1}^{M} H_{ii} \right) &= \frac{rM_2}{M}, \\ 
\mathrm{Var} \left( \sum_{i = M_1+1}^{M} H_{ii} \right) &= \frac{2M_2(M-M_2)r(M-r)}{M^2(M+2)(M-1)}.
\end{align*}

Thus, with the mean and variance at hand, we can accurately simulate both the Beta and Gaussian distributions, as shown in Figure \ref{fig:orth_v11}.

\begin{figure*}[htbp]
    \centering
    \includegraphics[width=1.04\textwidth,trim={0 40 0 0},clip]{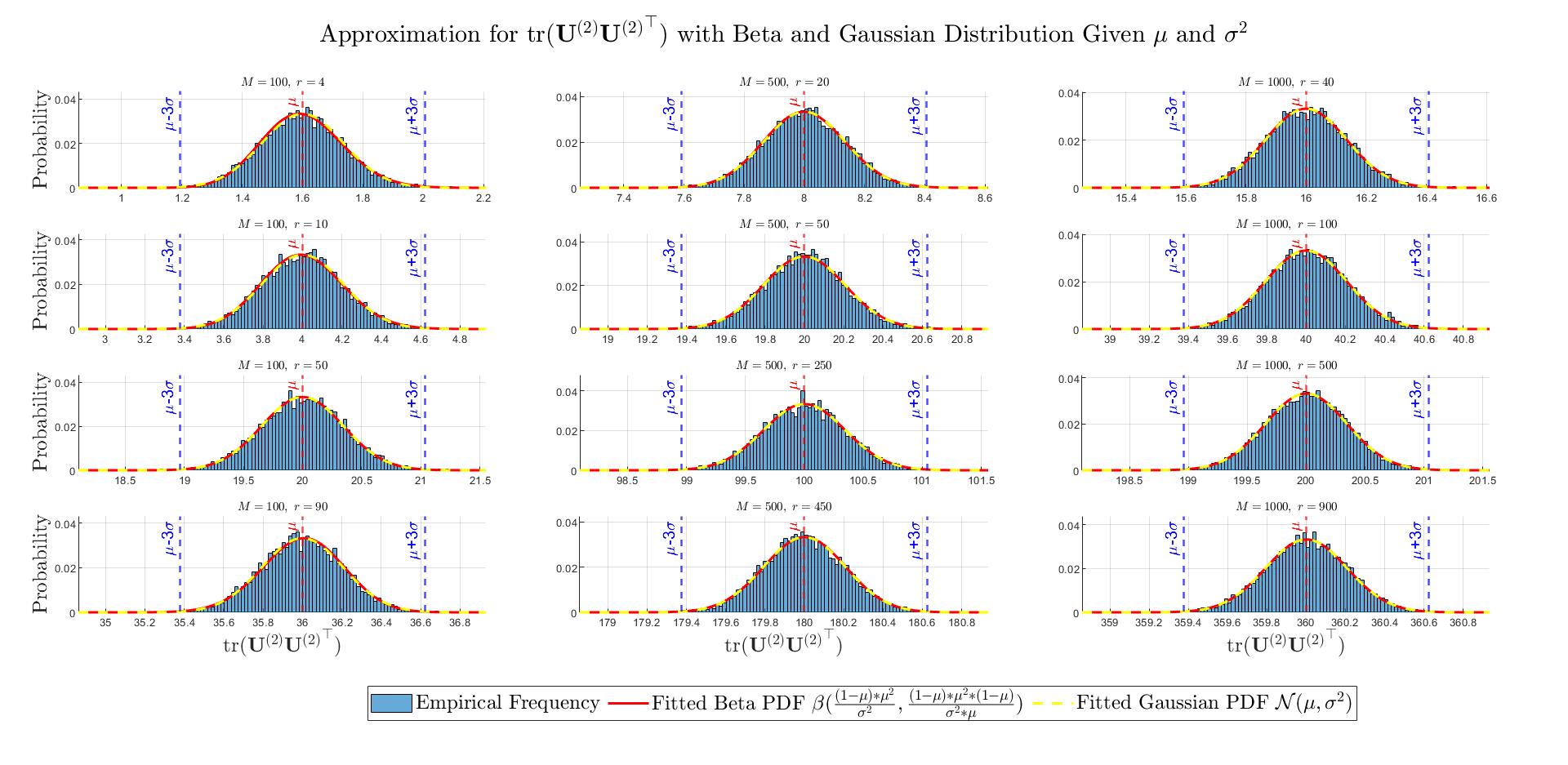}
    \caption{Approximation for $\mathrm{tr}(\mathbf{U}^{(2)}\mathbf{U}^{(2)\top})$ using Beta and Gaussian distributions, given the mean $\mu = \frac{rM_2}{M}$ and variance $\sigma^2=\frac{2M_2(M-M_2)r(M-r)}{M^2(M+2)(M-1)}$. The shuffled ratio \( \frac{M_2}{M} \) is fixed at 0.4 in all cases. These plots show that both Beta and Gaussian distributions provide highly accurate approximations when the mean and variance are properly estimated. The application of the $3\sigma$ principle further validates the accuracy of these approximations, effectively capturing the empirical frequency distribution. This demonstrates the robustness of these statistical models for approximation purposes across a range of parameter settings.}
    \label{fig:orth_v11}
\end{figure*}

Finally, applying the $3\sigma$ rule to the sum \( \sum_{i = M_1+1}^{M} H_{ii} \), we can derive an upper bound for \( \sigma_\xi^2 \) with high probability:
\begin{align} \label{Var(z_j): j<=n1}
    \Pr \left[\sigma_\xi^2 \leq \mathrm{g}(M, M_2, r)\right] \geq 1-\delta, 
\end{align}
where $\delta = 1-0.9973^2 = 0.0054$ is small, and
\begin{align} 
    \begin{split}
        \mathrm{g}(M, M_2, r) &= \frac{2(M-r)}{M(M-1)(M+2)} \left[ \frac{rM_2}{M}  + 3\sqrt{\frac{2M_2(1-M_2/M) r(M-r)}{M(M-1)(M+2)}}   + 3\sqrt{\frac{r(M-r)M_2}{M(M-1)(M+2)}} \right] \\
        & = C \cdot \frac{r(M-r)M_2}{M^2(M-1)(M+2)},
    \end{split} \label{var(alpha nu) j<=n1; Done}
\end{align}
with \( C \leq 2 + 6(1 + \sqrt{2})\sqrt{\frac{M-r}{rM_2M}} \).

\subsubsection{Cases for $\sigma_\eta^2$}
Assuming that $\bm{\alpha} \triangleq \bm{\alpha}(M) = \mathbf{U}^{(2)}\mathbf{u}_M$ is independent of $\bm{\tau}$, and $\bm{\tau}^{(2)} = (\mathbf{I}_{M_2}-\bm{\pi})\mathbf{U}^{(2)}$. In this case, we have 
\begin{align}
\begin{split}
\sigma_\eta^2 & = \frac{1}{r} \bm{\kappa}^\top\bm{\tau}\bm{\tau}^\top\bm{\kappa}\\
& = 
\frac{1}{r}(\bm{\alpha}^\top\bm{\tau}^{(2)}{\bm{\tau}^{(2)}}^\top\bm{\alpha} - 2\bm{\tau}_M^\top {\bm{\tau}^{(2)}}^\top \bm{\alpha} + \bm{\tau}_M^\top\bm{\tau}_M )\\
& \triangleq 
\frac{1}{r}(\mathrm{term}_1 -2\mathrm{term}_2 + \mathrm{term}_3)
\end{split}
\end{align}
It is straightforward to varify that:
\begin{align*}
\mathbb{E}(\mathrm{term}_3) &= \mathbb{E}(H_{MM}+H_{\varphi(M)\varphi(M)}-2H_{M\varphi(M)})\\
& = \frac{2r}{M} 
\\
\mathbb{E}(\mathrm{term}_2) &= \sum_{j=M_1+1}^M \mathbb{E}\left[
\left(H_{Mj}+H_{\varphi(M)\varphi(j)} -H_{\varphi(M)j}-H_{M\varphi(j)}\right)H_{jM}
\right] \\
& = \frac{r(M-r)(M_2-1)}{M(M-1)(M+2)}+\frac{2r(M-r)}{M^2(M+2)}-\frac{2r(M-r)}{M^2(M-1)(M+2)}\\
& = \frac{[M_2M+M-4]r(M-r)}{M^2(M-1)(M+2)}.
\end{align*}

Here we discuss a lower bound of $\mathbb{E}(\mathrm{term_1})$


\begin{theorem}\label{theorem: lower bound of term1}
$\mathbb{E}(\mathrm{term_1})
\geq 
\frac{r}{M_2}
\mathbb{E}
\left[({\mathbf{U}^{(2)}}^\top\mathbf{U}^{(2)})^{\frac{3}{r}}\right]$, where
\begin{align}
\begin{split}
\mathbb{E}
\left[({\mathbf{U}^{(2)}}^\top\mathbf{U}^{(2)})^{\frac{3}{r}}\right] 
&= \frac{\Gamma_r\left(\frac{r}{2} + \frac{3}{r}\right)\Gamma_r\left(\frac{M}{2}\right)}{\Gamma_r\left(\frac{M}{2} + \frac{3}{r}\right)\Gamma_r\left(\frac{r}{2}\right)} \\
\end{split}
\end{align}
\end{theorem}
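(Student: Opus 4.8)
The plan is to reduce $\mathbb{E}(\mathrm{term}_1)$ to a moment of the Gram matrix $\mathbf{W} \triangleq {\mathbf{U}^{(2)}}^\top \mathbf{U}^{(2)} \in \mathbb{R}^{r\times r}$, then apply an eigenvalue (AM--GM) inequality together with the determinant-moment formula of Theorem~\ref{theorem: E(det(H)^t)}. The power $3/r$ and the prefactor $\tfrac{r}{M_2}$ in the target both arise naturally from this route.

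First I would rewrite $\mathrm{term}_1$ as a quadratic form. Since $\bm{\tau}^{(2)} = (\mathbf{I}_{M_2}-\bm{\pi})\mathbf{U}^{(2)}$ and $\bm{\alpha} = \mathbf{U}^{(2)}\mathbf{u}_M$, setting $\mathbf{A} \triangleq {\mathbf{U}^{(2)}}^\top(\mathbf{I}_{M_2}-\bm{\pi})\mathbf{U}^{(2)} = \mathbf{W} - \mathbf{B}$ with $\mathbf{B} \triangleq {\mathbf{U}^{(2)}}^\top\bm{\pi}\mathbf{U}^{(2)}$ gives ${\bm\tau^{(2)}}^\top\bm\alpha = \mathbf{A}^\top\mathbf{u}_M$, hence $\mathrm{term}_1 = \|\mathbf{A}^\top\mathbf{u}_M\|_2^2 = \mathbf{u}_M^\top\mathbf{A}\mathbf{A}^\top\mathbf{u}_M \geq 0$. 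Expanding $\mathbf{A}\mathbf{A}^\top = \mathbf{W}^2 - (\mathbf{W}\mathbf{B}^\top + \mathbf{B}\mathbf{W}) + \mathbf{B}\mathbf{B}^\top$ and averaging over the random derangement $\bm\pi$, I would show (using $\mathbb{E}[\bm\pi]$ and $\mathbb{E}[\bm\pi\mathbf{P}\bm\pi^\top]$, with $\mathbf{P}\triangleq\mathbf{U}^{(2)}{\mathbf{U}^{(2)}}^\top$) that the permutation-dependent contributions produce one further nonnegative term $\tfrac{\mathrm{tr}(\mathbf{W})}{M_2}\,\mathbf{u}_M^\top\mathbf{W}\mathbf{u}_M$ plus $O(1/M_2)$ rank-one cross terms; discarding these yields the lower bound $\mathbb{E}(\mathrm{term}_1) \ge \mathbb{E}[\mathbf{u}_M^\top\mathbf{W}^2\mathbf{u}_M]$.

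The next step is the key identity. Because $\mathbf{u}_M$ is the last row of $\mathbf{U}$, one has $\mathbf{U}^{(2)}\mathbf{u}_M = \mathbf{P}\mathbf{e}_{M_2}$, and since $\mathbf{W}^2 = {\mathbf{U}^{(2)}}^\top\mathbf{P}\mathbf{U}^{(2)}$ this gives $\mathbf{u}_M^\top\mathbf{W}^2\mathbf{u}_M = \bm\alpha^\top\mathbf{P}\bm\alpha = [\mathbf{P}^3]_{M_2 M_2}$. The rows of $\mathbf{U}^{(2)}$ are exchangeable (Proposition~\ref{Prop: invariate with orthogonal matrix}, with $f$ a permutation within the $\mathcal{O}$-block), so all diagonal entries of $\mathbf{P}^3$ are identically distributed, whence $\mathbb{E}[\mathbf{u}_M^\top\mathbf{W}^2\mathbf{u}_M] = \tfrac{1}{M_2}\mathbb{E}[\mathrm{tr}(\mathbf{P}^3)] = \tfrac{1}{M_2}\mathbb{E}[\mathrm{tr}(\mathbf{W}^3)]$, using $\mathrm{tr}(\mathbf{P}^3)=\mathrm{tr}(\mathbf{W}^3)$ by cyclicity. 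I would then apply AM--GM to the nonnegative eigenvalues $\lambda_1,\dots,\lambda_r$ of $\mathbf{W}$, namely $\mathrm{tr}(\mathbf{W}^3)=\sum_i\lambda_i^3 \ge r\big(\textstyle\prod_i\lambda_i\big)^{3/r} = r\,\det(\mathbf{W})^{3/r}$, which holds pointwise and hence in expectation, giving $\mathbb{E}(\mathrm{term}_1) \ge \tfrac{r}{M_2}\mathbb{E}[\det(\mathbf{W})^{3/r}]$. Finally, since $\mathbf{W}={\mathbf{U}^{(2)}}^\top\mathbf{U}^{(2)}$ follows a matrix-variate Beta law (Proposition~\ref{prop: matrix product to beta}), Theorem~\ref{theorem: E(det(H)^t)} with $p=r$ and $k=3/r$ evaluates $\mathbb{E}[\det(\mathbf{W})^{3/r}]$ as the stated ratio of multivariate Gamma functions.

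The main obstacle is the lower-bounding step over $\bm\pi$. Strictly, $\|\mathbf{A}^\top\mathbf{u}_M\|_2^2 = \|\mathbf{W}\mathbf{u}_M - \mathbf{B}^\top\mathbf{u}_M\|_2^2$ carries a cross term $-2\,\mathbf{u}_M^\top\mathbf{W}\mathbf{B}^\top\mathbf{u}_M$ whose sign is not fixed, so retaining only $\mathbf{u}_M^\top\mathbf{W}^2\mathbf{u}_M$ is justified only approximately; one must argue this term is dominated by the nonnegative $\mathbf{B}\mathbf{B}^\top$ and $\tfrac{\mathrm{tr}(\mathbf{W})}{M_2}\mathbf{W}$ contributions, for instance through concentration of $\|{\mathbf{U}^{(2)}}^\top\mathbf{1}\|$. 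This matches the paper's declared approximate analysis; upgrading it to a rigorous inequality would require a concentration bound on the permutation averages $\mathbb{E}_{\bm\pi}[\bm\pi\mathbf{P}\bm\pi^\top]$.
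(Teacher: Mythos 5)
Your proposal is correct and follows essentially the same route as the paper: drop the permutation cross terms in expectation, reduce $\mathbb{E}[\mathbf{u}_M^\top({\mathbf{U}^{(2)}}^\top\mathbf{U}^{(2)})^2\mathbf{u}_M]$ to $\frac{1}{M_2}\mathbb{E}[\mathrm{Tr}(({\mathbf{U}^{(2)}}^\top\mathbf{U}^{(2)})^3)]$ by row exchangeability, apply AM--GM to the eigenvalues to get $r\det(\cdot)^{3/r}$, and evaluate the determinant moment via Theorem~\ref{theorem: E(det(H)^t)}. The ``main obstacle'' you flag is handled in the paper by taking the expectation over the random basis $\mathbf{U}$ rather than over $\bm{\pi}$: by the sign-flip symmetry in Corollary~\ref{corollary:third-order moments}, the chain-broken triple products such as $\mathbb{E}(H_{Mi}H_{i\varphi(j)}H_{jM})$ vanish exactly (since $\varphi(j)\neq j$), and the surviving permutation term $\sum_i H_{Mi}^2 H_{\varphi(i)\varphi(i)}$ is nonnegative, so no concentration argument over $\bm{\pi}$ is needed.
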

\begin{proof}
\begin{align*}
\mathrm{term_1}
&= \bm{\alpha}^\top\bm{\tau}^{(2)}{\bm{\tau}^{(2)}}^\top\bm{\alpha} \\
&= \sum_{i=M_1+1}^{M}\sum_{j=M_1+1}^{M}
\left[
H_{Mi}H_{ij}H_{jM}+H_{Mi}H_{\varphi(i)\varphi(j)}H_{jM} -H_{Mi}H_{i\varphi(j)}H_{jM}-H_{Mi}H_{\varphi(i)j}H_{jM}
\right]
\end{align*}
which, by Corollary \ref{corollary:third-order moments}, leads to 
\begin{align*}
\mathbb{E}(\mathrm{term_1})
&= \bm{\alpha}^\top\bm{\tau}^{(2)}{\bm{\tau}^{(2)}}^\top\bm{\alpha} \\
&= \mathbb{E}\left[
\sum_{i=M_1+1}^{M}\sum_{j=M_1+1}^{M} H_{Mi}H_{ij}H_{jM} + \sum_{i=M_1+1}^MH_{Mi}^2H_{ii}
\right]\\
&\left(\approx\left(\frac{M_2}{M}\right)^2\tilde{\mathrm{c}}_3(r,M-r,M)+\mathbb{E}\left( \sum_{i=M_1+1}^MH_{Mi}^2H_{ii}\right)\right)
\\
&\geq \mathbb{E}\left[
\sum_{i=M_1+1}^{M}\sum_{j=M_1+1}^{M} \mathbf{u}_M^\top\mathbf{u}_i\mathbf{u}_i^\top\mathbf{u}_j\mathbf{u}_j^\top\mathbf{u}_M
\right]\\
&= \mathbb{E}\left[
    \mathrm{Tr}\left(
    ({\mathbf{U}^{(2)}}^\top\mathbf{U}^{(2)})^2\mathbf{u}_M\mathbf{u}_M^\top
    \right)
\right]\\
&= \frac{1}{M_2}
\mathbb{E}\left[
    \mathrm{Tr}\left(
    ({\mathbf{U}^{(2)}}^\top\mathbf{U}^{(2)})^3
    \right)
\right].
\end{align*}
Clearly ${\mathbf{U}^{(2)}}^\top\mathbf{U}^{(2)} \sim \mathcal{B}_r(\frac{r}{2}, \frac{M-r}{2})$. Denote its eigenvalues as $\lambda_1, \ldots, \lambda_r \geq 0$, we have
\begin{align*}
\mathrm{Tr}\left[
\left({\mathbf{U}^{(2)}}^\top\mathbf{U}^{(2)}\right)^3
\right]
&=\sum_{k=1}^r \lambda_k^3 \\
&\geq r\cdot \left(\lambda_1\lambda_2...\lambda_r\right)^\frac{3}{r} \\
&= r\cdot \left[
\mathrm{det}\left({\mathbf{U}^{(2)}}^\top\mathbf{U}^{(2)}\right)
\right]^\frac{3}{r}
\end{align*}
Thus, by leveraging Theorem \ref{theorem: E(det(H)^t)}, we have the lowerbound established.
\end{proof}

Hence, we can provide a lowerbound estimation of $\mathbb{E}(\sigma_\eta^2)$ as follows
\begin{align}
\mathbb{E}(\sigma_\eta^2)
\geq
\frac{2}{M}
-
2\frac{[M_2M+M-4](M-r)}{M^2(M-1)(M+2)}
+
\frac{
\Gamma_r\left(\frac{r}{2}+\frac{3}{r}\right)\Gamma_r\left(\frac{M}{2}\right)
}{
M_2\Gamma_r\left(\frac{M}{2} + \frac{3}{r}\right)\Gamma_r\left(\frac{r}{2}\right)
}
\end{align}

However, the multivariate gamma function $\Gamma_r(\cdot)$ may lead to computational issues due to its tendency to grow excessively large. To address this, we approximate a lower bound by assuming independence between $\bm{\tau}$ and $\bm{\kappa}$.

Specifically, we have:
\begin{align}
\begin{split}
\mathbb{E}(\sigma_\eta^2)
&=
\frac{1}{r}\mathbb{E}(\bm{\kappa}^\top \bm{\tau} \bm{\tau}^\top \bm{\kappa})\\
&\approx
\frac{1}{r}\mathbb{E}(\bm{\kappa})^\top \cdot \mathbb{E}_\tau(\bm{\tau}\bm{\tau}^\top) \cdot \mathbb{E}(\bm{\kappa}) + \frac{1}{r}\mathbb{E}_\tau\left[\mathrm{Tr}\left(\bm{\tau}\bm{\tau}^\top \cdot \mathrm{Cov}(\bm{\kappa})\right)\right]\\
&= \frac{(M-r)(M_2-1)}{M(M-1)(M+2)}\frac{\mathbb{E}_\tau(\|\bm{\tau}_{M-1}\|_2^2)}{r} + \left(\left(\frac{r}{M}-1\right)^2+\frac{2r(M-r)}{M^2(M+2)}\right)\frac{\mathbb{E}_\tau(\|\bm{\tau}_M\|_2^2)}{r} \\
&=\frac{2}{M}\left[
\frac{r(M-r)(M_2-1)}{M(M-1)(M+2)} + \left(\frac{r}{M}-1\right)^2 +\frac{2r(M-r)}{M^2(M+2)}
\right]
\end{split}
\end{align}

To mitigate the error introduced by ignoring the dependency between $\bm{\kappa}$ and $\bm{\tau}$, we empirically introduce a factor $\frac{M-M_2}{2M}$, yielding the approximation:
\begin{align}
\begin{split}
\mathbb{E}(\sigma_\eta^2)
& \approx
\frac{M-M_2}{M^2}\left[
\frac{r(M-r)(M_2-1)}{M(M-1)(M+2)} + \left(\frac{r}{M}-1\right)^2 +\frac{2r(M-r)}{M^2(M+2)}
\right]
\end{split}
\end{align}

\subsection{Theory Summary}

To sum up, we have
\begin{align}
\begin{cases}
\Pr\left(
    \sigma_\xi^2 < C\left(\frac{r(M-r)M_2}{M^2(M-1)(M+2)}\right)
    \right) 
> 
1-\delta
\\
\mathbb{E}(\sigma_\xi^2)
\leq
\frac{M_2r(M - r)}{M^2(M - 1)(M + 2)} + \frac{\sqrt{6}M_2r^{1/2}(M - r)^{3/2}}{M^2(M - 1)(M + 2)^{3/2}}+\frac{M_2^2}{M^3}
\\
\mathbb{E}(\sigma_\eta^2)
\geq
\frac{2}{M}
-
\frac{[2M_2M+M-4](M-r)}{M^2(M-1)(M+2)}
+
\frac{
\Gamma_r\left(\frac{r}{2}+\frac{3}{r}\right)\Gamma_r\left(\frac{M}{2}\right)
}{
M_2\Gamma_r\left(\frac{M}{2} + \frac{3}{r}\right)\Gamma_r\left(\frac{r}{2}\right)
}
\\
\mathbb{E}(\sigma_\eta^2)
 \approx
\frac{M-M_2}{M^2}\left[
\frac{r(M-r)(M_2-1)}{M(M-1)(M+2)}+\left(\frac{r}{M}-1\right)^2 +\frac{2r(M-r)}{M^2(M+2)}
\right]
\end{cases}
\label{eq:final variance of target expression}
\end{align}
where $C \leq 2+6(1+\sqrt{2})\sqrt{\frac{M-r}{rM_2M}}$, and 
\begin{align*}
    \Gamma_p(x) = \pi^{\frac{p(p-1)}{4}} \prod_{i=1}^p \Gamma\left(x - \frac{i-1}{2}\right).
\end{align*}
This matches the synthetic result shown in figure \ref{fig:orth_v9}. We can also see the robust effect from the modeled probability (\ref{Final Probability}) in Figure \ref{fig:orth_v15}

\begin{figure*}[t!]
    \centering
    \includegraphics[width=1.05\textwidth,trim={0 10 0 0},clip]{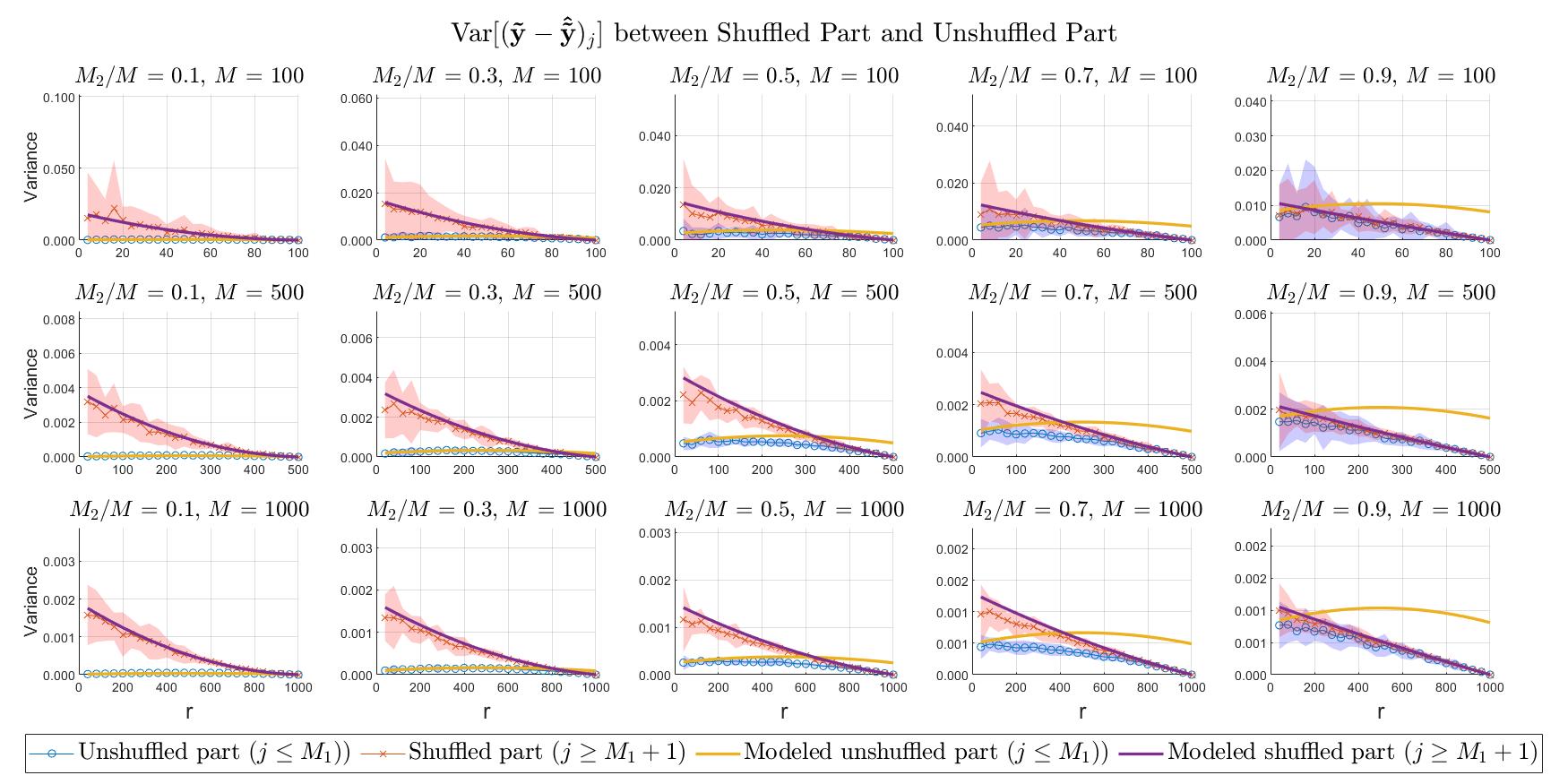}
    \caption{Comparison of $\mathrm{Var}\left[\left(\mathbf{\tilde{y}-\hat{\tilde{y}}}\right)_j\right]$ between Shuffled and Unshuffled Parts for Various $M$ and $M_2/M$ Ratios. Each subplot represents the variance of the difference $\left(\mathbf{\tilde{y}-\hat{\tilde{y}}}\right)_j$ for both the shuffled (orange crosses) and unshuffled (blue circles) parts across different values of $r$. The results consistently show that the variance is higher in the shuffled part compared to the unshuffled part, particularly as $r$ increases. The variance and its changing rate of the shuffled part remain significantly non-zero even when $r$ approaches one. This observation aligns with \eqref{eq:final variance of target expression}. Additionally, as the $M_2/M$ ratio increases, the variance of the unshuffled part gradually approaches that of the shuffled part, also consistent with the theoretical results. This highlights the effect of shuffling on the variance and underscores the importance of considering the shuffled indices group $\{ j \in \mathbb{Z}^{+}: M_1+1 \leq j \leq M \}$.}
\label{fig:orth_v9}
\end{figure*}

\begin{figure}[h!]
\centering
\includegraphics[width=0.8\textwidth,trim={0 40 0 0},clip]{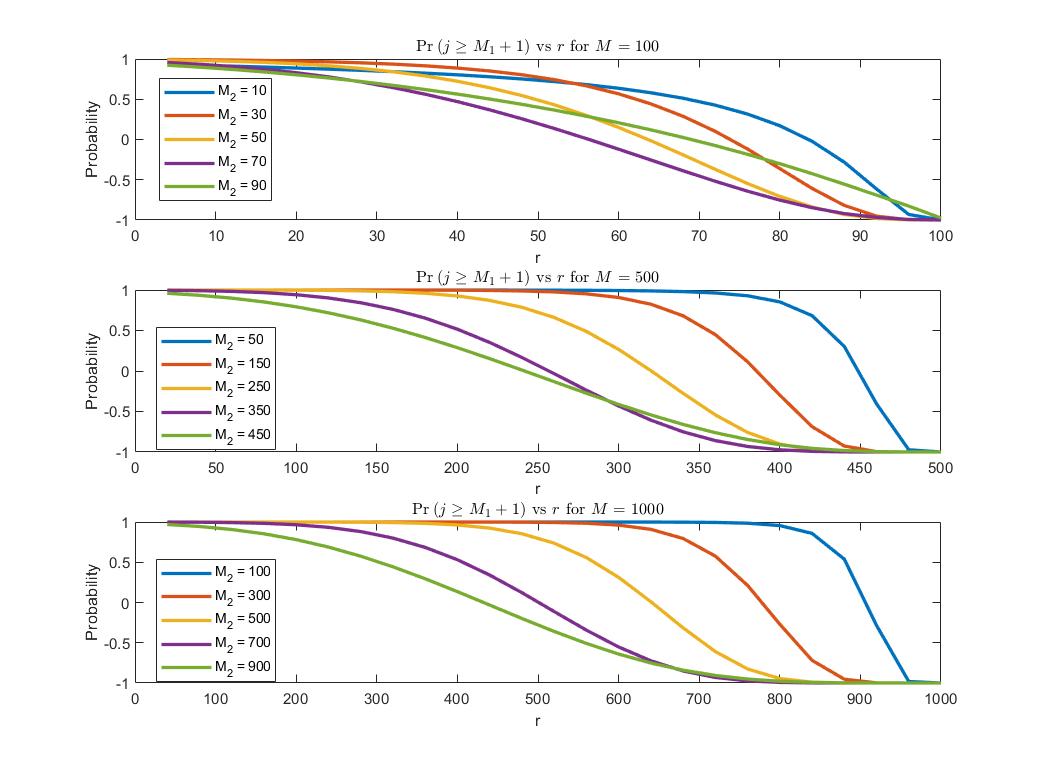}
\caption{
The modeled probability derived from (\ref{Final Probability}) demonstrates that the method maintains its effectiveness in low-rank and low-shuffled-ratio conditions. This observation aligns closely with our experimental findings, reinforcing the robustness of the proposed approach in scenarios where data complexity is constrained.
}
\label{fig:orth_v15}
\end{figure}

Throughout the above analyses, we have made several approximated assumptions regarding independence. These assumptions simplify the calculation of the variance of the target expression \eqref{eq:target_expression}, but they do not affect the accuracy of the results because our focus is on the scale, as shown in \eqref{eq:final variance of target expression}, rather than the exact values. Specifically, as the Cauchy-Schwarz Inequality states,
\[
\mathrm{Cov}\left(X, Y\right) \leq \sqrt{\mathrm{Var}\left(X\right)\mathrm{Var}\left(Y\right)},
\]
we understand that ignoring dependencies does not alter the magnitude order of the variance.

In summary, we have developed an approximate theoretical framework for analyzing the outlier classification problem using least squares regression. By introducing the permutation matrix and leveraging the properties of the matrix variate beta distribution, we derived expressions for the residuals of each entry and examined their statistical properties under various scenarios. Our findings suggest that the variance in the shuffled part is typically higher than in the unshuffled part, especially in scenarios where the dataset is partially shuffled with a low shuffled ratio.


Consequently, we have shown that, with high probability, the maximum absolute residual is likely to occur in the shuffled part in low rank low shuffled ratio scenarios. This finding suggests that the elimination process will make the remaining target vector more aligned with the column space of the remaining basis matrix. However, the stopping criteria in Algorithm \ref{alg:solve_partial_lsr}, defined by the retaining ratio $\gamma$, are robust but critical: stopping too late may cause the target vector to approach multiple candidate subspaces, while stopping too early may result in insufficient distinguishability, thereby hindering correct classification.

Moreover, our synthetic experiments validate the effectiveness of our approximation methods, demonstrating the robustness of using statistical distributions for approximation purposes across various parameter settings. Overall, our theoretical analysis and experimental results provide a solid foundation for both our outlier classification algorithm and the LSFR method proposed by \cite{upca}.


\end{document}